\documentclass{article}

\usepackage{microtype}
\usepackage{graphicx}
\usepackage{subcaption}
\usepackage{booktabs}

\usepackage{hyperref}

\PassOptionsToPackage{table,dvipsnames}{xcolor}
\PassOptionsToPackage{noend}{algorithmic}
\usepackage[accepted]{icml2026}

\makeatletter
\def\paragraph{\@startsection{paragraph}{4}{\z@}{0.6ex}{-0.8em}{\normalsize\bfseries}}
\def\subparagraph{\@startsection{subparagraph}{5}{\z@}{0.6ex}{-0.8em}{\normalsize\bfseries}}
\makeatother

\usepackage{amsmath,amsfonts,bm}

\newcommand{\concat}{\mathbin{\Vert}}
\newcommand{\Push}{\mathsf{Push}}

\def\eqref#1{equation~\ref{#1}}

\def\1{\bm{1}}

\DeclareMathAlphabet{\mathsfit}{\encodingdefault}{\sfdefault}{m}{sl}
\SetMathAlphabet{\mathsfit}{bold}{\encodingdefault}{\sfdefault}{bx}{n}

\usepackage{url}
\usepackage{annotate-equations}
\usepackage[skins,listings,most]{tcolorbox}

\usepackage{amsfonts}
\usepackage{nicefrac}
\usepackage{multirow}
\usepackage{threeparttable}
\usepackage{makecell}
\usepackage{colortbl}
\usepackage{amsmath,amscd,amsbsy,amsfonts,latexsym,url,bm,amsthm,amssymb,dsfont}
\usepackage{wrapfig}
\usepackage{caption}
\usepackage{fvextra}
\usepackage{csquotes}
\usepackage{enumitem}

\usepackage{cleveref}
\crefname{section}{$\mathsection$}{$\mathsection\mathsection$}
\Crefname{section}{$\mathsection$}{$\mathsection\mathsection$}

\usepackage{listings}
\usepackage[T1]{fontenc}
\usepackage{mdframed}
\usepackage[colorinlistoftodos,prependcaption]{todonotes}
\usepackage{xargs}
\usepackage{BOONDOX-uprscr}
\usepackage{thmtools}
\usepackage{lipsum}    
\usepackage{float}
\usepackage{flafter}

\newcommand{\RETURN}{\STATE \textbf{return} }

\renewcommand{\algorithmicrequire}{\textbf{Input:}}

\crefname{section}{$\mathsection$}{$\mathsection\mathsection$}
\Crefname{section}{$\mathsection$}{$\mathsection\mathsection$}

\newtheoremstyle{compact}
  {3pt}
  {0pt}
  {\itshape}
  {}
  {\bfseries}
  {.}
  {.5em}
  {}

\theoremstyle{compact}
\newtheorem{theorem}{Theorem}

\newtheorem{lemma}[theorem]{Lemma}

\newtheoremstyle{compactdefinition}
  {3pt}
  {3pt}
  {\normalfont}
  {}
  {\bfseries}
  {.}
  {.5em}
  {}

\theoremstyle{compactdefinition}
\newtheorem{definition}{Definition}

\newcommand{\red}[1]{\textcolor{BrickRed}{#1}}
\newcommand{\blue}[1]{\textcolor{NavyBlue}{#1}}

\newtcolorbox{learningbox}{
   enhanced,
   colback=Gray!3!white,
   colframe=Gray,
   arc=4mm,
   boxrule=1pt,
   left=4pt,
   right=4pt,
   top=4pt,
   bottom=4pt,
   drop shadow={opacity=0.15, xshift=1.5pt, yshift=-1.5pt},
   fonttitle=\bfseries,
   coltitle=Gray,
   before skip=8pt,
   after skip=15pt
}

\newcommand{\fcot}{f^{\mathrm{cot}}}
\newcommand{\frm}{f^{\mathrm{rm}}}
\newcommand{\Step}{\mathsf{Step}}
\newcommand{\RM}{\mathsf{RM}}
\newcommand{\CALL}{\langle\texttt{call}\rangle}
\newcommand{\UNCALL}{\langle/\texttt{call}\rangle}
\newcommand{\ANSWER}{\langle\texttt{return}\rangle}
\newcommand{\UNANSWER}{\langle/\texttt{return}\rangle}
\newcommand{\callblock}[1]{\CALL#1\UNCALL}
\newcommand{\returnblock}[1]{\ANSWER#1\UNANSWER}

\newcommand{\softO}{\widetilde{\mathcal{O}}}

\begin{document}

\twocolumn[
  \icmltitle{Recursive Models for Long-Horizon Reasoning}

  \icmlsetsymbol{equal}{*}

  \begin{icmlauthorlist}
    \icmlauthor{Chenxiao Yang}{ttic}
    \icmlauthor{Nathan Srebro}{ttic}
    \icmlauthor{Zhiyuan Li}{ttic}
  \end{icmlauthorlist}

  \icmlaffiliation{ttic}{Toyota Technological Institute at Chicago}

  \icmlcorrespondingauthor{Chenxiao Yang}{chenxiao@ttic.edu}

  \icmlkeywords{Machine Learning, ICML}

  \vskip 0.3in
]

\printAffiliationsAndNotice{}

\begin{abstract}
Modern language models reason within bounded context, an inherent constraint that poses a fundamental barrier to long-horizon reasoning. We identify recursion as a core principle for overcoming this barrier, and propose recursive models as a minimal realization, where the model can recursively invoke itself to solve subtasks in isolated contexts. We prove that any computable problem admits a recursive decomposition of reasoning in which each subtask requires only exponentially smaller active context than standard autoregressive models; this strictly surpasses any context management approach confined to a single sequence, such as summarization. We further generalize our framework to modern agentic systems with arbitrary context processing and control flows, and prove that recursive models can achieve optimal power within this broader class. Experimentally, we test two settings: fine-tuning a pretrained base model for recursive SAT solving, and training a small model from scratch on Go traces generated by exact game-tree search. Both show improved long-horizon accuracy with small active contexts.
\end{abstract}

\section{Introduction}

\begin{figure*}[t]
\centering

\begin{subfigure}[b]{\textwidth}
  \centering
  \includegraphics[width=0.95\textwidth]{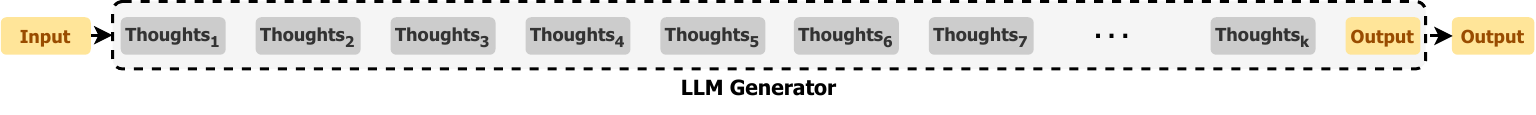}
  \vspace{-0.5em}
  \caption{\textbf{Standard Autoregressive Model.} The model generates tokens sequentially, appending each to the current sequence until the context limit is reached.}
  \label{fig:cot}
\end{subfigure}

\vspace{-0.5em}

\begin{subfigure}[b]{\textwidth}
  \centering
  \includegraphics[width=0.95\textwidth]{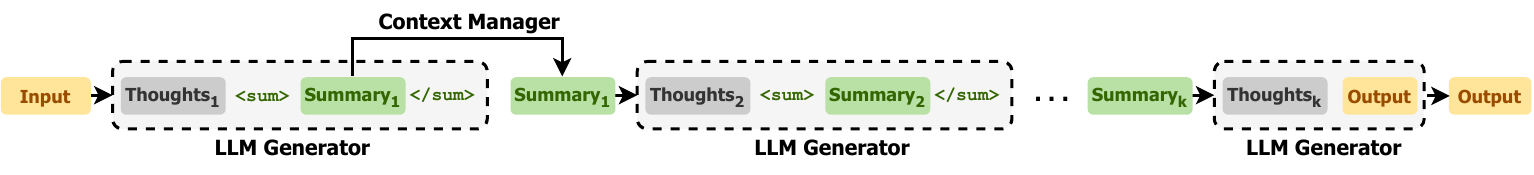}
  \vspace{-0.5em}
  \caption{\textbf{Single-Context Model.} The entire generation process operates within a single sequence. As a representative example, summarization periodically compresses past reasoning into a compact summary and discards the original tokens.}
  \label{fig:single-context}
\end{subfigure}

\vspace{0.5em}

\begin{subfigure}[b]{\textwidth}
  \centering

 \includegraphics[width=\textwidth]{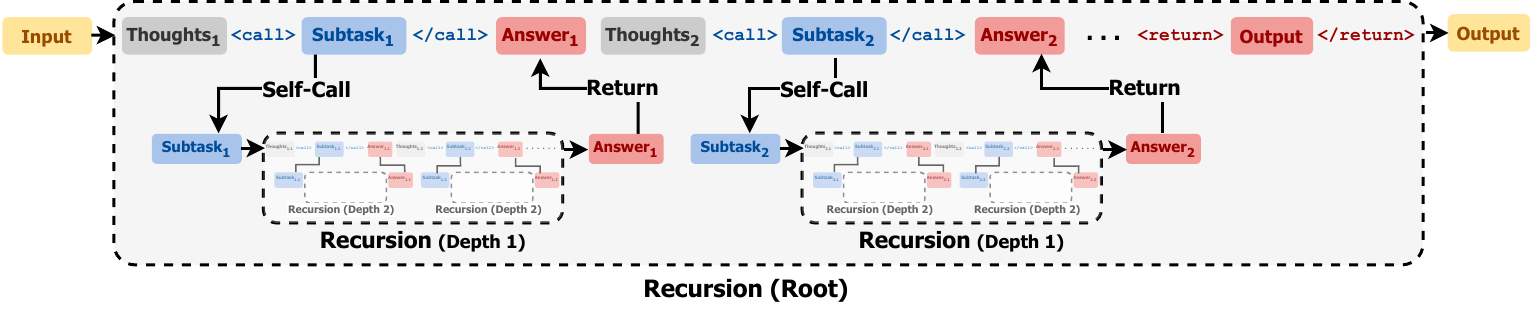}
 \vspace{-1.5em} 
  \caption{\textbf{Recursive Model.} Unlike the previous two approaches, computation spans multiple isolated contexts. The model delegates subtasks via \texttt{call}, each solved in a fresh context; \texttt{return} passes back only the result, discarding intermediate reasoning. This enables unbounded recursion depth without growing any single context.}
  \label{fig:multi-context}
\end{subfigure}

\caption{Overview of different context management strategies.\vspace{-10pt}}
\label{fig:overview}
\end{figure*}

Modern language models exhibit remarkable general problem solving power~\citep{radford2018improving,radford2019language,brown2020language,achiam2023gpt}. Through extended thinking~\citep{wei2022chain,openai2024reasoning,guo2025deepseek} and agentic systems~\citep{yao2023react,shinn2023reflexion,park2023generative}, they can handle increasingly complex tasks across diverse domains. Nevertheless, these systems are subject to a physical constraint: at every step, the model can only attend to bounded-sized context window, strictly limiting what can be computed in a single forward pass.

This has driven growing interest in effective context management.
For instance, summarization compresses lengthy reasoning traces into compact states, discarding no longer needed history to free up space~\citep{yang2025pencil,yu2025memagent,zhou2025mem1,yan2025memory}; memory-augmented approaches write and retrieve relevant information in external storage~\citep{packer2024memgptllmsoperatingsystems,chhikara2025mem0buildingproductionreadyai,suzgun2025dynamic,xu2025mem}; and in agentic systems, subtasks are distributed across agents, each operating in its own context while collaborating toward a shared goal~\citep{hong2024metagpt,wu2023autogen,li2023camel}.

Yet questions remain: how do these different systems formally compare in their reasoning power? What core mechanisms, as scaffolding that wraps around the base generator, can enable models to handle long-horizon tasks that are otherwise impossible because of context constraints? And are these mechanisms optimal? Despite the importance of these questions, existing work lacks a formalization for these questions to be answered systematically. Notable related works are \citet{yang2025pencil,yang2025diffusion}, which, however, focus on summarization-based context management and self-correction in diffusion language models respectively.

In this work, we identify recursion as a core principle for overcoming context constraints, and a form of computational power naturally enabled by modern agentic systems. In a broad sense, recursion refers to the application of a finite, static set of rules to a target problem, that dynamically produces a potentially infinite depth of behaviors that, though contextually isolated from each other, contribute to the final solution.

We propose the simplest realization of this principle, which we call recursive model. It consists of a single base LLM as the generator, equipped with two minimal tools, \texttt{call} and \texttt{return}. As illustrated in \Cref{fig:multi-context}, the model can invoke itself: \texttt{call} creates an isolated context and the model solves the subtask there independently; upon completion, \texttt{return} discards the intermediate reasoning and passes only the final answer back to the parent context. 
Since each invoked model can itself invoke further calls, this enables a deep context stack while keeping each individual context bounded by the maximal context length.
Similar ideas have been explored in earlier and concurrent work~\citep{lee2023recursion,prasad2024adapt,schroeder2025thread,pan2025learning,zhang2025recap,sun2025scalinglonghorizonllmagent,zhang2025recursive}; see a comprehensive discussion in \Cref{app:related-work}.

One important observation is that the recursive model naturally induces a separation between local and global space: the generator only needs to attend to the active context, while inactive contexts in the context stack can be offloaded to external storage and restored upon return. 
While this improves space efficiency, it seems to impose a strong requirement that problems must admit modular decompositions. Do general computational problems possess such structure? We show the answer is affirmative: any computable problem inherently admits a recursive decomposition, and furthermore, by doing so, the required context can be reduced exponentially.
Specifically, we prove that with local space $S(n)$, recursive models can solve any problem requiring up to $\exp(\mathcal{O}(S(n)))$ computation time. In comparison, standard autoregressive models would require context length $\exp(\mathcal{O}(S(n)))$ to solve the same problems, which is an exponential gap.

Recursion, however, is not the only approach for context management. Consider summarization (\Cref{fig:single-context}), which periodically compresses the context and discards old history to keep the context window bounded. Unlike recursion, summarization and indeed most existing strategies keep the entire generation process within a single sequence. We call these single-context models. Prior work~\citep{yang2025pencil} shows that with context length $S(n)$, summarization can solve all problems requiring $S(n)$ space. We prove that this is in fact optimal: no single-context model, regardless of its context management strategy, can surpass summarization, which is, however, still strictly less powerful than recursion. Indeed, we show that even constant-depth recursion (i.e., depth 1) suffices to match the optimum of all single-context models. Moreover, deeper recursion breaks through this ceiling, solving problems beyond what any single-context approach can reach. This separates the power of recursive models from those shallow counterparts~\citep{sun2025scalinglonghorizonllmagent,zhang2025recursive}.

Modern agentic systems are unique in that they are no longer confined to a single context: they can dynamically spawn contextually isolated sub-agents to solve specialized subtasks independently, and the responses are integrated back, processed, and used to determine the system's next behavior. This unique feature enables recursion in broader use cases. While not all agentic systems possess this capability, we formalize a powerful family called recursive agentic systems, which equip agentic systems with scaffoldings that create a recursive control loop. The recursive model is the minimal realization of this family. We show that any agentic system that is recursive can reach the same power as recursive models, enabling them to break through context constraints far beyond standard approaches. Yet, none can surpass recursive models, suggesting that the recursive model, despite its simplicity, is already optimally powerful within this family.

Experimentally, we evaluate recursive models in two settings. On SAT, we fine-tune a pretrained base model on recursive backtracking traces and compare against strong prompted LLM baselines. On $4\times4$ Go game-tree evaluation, we train a small decoder-only model from scratch on traces generated by an exact solver, giving a controlled recursive-search task whose generalized form is \textsf{EXPTIME}-complete and is therefore suitable for testing exponential-time recursive reasoning. On Go, recursive call/return traces let the model evaluate longer game-tree searches without placing the whole proof in one context. This outperforms CoT and the single-context baseline and gives stronger length-OOD generalization.

\begin{figure*}[!t]
  \captionsetup{hypcap=false}
  \begin{minipage}[t]{0.48\textwidth}
  \begin{algorithm}[H]
  \caption{Autoregressive Generator, $\fcot$}
  \label{alg:ar}
  \begin{algorithmic}[1]
  \REQUIRE Input sequence $\mathbf{x} \in \Sigma^*$, next-token generator $\pi: \Sigma^* \to \Sigma$, stopping condition $\mathsf{stop}:\Sigma^*\to \{0,1\}$.
  \WHILE{$\neg \mathsf{stop}(\mathbf{x})$}
    \STATE Generate $y \leftarrow \pi(\mathbf{x})$
    \STATE Append $\mathbf{x} \leftarrow \mathbf{x}\concat y$
  \ENDWHILE
  \STATE \textbf{return} $\mathbf{x}$
  \end{algorithmic}
  \end{algorithm}
  \end{minipage}
  \hfill
  \begin{minipage}[t]{0.48\textwidth}
  \begin{algorithm}[H]
  \caption{Recursive Model, $\frm$}
  \label{alg:rot-model}
  \begin{algorithmic}[1]
  \REQUIRE $\mathbf{x} \in \Sigma^*$; sequence generator $f:\Sigma^*\rightharpoonup\Sigma^*$ whose defined outputs end with a call or return string.
  \STATE \textbf{while true:}
      \STATE \hspace{1em}$\mathbf{y} \leftarrow f(\mathbf{x})$
      \STATE \hspace{1em}\textbf{if} $\mathbf{y} = \mathbf{y}' \concat \returnblock{\mathbf{a}}$: \textbf{return} $\mathbf{a}$
      \STATE \hspace{1em}\textbf{if} $\mathbf{y} = \mathbf{y}' \concat \callblock{\mathbf{q}}$: $\mathbf{x} \leftarrow \mathbf{y}'\concat\frm(\mathbf{q})$
  \end{algorithmic}
  \end{algorithm}
  \end{minipage}
  \vspace{-0.5em}
\end{figure*}

\section{\texorpdfstring{Recursive Models}{Recursive Models}}
\label{sec:method}

This section defines the recursive model. The construction takes a partial sequence generator $f:\Sigma^*\rightharpoonup\Sigma^*$, which maps a prompt, or more generally a context, to a generated sequence. Our default choice is the CoT sequence generator $\fcot$, obtained by autoregressive rollout from a next-token generator.

\paragraph{Autoregressive Generator.} Let $\pi:\Sigma^*\to\Sigma$ be a next-token generator and let $\mathsf{stop}:\Sigma^*\to\{0,1\}$ be a stopping condition. Algorithm~\ref{alg:ar} defines the partial sequence generator $\fcot:\Sigma^*\rightharpoonup\Sigma^*$. Starting from an input sequence $\mathbf{x}$, the rollout repeatedly appends the token $\pi(\mathbf{x})$ until $\mathsf{stop}(\mathbf{x})=1$, then returns the final sequence. If $\mathsf{stop}$ never holds, $\fcot(\mathbf{x})$ is undefined. We write $\mathbf{x}\concat\mathbf{z}$ for sequence concatenation.

\paragraph{Recursive Model.} Fix a partial sequence generator $f:\Sigma^*\rightharpoonup\Sigma^*$; by default, $f=\fcot$. We define the recursive model induced by $f$ as a function $\frm:\Sigma^*\rightharpoonup\Sigma^*$. We suppress the dependence on $f$ in the notation. Its input $\mathbf{x}$ is the full root prompt, and its output, when defined, is the answer returned by the root context. Execution starts from the root stack $\mathbf{S}_0=[\mathbf{x}]$.

For $t=0,1,\ldots$, $\mathbf{S}_t$ denotes the stack after $t$ stack updates; each update happens after one complete call to $f$, not after one generated token in Algorithm~\ref{alg:ar}. Each stack $\mathbf{S}_t\in(\Sigma^*)^+$ is a non-empty list of token sequences. Only the top sequence is active: it is the full input passed to $f$ at stack update $t$. The lower sequences $\mathbf{S}_t[:-1]$ are suspended parent contexts that are not visible to $f$ until control returns to them. For a stack $\mathbf{S}$ and sequences $\mathbf{s}_1,\ldots,\mathbf{s}_k$, $\Push(\mathbf{S};\mathbf{s}_1,\ldots,\mathbf{s}_k)$ appends them to $\mathbf{S}$ in order.

The recursive model uses four reserved delimiter tokens $\CALL,\UNCALL,\ANSWER,\UNANSWER\in\Sigma$. We write $\callblock{\mathbf{q}}$ and $\returnblock{\mathbf{a}}$ for the delimited call and return strings.
When the default choice $f=\fcot$ is used, the stopping condition in Algorithm~\ref{alg:ar} is chosen so that $\fcot$ returns only when the current sequence ends with one of these strings.

A call pauses the current context and starts a new child context for the subproblem. A non-root return removes the child context and appends only its answer to the parent; the child's intermediate tokens are not copied back. Formally, at stack update $t$, run $f$ on the active context and let $\mathbf{y}_t:=f(\mathbf{S}_t[-1])$. The generator first produces the full sequence $\mathbf{y}_t$; only then do we parse its final call or return. We represent this intermediate state by replacing the old stack top with $\mathbf{y}_t$, and then apply the stack update rule:
\begin{align}
\label{eq:state-transition}
\widetilde{\mathbf{S}}_t := \Push(\mathbf{S}_t[:-1];\mathbf{y}_t),
\qquad
\mathbf{S}_{t+1} = \Step(\widetilde{\mathbf{S}}_t).
\end{align}
Thus $\widetilde{\mathbf{S}}_t$ is the stack after the generator output is produced, while $\mathbf{S}_{t+1}$ is the stored stack after the call or return is processed. The map $\Step$ is defined on a transient stack $\widetilde{\mathbf{S}}$ by two cases. Write $\mathbf{y}:=\widetilde{\mathbf{S}}[-1]$ for its top sequence.
\begin{equation}
\label{eq:step-update}
\begin{aligned}
\text{\scshape Call: }
\Step(\widetilde{\mathbf{S}})
&= \Push(\widetilde{\mathbf{S}}[:-1];\mathbf{y}',\, \mathbf{q})\\[-0.1em]
&\quad \text{if } \mathbf{y} = \mathbf{y}' \concat \callblock{\mathbf{q}};\\[0.4em]
\text{\scshape Return: }
\Step(\widetilde{\mathbf{S}})
&= \Push(\widetilde{\mathbf{S}}[:-2];\widetilde{\mathbf{S}}[-2] \concat \mathbf{a})\\[-0.1em]
&\quad \text{if } \mathbf{y} = \mathbf{y}' \concat \returnblock{\mathbf{a}}\\[-0.1em]
&\quad \text{and } |\widetilde{\mathbf{S}}|>1.
\end{aligned}
\end{equation}
A root return is not a stack update. If $|\widetilde{\mathbf{S}}_t|=1$ and $\widetilde{\mathbf{S}}_t[-1]=\mathbf{y}'\concat\returnblock{\mathbf{a}}$, the computation terminates and returns $\mathbf{a}$. The recursive model is partial: if some call to $f$ is undefined, if $f$ returns a sequence that matches neither form above, if a child computation is undefined, or if execution never reaches a root return, then $\frm(\mathbf{x})$ is undefined.

Algorithm~\ref{alg:rot-model} gives an equivalent recursive view of the same process, showing only the active context $\mathbf{x}$. In the call case, $\mathbf{x}\leftarrow\mathbf{y}'\concat\frm(\mathbf{q})$ means: pause the parent after $\mathbf{y}'$, solve the child prompt $\mathbf{q}$ using the same sequence generator $f$, append the child's returned answer, and continue in the parent.

In the experiments, we run this model with a finite iteration budget; \Cref{app:algorithm-implementation} gives the exact procedure.

\subsection{Variants and Extensions}
\label{sec:variants}

The basic recursive model above is our default. We will also use two variants that change only what information is visible in a context, while leaving the meaning of call and return unchanged.

\paragraph{Variant 1: Prompt Prefixing.} Some constructions need every child call to see the original problem instance. Let $\mathbf{x}_0$ denote the root prompt. Instead of copying $\mathbf{x}_0$ into every generated subproblem, we expose it as a fixed prefix whenever the active context is non-root:
\begin{equation}
\mathbf{x}_0\concat\mathbf{y}_t=f(\mathbf{x}_0 \concat \mathbf{S}_t[-1]),
\quad \text{when } |\mathbf{S}_t|>1.
\end{equation}
Here $\mathbf{y}_t$ is the generator output after removing the fixed prefix $\mathbf{x}_0$; for the default CoT rollout $\fcot$, this prefix is always present because rollout only appends tokens to its input. The stack update itself is still the ordinary transition in \Cref{eq:state-transition}. Root-level steps do not add this prefix.

\paragraph{Variant 2: Question Preservation.} In the basic call rule, the parent keeps only $\mathbf{y}'$ while the child receives $\mathbf{q}$ as its complete prompt. Thus, after the child returns, the parent sees the answer but not the subtask text. To keep the subtask text in the parent as well, replace the call rule by
\begin{equation}
\mathbf{S}_{t+1} = \Push(\mathbf{S}_t[:-1];\mathbf{y}' \concat \mathbf{q},\, \mathbf{q}).
\end{equation}
This changes only what the parent remembers; the child prompt is still $\mathbf{q}$.

\paragraph{Further Extensions.} \Cref{sec:mas} formalizes a more general model in which the fixed stack-transition rule is replaced by a scaffold. Such a scaffold may parse model outputs, add instructions, call tools or other generators, and decide when to launch a recursive call. We call the resulting systems recursive agentic systems. Unless a variant or extension is explicitly invoked, all results use the basic recursive model above.

\section{\texorpdfstring{Computational Power of Recursive Models}{Computational Power of Recursive Models}}
\label{sec:power}

Recursive calls organize complex tasks as nested subcomputations. The key resource question is simple: the full stack may be large, but the generator sees only the top context at any moment. We therefore measure both the total stack size and the largest active context, and ask how much power is gained by allowing deep recursion rather than forcing all reasoning into one sequence.

\subsection{Separation of Global and Local Spaces}

Unlike the standard generation process where context grows monotonically, recursive models work on a stack of sequences, which gives rise to two natural resource measures:

\begin{definition}[Global and Local Space]
\label{def:space}
For a stack $\mathbf{S}$, we define the \emph{global space} and \emph{local space} respectively as:
\begin{equation}
\mathsf{GS}(\mathbf{S}) := \sum_{\mathbf{s} \in \mathbf{S}} |\mathbf{s}|, \qquad \mathsf{LS}(\mathbf{S}) := \max_{\mathbf{s} \in \mathbf{S}} |\mathbf{s}|,
\end{equation}
where global space $\mathsf{GS}$ refers to the total number of tokens across all sequences, and local space $\mathsf{LS}$ refers to the length of the longest sequence.
\end{definition}

This resource distinction is practically significant: the \textbf{global space} corresponds to the total size of the current stack (including suspended and active contexts). Suspended contexts (i.e., all but the stack top) are temporarily inactive and can be stored outside the active attention window as text, token sequences, or KV caches, so storage size and transfer latency are implementation costs rather than part of the local-space measure.

In contrast, \textbf{local space} is the maximum length of the active context window throughout next-token generation. Unlike suspended contexts, the active context must fit within the model's attention window during each generator call, making local space the practical bottleneck. \textbf{We thus focus our analysis on the reasoning power achievable under strict local space constraints.}

\paragraph{Base Transformer Model.} For the complexity results, the next-token generator is a constant-size causal Transformer with average-hard attention, as formalized in \Cref{app:transformer-architecture}. Autoregressive rollout turns this generator into the CoT sequence function from \Cref{sec:method}, and the stack update rule then turns that sequence function into a recursive model.

\paragraph{RM Complexity Class.} We use $\RM$ for language classes, and $\frm$ for an individual recursive model. A fixed recursive model decides a language if, on every input, the root context returns a designated accept or reject symbol. The class $\RM(S(n),D(n),T(n))$ records what can be decided when the active context length, recursion depth, and total number of generated tokens are bounded by $S(n)$, $D(n)$, and $T(n)$.

\begin{definition}[Recursive Model Complexity Class]
\label{def:rlm-complexity}
For functions $S, D, T: \mathbb{N} \to \mathbb{N}$, the class $\RM(S(n), D(n), T(n))$ consists of all decision problems solvable by recursive models obtained from constant-size, $\mathcal{O}(\log S(n))$-precision Transformers as above, such that for all inputs $\mathbf{x} \in \Sigma^n$:
\begin{enumerate}[itemsep=2pt,topsep=1pt,parsep=0pt,leftmargin=*]
    \item \textbf{Local Space}: $\max_t \mathsf{LS}(\widetilde{\mathbf{S}}_t) \le S(n)$, where $t$ ranges over completed generator calls, including the final root-return call;
    \item \textbf{Recursion Depth}: $\max_t |\mathbf{S}_t| \le D(n)$ (the stack depth is bounded by $D(n)$);
    \item \textbf{Total Steps}: the total number of generated tokens is at most $T(n)$.
\end{enumerate}
\end{definition}

Here $S(n)$ bounds the total length of one active context, including the original input or prompt prefix whenever it is visible to that call. When no time constraint is imposed, we write it as $\RM(S(n), D(n))$.

\paragraph{Standard Complexity Classes.} To characterize the expressivity of recursive models, we compare with standard Turing machine complexity classes. We denote by $\mathsf{TIME}(T(n))$ and $\mathsf{SPACE}(S(n))$ the classes of problems solvable in $T(n)$ time and $S(n)$ space, respectively. We write $\mathsf{TM}(S(n), T(n))$ for the simultaneous class of languages decided by a deterministic Turing machine that uses $\mathcal{O}(S(n))$ space and $\mathcal{O}(T(n))$ time on all inputs. (See \Cref{app:turing-machine} for formal definitions.) 

\subsection{\texorpdfstring{Main Result}{Main Result}}

Now we formally establish the computational power of recursive models with unbounded recursion depth.

\begin{theorem}[Deep Recursive Models]
\label{thm:space-efficiency}
  For any $S(n) \ge n$, recursive models can solve any problem in $\mathsf{TIME}(2^{\mathcal{O}(S(n))})$ under local space constraint $\mathcal{O}(S(n))$:
\begin{equation}
\mathsf{TIME}(2^{\mathcal{O}(S(n))}) \subseteq \RM(\mathcal{O}(S(n)), \infty, \infty).
\end{equation}
\end{theorem}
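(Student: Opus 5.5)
We simulate a deterministic Turing machine $M$ running in time $T(n)=2^{cS(n)}$ with the classical \emph{tableau/recursive-evaluation} argument behind $\mathsf{TIME}(2^{\mathcal{O}(S)})\subseteq \mathsf{APSPACE}(S)$. The reason for this route is that a direct step-by-step simulation would keep up to $2^{\mathcal{O}(S)}$ tape cells in one active context. Instead, the recursive model computes individual cells of the computation tableau and keeps only short subqueries in each active context.

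\textbf{The recursive query.} Let $\mathrm{cell}(t,i)$ be the content of tape cell $i$ at time $t$, enriched with the head position and state if the head is there. By locality of Turing machines, $\mathrm{cell}(t,i)$ is a fixed finite function $\delta$ of $\mathrm{cell}(t-1,i-1)$, $\mathrm{cell}(t-1,i)$ and $\mathrm{cell}(t-1,i+1)$. At $t=0$, the value is read directly from the input: blank, input symbol, or initial state at $i=0$. Both $t$ and $i$ are at most $2^{cS(n)}$, so they can be written in binary with $\mathcal{O}(S(n))$ tokens. The root prompt is $\mathbf{x}$. We use Variant~1 (prompt prefixing) so that every child can consult the input $\mathbf{x}$ of length $n\le S(n)$. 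A subproblem prompt is the string $\langle t,i\rangle$ in binary.

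\textbf{Child context for query $\langle t,i\rangle$.} The Transformer writes out its work in this context.
\begin{enumerate}
\item If $t=0$, it returns the input-derived symbol. This needs a positional lookup into the prefix $\mathbf{x}$, feasible with average-hard attention and $\mathcal{O}(\log S)$ precision once $i$ is compared against positions.
\item Otherwise, it computes $t-1$ and $i-1,i,i+1$ by binary decrement and increment. These are done over $\mathcal{O}(S)$ CoT tokens, e.g.\ copying the digits with borrow/carry.
\item It issues three sequential calls $\callblock{\langle t-1,i+\delta\rangle}$ for $\delta\in\{-1,0,1\}$. After each call, only the returned constant-size symbol is appended to the parent.
\item It applies the finite table $\delta$ and emits $\returnblock{\cdot}$.
\end{enumerate}
Each context therefore has length $n+\mathcal{O}(S(n))=\mathcal{O}(S(n))$. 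The recursion depth is $T(n)+1$, which is allowed since $D=\infty$. Total time is exponential in $T$, but $T=\infty$ as well.

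\textbf{Root context.} Without loss of generality, $M$ clears its tape and halts with the head at cell $0$ at exactly time $T(n)$, padding by idling. The root then only needs to query $\mathrm{cell}(T(n),0)$ and return accept or reject according to the state. This requires the root to write $T(n)=2^{cS(n)}$ in binary, i.e.\ to know $S(n)$. We assume $S$ is space-constructible in the standard sense, or compute it within the root's CoT in $\mathcal{O}(S)$ tokens. If constructibility is unavailable, a fallback is to iterate over guesses $s=n,n+1,\dots$, querying with $T=2^{cs}$ and checking for a halted configuration. This works provided the space guess never exceeds $\mathcal{O}(S)$.

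\textbf{Main obstacle.} The step I expect to be hardest is the Transformer-level implementation. A constant-size average-hard-attention Transformer must perform binary increment and decrement on $\mathcal{O}(S)$-bit counters in CoT, locate the relevant digits of $\langle t,i\rangle$ in its context, and do the indexed lookup into $\mathbf{x}$, all with $\mathcal{O}(\log S)$ precision. I would handle this by reducing each child's procedure to a simple Turing-machine-like routine on an $\mathcal{O}(S)$-length workspace. I would then invoke the standard CoT-simulation of a Turing machine by Transformers, together with the appendix architecture conventions, writing each arithmetic pass as a left-to-right scan. The key design choice is to emit the digits in a fixed order and mark the borrow position explicitly, so that each next token depends on a constant number of attended positions.
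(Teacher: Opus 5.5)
Your proposal is correct and is essentially the paper's primary proof (\Cref{app:proof-space-efficiency}). Both recursively evaluate the Turing machine's computation history indexed by binary time and position, so each frame holds only the input plus $\mathcal{O}(\log T(n))=\mathcal{O}(S(n))$ tokens at recursion depth $\mathcal{O}(T(n))$; you use the uniform three-neighbor tableau where the paper tracks the head through $\mathsf{STATE},\mathsf{POS},\mathsf{CELL},\mathsf{SYMBOL}$. Two small adjustments are needed: since the theorem concerns the basic model, copy $\mathbf{x}$ into every call payload as the paper does rather than relying on prompt prefixing, and since $S$ is not assumed space-constructible, make your exponent-guessing fallback the main root procedure with absorbing halting states instead of halting at exactly time $T(n)$ (the first guess $s$ with $2^{cs}$ at least the running time satisfies $s\le S(n)$, so this plays the role of the paper's $\mathsf{RUN}(x,t)\to\mathsf{RUN}(x,t+1)$ iteration).
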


The theorem is about active working context: exponentially long computations can be organized into many small frames. At each step, the generator attends only to the current frame, while suspended frames are stored outside the active attention window.

The proof gives a more explicit form: each active context stores the input plus $\mathcal{O}(\log T(n))$ auxiliary tokens for indexing the simulated time step and tape position. Thus, for $T(n)=2^{\mathcal{O}(s(n))}$, the simulation uses local context $n+\mathcal{O}(s(n))$:
\begin{equation}
\mathsf{TIME}(2^{\mathcal{O}(s(n))}) \subseteq \RM(n+\mathcal{O}(s(n)), \infty, \infty).
\end{equation}
\paragraph{Remark 1: Input versus working memory.} This distinction matters when working memory is much smaller than the input length. Recursion does not make the active context shorter than the input tokens a call must read. For tasks such as Needle-in-a-Haystack, where the answer is hidden in a long input, the bottleneck is access to the long input; recursion helps only after the needed information is in the active context, or if the model has another way to retrieve the relevant input tokens.

\paragraph{Remark 2: Depth and runtime.} Achieving this simulation may require recursion depth $2^{\mathcal{O}(s(n))}$. Without memoization, repeated subcalls may inflate the total number of generated tokens; memoization can reduce this overhead, but is not part of the basic recursive model.
We provide two proofs in \Cref{app:proof-space-efficiency} and \Cref{app:proof-space-efficiency-alt}: the first expresses Turing machine computation as recursive functions, and the second uses the classical alternating-space characterization of exponential time~\citep{arora2009computational}. The above results also apply to the two variants discussed in \Cref{sec:variants}.

\subsection{No Recursion and Shallow Recursion}

Next, we show that the depth of recursion is critical to the power of recursive models: without deep recursion, the model is no more powerful than simpler context-management approaches.

\paragraph{Standard Autoregressive Models.} When $D(n) = 1$, no recursive calls are made and the model reduces to standard autoregressive models (a.k.a.\ CoT). While it is known that with sufficiently many intermediate steps, autoregressive models can solve any computable problem~\citep{merrill2023expresssive,feng2024towards,li2024chain,yang2025pencil}, this comes at a significant cost: 

\begin{theorem}[Standard Autoregressive Models / CoT]
\label{prop:arm-limit}
For a standard autoregressive model (i.e., recursive model with depth $D = 1$) with local space $\mathcal{O}(S(n))$, $S(n)\geq n$, we have:
\begin{align}
\mathsf{TIME}(\mathcal{O}(S(n))) &\subseteq \RM(\mathcal{O}(S(n)), 1), \\
\RM(\mathcal{O}(S(n)), 1) &\subseteq \mathsf{TIME}(\softO(S^2(n))).
\end{align}
\end{theorem}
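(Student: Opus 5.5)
We prove the two inclusions separately. Throughout, with depth $D=1$ the recursive model never issues a call. It is therefore a single autoregressive rollout of $\fcot$ on the input $\mathbf{x}$, terminating with a root return. The local-space constraint then says the whole generated sequence (input plus CoT) has length $\mathcal{O}(S(n))$.

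\textbf{Lower bound.} To show $\mathsf{TIME}(\mathcal{O}(S(n)))\subseteq\RM(\mathcal{O}(S(n)),1)$, we invoke the known CoT simulation of Turing machines by constant-size average-hard-attention Transformers \citep{merrill2023expresssive,li2024chain,yang2025pencil}. In that construction, each step of the machine is emitted as a constant number of tokens encoding (state, symbol written, head move). The next token is computed by attention that retrieves the current tape symbol: it finds the most recent write to the current head position, and falls back to the input token at that position if there is none. The head position is a prefix sum of moves, which average-hard attention computes with $\mathcal{O}(\log S(n))$ precision.

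A $T(n)=\mathcal{O}(S(n))$-time machine thus yields a trace of length $n+\mathcal{O}(T(n))=\mathcal{O}(S(n))$, using $S(n)\ge n$. At the end we emit $\returnblock{\mathbf{a}}$ with $\mathbf{a}$ the accept/reject symbol; the stopping condition fires exactly on this suffix. The main thing to check is that the precision $\mathcal{O}(\log S(n))$ suffices for positions up to $\mathcal{O}(S(n))$, which is immediate.

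\textbf{Upper bound.} To show $\RM(\mathcal{O}(S(n)),1)\subseteq\mathsf{TIME}(\softO(S^2(n)))$, a Turing machine (or RAM, then converted with polylog overhead) simulates the rollout token by token. The sequence length never exceeds $L=\mathcal{O}(S(n))$, so at most $L$ tokens are generated. For each new token, the constant-layer Transformer must be evaluated at the newest position only. Earlier positions' key/value vectors at every layer can be cached, since causal attention makes them unchanged.

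Computing one layer at the new position requires one attention over at most $L$ cached keys, each arithmetic operation on $\mathcal{O}(\log S(n))$-bit numbers. With average-hard attention, this means finding the maximal scores and averaging the corresponding values, in $\mathcal{O}(L\cdot\mathrm{polylog}\,S(n))$ time. The MLPs and embeddings are constant-size, giving $\mathrm{polylog}$ cost per layer. Summing over $L$ tokens and a constant number of layers gives $\mathcal{O}(L^2\,\mathrm{polylog}\,S(n))=\softO(S^2(n))$. Checking $\mathsf{stop}$ (a suffix match against the reserved delimiters) and reading off the answer cost $\mathcal{O}(L)$.

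\textbf{Main obstacle.} The step needing the most care is the upper bound's cost accounting on a genuine multitape Turing machine rather than a RAM. The cache of $\mathcal{O}(L)$ vectors of $\mathcal{O}(\log S)$ bits lives on tape, and each attention pass is one sequential sweep over it. This gives $\mathcal{O}(L\log S)$ tape movement per token per layer, so the bound survives up to polylog factors absorbed by $\softO$. If needed, we instead recompute all positions from scratch per token, which costs $\mathcal{O}(L^2)$ per token and is too slow. That alternative is why caching per-layer states is essential.
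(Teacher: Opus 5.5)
Your proposal is correct and takes essentially the paper's route. The paper also notes that depth $1$ leaves a single, monotonically growing rollout whose total length is capped by the local-space bound, and then reads off both inclusions from the chain-of-thought characterization of \citet{merrill2023expresssive} with $t(n)=\Theta(S(n))$; you unpack that citation, and your KV-cached token-by-token simulation is the standard argument behind the cited $\softO(S^2(n))$ upper bound. One loose remark is the claim that a RAM converts to a Turing machine with only polylogarithmic overhead, which is not a standard fact since known simulations lose a polynomial factor; your sequential-sweep accounting makes that detour unnecessary, and the same sweep, carrying the $\mathcal{O}(\log S(n))$-bit working registers along with the head, also works for the single-tape machines used in the paper's definition of $\mathsf{TIME}$.
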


Both inclusions follow from \citet{merrill2023expresssive} (Eq.~(1)); the $\softO(\cdot)$ absorbs the polylogarithmic overhead of simulating $\mathcal{O}(\log S(n))$-precision arithmetic on a Turing machine. Together, the two inclusions show that standard autoregression with context length $S(n)$ (which determines the total reasoning steps when $D(n) = 1$) can solve all problems in $\mathsf{TIME}(\mathcal{O}(S(n)))$, but its power is contained in $\mathsf{TIME}(\softO(S^2(n)))$.

Compared with \Cref{thm:space-efficiency}, this gives an exponential saving in local context for these long computations: solving the same exponential-time class without recursion would require exponentially larger context. For instance, with polynomial context $S(n) = \mathrm{poly}(n)$, standard models are confined to $\mathsf{P}$, while recursive models reach $\mathsf{EXPTIME}$, which is beyond $\mathsf{NP}$ and $\mathsf{PSPACE}$ under standard assumptions.

\paragraph{Constant-Depth Recursion.} Constant recursion depth already improves over plain autoregression, but only up to the power of single-context management strategies such as summarization:

\begin{theorem}[Constant-Depth Recursive Models]
\label{thm:rlm-simulates-sum}
For any $S(n) \geq n$, recursive models with constant recursion depth $D = O(1)$ and local space $\mathcal{O}(S(n))$ can solve any problem in $\mathsf{SPACE}(S(n))$:
\begin{equation}
\mathsf{SPACE}(S(n)) \subseteq \RM(\mathcal{O}(S(n)), \mathcal{O}(1)).
\end{equation}
More generally, the same construction preserves the time bound of such a simultaneous space-time simulation:
\begin{equation}
\mathsf{TM}(S(n),T(n)) \subseteq \RM(\mathcal{O}(S(n)), \mathcal{O}(1), \mathcal{O}(T(n))).
\end{equation}
\end{theorem}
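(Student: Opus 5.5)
We simulate a Turing machine $M$ with space $S(n)$ and time $T(n)$ by depth-2 recursion. The root context acts as a controller that holds only the current configuration of $M$. Each child context advances $M$ by a block of steps, starting from a configuration passed as its prompt, and returns only the resulting configuration. This is the recursive analogue of summarization, as in \citet{yang2025pencil}: there, a CoT model writes a trace and periodically replaces it by a compact state. Here the "compression" is realized by \texttt{return}, which discards the child's trace automatically, so no special erasure mechanism is needed.

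\textbf{Step 1 (encoding).} Encode a configuration $c$ of $M$ as a token string $\langle c\rangle$ of length $\mathcal{O}(S(n))$: the tape contents, a head-position marker, and the state. The input $\mathbf{x}$ of length $n\le S(n)$ is converted once into $\langle c_0\rangle$.

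\textbf{Step 2 (child routine).} On prompt $\langle c\rangle$, the child runs a standard CoT simulation of $M$ for $\Theta(S(n))$ steps, or until $M$ halts. It then emits $\returnblock{\langle c'\rangle}$, where $c'$ is the configuration reached. Writing out $c'$ costs another $\mathcal{O}(S(n))$ tokens, so the child's context stays $\mathcal{O}(S(n))$. For the CoT simulation I would invoke the known construction underlying \Cref{prop:arm-limit} (Merrill--Sabharwal / \citet{yang2025pencil}): a constant-size average-hard-attention Transformer writes one token per simulated step, and it recovers the symbol under the head by attending to the most recent write at that tape position. After the block, it copies out the full current tape. Producing $c'$ requires, for each cell, retrieving the last written symbol, which is an attention lookup keyed by position. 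This needs $\mathcal{O}(\log S(n))$ precision, consistent with \Cref{def:rlm-complexity}.

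\textbf{Step 3 (root loop).} The root context is $\langle c_0\rangle\callblock{\langle c_0\rangle}$ followed by the returned $\langle c_1\rangle$. The root can never erase its own tokens, so it must not accumulate configurations. I would resolve this with a third level. The root issues one call whose child, at depth 2, acts as the controller: it emits $\callblock{\langle c\rangle}$, receives $\langle c'\rangle$ from a depth-3 worker, and then itself returns $\langle c'\rangle$ together with a halt flag. The root then re-calls on the new configuration, but this still grows the root's context. The fix is tail recursion: when $M$ has not halted, the controller does not return; it calls a fresh controller on $\langle c'\rangle$, which would make the depth unbounded. To keep depth constant, I would instead use the \textsc{Return} rule's overwrite semantics directly: a worker child returns the new configuration, and the parent, on seeing it, immediately re-calls with prompt equal to that configuration. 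The growth of the parent's context remains the main obstacle. It must be bounded either through Variant 2 / prompt structure, or by a summarization-style cleanup step in which, once the parent context exceeds $c\cdot S(n)$, the parent returns its latest configuration upward to the root. The root then reissues it as a fresh call. The root grows by $\mathcal{O}(S(n))$ per such return, which is still a problem, so at this point I expect the hardest design work. My first attempt is to make each root-level call "own" an entire doubling-free chain. Concretely, each parent block handles $S(n)$ worker calls of $S(n)$ steps each, using the fact that the parent only needs to keep the last configuration verbatim, with its attention reading the most recent returned block. Each parent thus consumes $\mathcal{O}(S(n))$ tokens per round, and a constant number of nested levels multiplies the step budget. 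However, for $T$ up to $2^{\mathcal{O}(S)}$ this requires unbounded rounds. I would therefore check carefully whether the intended construction permits the root context to exceed $\mathcal{O}(S(n))$ or relies on some reuse, and adjust the level structure accordingly.

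\textbf{Step 4 (accounting).} Given a valid constant-depth scheme, each simulated step of $M$ costs $\mathcal{O}(1)$ tokens inside workers. Each block of $\Theta(S(n))$ steps adds $\mathcal{O}(S(n))$ tokens of configuration copying. The total is therefore $\mathcal{O}(T(n))$, which yields the $\mathsf{TM}(S(n),T(n))$ inclusion. The $\mathsf{SPACE}(S(n))$ inclusion follows because every such machine can be assumed to halt within $2^{\mathcal{O}(S(n))}$ steps, and the theorem imposes no time bound in that case.
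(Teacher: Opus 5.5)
\textbf{The gap is in Step 3.} You leave it unresolved, so Step 4, which is explicitly conditional on ``a valid constant-depth scheme'', does not yet establish anything.

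Your fallback also fails as stated. Suppose a worker returns $\langle c'\rangle$ and the parent then issues $\callblock{\langle c'\rangle}$. The Return rule has already appended the $\Theta(S(n))$-length string $\langle c'\rangle$ to the parent. It then sits inside $\mathbf{y}'$ and survives the Call rule. A frame of size $\mathcal{O}(S(n))$ can therefore absorb only $\mathcal{O}(1)$ such rounds in the worst case, not $S(n)$ of them. Nesting $D=\mathcal{O}(1)$ levels of this pattern simulates only $\mathcal{O}(1)^{D}\cdot\mathcal{O}(S(n))=\mathcal{O}(S(n))$ steps. Variant 2 goes the wrong way, because it keeps $\mathbf{q}$ in the parent as well.

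\textbf{The missing idea is a tail-call trick in which the Call rule, not the Return rule, does the erasing.} The Call rule keeps only $\mathbf{y}'$ in the parent, so the whole call block is discarded from the parent. Your description of the root as containing $\callblock{\langle c_0\rangle}$ overlooks this. Any tokens that end up inside a call block therefore vanish from the parent.

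So let the worker, after its block of $\Theta(S(n))$ steps, return an \emph{unclosed} call prefix, $\returnblock{\CALL\,\langle c'\rangle}$. The root's only rule is: if its suffix is $\CALL\,w$, emit $\UNCALL$. Its context becomes $\mathbf{y}'\concat\callblock{\langle c'\rangle}$. The Call rule then strips the entire block, which restores the root to exactly its pre-call content (essentially the input $\mathbf{x}$), and pushes a fresh depth-2 worker on $\langle c'\rangle$. The returned configuration is consumed by the next call instead of accumulating.

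This gives recursion depth $2$, a root context of length $n+\mathcal{O}(1)$, and worker contexts (including transient outputs) of length $\mathcal{O}(S(n))$. On halting, the worker returns the accept/reject bit and the root returns it. With this fix, your Steps 1, 2 and 4 go through essentially as in the paper, including the $\mathcal{O}(T(n))$ token count. The paper's worker frame is $z\concat\langle\texttt{sep}\rangle\concat u$ with $|u|\le 2N$, and it summarizes to $\mathsf{Embed}(c^*)$ once $|u|=2N$.
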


This result shows that constant-depth recursion achieves both space and time efficiency relative to a space-$S(n)$, time-$T(n)$ computation: the local space matches the actual space complexity $S(n)$, and the total number of generated tokens matches the time complexity $T(n)$. The proof uses tail-recursive simulations; see \Cref{app:proof-space-efficiency} for details and for the caveat about the question-preservation variant.

However, this does not exceed optimal single-context management. This computational power matches that of \textbf{summarization}~\citep{yang2025pencil}, which periodically compresses reasoning history to free up space (illustrated in \Cref{fig:single-context}). In fact, as we will prove later (\Cref{sec:mas}), $\mathsf{SPACE}(S(n))$ is the \emph{maximum} expressive power single-context management can achieve, \textbf{and constant-depth recursion therefore offers no advantage over single-context management strategies}.

Yet even this upper bound is exponentially weaker than deep recursion: comparing with \Cref{thm:space-efficiency}, there is a gap from $\mathsf{SPACE}(S(n))$ to $\mathsf{TIME}(2^{\mathcal{O}(S(n))})$. For polynomial context $S(n) = \mathrm{poly}(n)$, this is the gap between $\mathsf{PSPACE}$ and $\mathsf{EXPTIME}$, widely believed to be strict.

\section{Generalization to Agentic Systems}
\label{sec:mas}

While the recursive model in \Cref{sec:method} uses a minimal fixed controller that updates a stack according to the generator's calls and returns, real agentic systems can use richer fixed controllers around LLMs, tools, and specialized agents~\citep{gao2025survey,wang2024survey,hong2024metagpt,wu2023autogen}. This section formalizes this more general model and asks whether richer controllers are more powerful under the same local-space bound.

\subsection{Formalizing Recursive Agentic Systems}
\label{sec:scaffold}

We call such a controller a \emph{scaffold}. Given an input string, a scaffold maintains the text of one run, such as the current prompt, scratch work, and parsed fields. It chooses which strings to send to generators or tools, uses the returned strings to update the run, and may solve a subproblem by starting another scaffold run on a new input string. The caller resumes when that run returns. Formally:

\begin{definition}[Recursive Agentic System]
\label{def:scaffold}
A \emph{recursive agentic system} is a pair $(\mathcal{S},\mathcal{F})$ consisting of:
\begin{enumerate}[leftmargin=*,itemsep=0.15em,topsep=0.2em]
    \item generators $\mathcal{F}=(f_1,\ldots,f_k)$, where each $f_\ell:\Sigma^*\to\Sigma^*$ models a language model or string-valued tool;
    \item scaffolds $\mathcal{S}=(S_1,\ldots,S_m)$, where each $S_i$ is a deterministic controller with string input and, when it halts, string output. During execution, $S_i$ may issue queries $\mathsf{GEN}_\ell(u)$ with $\ell\in[k]$ or $\mathsf{SELF}_j(u)$ with $j\in[m]$, where $u\in\Sigma^*$.
\end{enumerate}
These queries are interpreted as follows. A query $\mathsf{GEN}_\ell(u)$ sends $u$ to generator $f_\ell$ and returns $f_\ell(u)$. A query $\mathsf{SELF}_j(u)$ starts a new run of scaffold $S_j$ on input $u$; if that run returns a string $z$, the caller receives $z$ and continues. Between queries, the scaffold's control is deterministic: it may update its stored text, halt with a string output, issue another query, or continue running. \Cref{app:oracle-tm} formalizes this controller as an oracle Turing machine variant with output, which gives a standard way to measure local workspace and query strings in the resource bounds below.

\paragraph{Induced functions.} The system induces one partial function for each scaffold:
\begin{equation}
\label{eq:induced-scaffold-functions}
\phi_i^{\mathcal{S},\mathcal{F}}:\Sigma^*\rightharpoonup\Sigma^*,\qquad i\in[m].
\end{equation}
The function $\phi_i^{\mathcal{S},\mathcal{F}}$ is the partial input-output function obtained by starting scaffold $S_i$ with the generators $\mathcal{F}$. Thus $\phi_i^{\mathcal{S},\mathcal{F}}(x)=y$ exactly when the complete run of $S_i$ on input $x$ terminates with output $y$, with $\mathsf{GEN}$ queries answered by the corresponding $f_\ell$ and $\mathsf{SELF}$ queries evaluated as recursive scaffold runs. Every recursive call made during this run must itself return; if the root run diverges, or if some required recursive call never returns, then $\phi_i^{\mathcal{S},\mathcal{F}}(x)$ is undefined. In the oracle Turing machine formalization of \Cref{app:oracle-tm}, this is the corresponding non-halting computation. Since recursive calls may be mutually recursive, \Cref{app:fixpoint} formalizes this semantics as the least tuple $\boldsymbol{\phi}^{\mathcal{S},\mathcal{F}}=(\phi_1^{\mathcal{S},\mathcal{F}},\ldots,\phi_m^{\mathcal{S},\mathcal{F}})$ satisfying these query rules.
\end{definition}

Allowing several named scaffolds is only notation. A single scaffold could take a mode tag as part of its input and branch to the corresponding case; writing $S_1,\ldots,S_m$ simply lets us refer to those cases separately.

The basic recursive model of \Cref{sec:method} is the one-generator, one-scaffold special case: the scaffold implements the rollout in Algorithm~\ref{alg:ar} on the active context and then applies the stack-update rule in \Cref{eq:step-update}. \Cref{fig:workflow-examples} illustrates three representative examples: summarization, discrete diffusion, and prover/verifier recursion. In all three, recursion depth counts nested recursive scaffold calls, not ordinary generator/tool queries or loop iterations inside one run.

\begin{figure*}[!t]
  \vspace{-1em}
  \begin{minipage}[t]{0.48\textwidth}
  \begin{algorithm}[H]
  \caption{Summarization}
  \label{alg:summarization}
  \begin{algorithmic}[1]
  \REQUIRE Input $x$, generator $f$, summarizer $g$, max length $L$.
  \WHILE{$\neg \mathsf{stop}(x)$}
      \STATE $y \leftarrow f(x)$ \hfill $\triangleright$ \textit{generate}
      \STATE \textbf{if} $|y| \geq L$: $x \leftarrow g(y)$ \hfill $\triangleright$ \textit{summarize}
      \STATE \textbf{else}: $x \leftarrow y$
  \ENDWHILE
  \RETURN $x$
  \end{algorithmic}
  \end{algorithm}
  \vspace{-0.8em}
  \begin{algorithm}[H]
  \caption{Discrete Diffusion}
  \label{alg:diffusion}
  \begin{algorithmic}[1]
  \REQUIRE State $x \in (\Sigma \cup \{\mathsf{mask}\})^n$, denoiser $f$, transition $g$.
  \WHILE{$\neg \mathsf{stop}(x)$}
      \STATE $y \leftarrow f(x)$ \hfill $\triangleright$ \textit{predict mask-free tokens}
      \STATE $x \leftarrow g(x, y)$ \hfill $\triangleright$ \textit{new masked sequence}
  \ENDWHILE
  \RETURN $x$
  \end{algorithmic}
  \end{algorithm}
  \end{minipage}
  \hfill
  \begin{minipage}[t]{0.48\textwidth}
  \begin{algorithm}[H]
  \caption{Mutual Recursion: \red{\textsc{Prover}} \& \blue{\textsc{Verifier}}}
  \label{alg:prover-verifier}
  \begin{algorithmic}[1]
  \REQUIRE Goal $g$, seeds $s_1,\ldots,s_k$, prover $f_p$, verifier $f_v$.
  \STATE \textbf{def} \red{\textsc{Prover}}$(g)$:
  \STATE \hspace{1em}\textbf{for} $i=1$ to $k$:
  \STATE \hspace{2em}$p \leftarrow f_p(g,s_i)$ \hfill $\triangleright$ \textit{generate proof}
  \STATE \hspace{2em}\textbf{if} \blue{\textsc{Verifier}}$(g,p)=\texttt{correct}$:
  \STATE \hspace{3em}\textbf{return} \texttt{correct}
  \STATE \hspace{1em}\textbf{return} \texttt{wrong} \hfill $\triangleright$ \textit{all failed}
  \STATE
  \STATE \textbf{def} \blue{\textsc{Verifier}}$(g,p)$:
  \STATE \hspace{1em}$(\mathsf{status},\mathcal{G}) \leftarrow f_v(g,p)$ \hfill $\triangleright$ \textit{check proof}
  \STATE \hspace{1em}\textbf{if} $\mathsf{status} \in \{\texttt{correct}, \texttt{wrong}\}$:
  \STATE \hspace{2em}\textbf{return} $\mathsf{status}$
  \STATE \hspace{1em}\textbf{if} $\mathsf{status}=\texttt{incomplete}$:
  \STATE \hspace{2em}\textbf{return} $\wedge_{g'\in\mathcal{G}}$\red{\textsc{Prover}}$(g')$ \hfill $\triangleright$ \textit{prove subgoals}
  \end{algorithmic}
  \end{algorithm}
  \end{minipage}
  \caption[Examples formalized as recursive agentic systems.]{
  {Examples formalized as recursive agentic systems.} \textbf{Summarization} (top left) has one scaffold and two generators, a generator $f$ and a summarizer $g$. The scaffold keeps a current sequence $x$, queries $f$ for $y=f(x)$, and if $|y|\ge L$ queries $g$ to compress $y$ before continuing; otherwise it continues from $y$. It makes no recursive call, so $D=1$. \textbf{Discrete diffusion} (bottom left) is also non-recursive: one scaffold maintains a masked sequence $x$, queries the denoiser $f$, and applies the transition rule $g(x,y)$ to reveal, overwrite, or re-mask positions. Iterating this refinement does not create child scaffold runs, so again $D=1$. \textbf{Prover/verifier recursion} (right) uses two scaffolds, $S_{\mathrm{prove}}$ and $S_{\mathrm{verify}}$, and two generators, $f_p$ and $f_v$. The prover uses $f_p$ to propose candidate proofs and calls the verifier; the verifier uses $f_v$ to accept, reject, or identify missing subgoals. If a proof is incomplete, the verifier recursively starts prover runs on the subgoals and returns their conjunction, so the recursive call tree is the proof-decomposition tree.\vspace{-1em}}
  \label{fig:workflow-examples}
  \end{figure*}
  
\subsection{Optimality of Recursive Models}\label{subsec:optimality}
\label{sec:mas-space-sim}

We now compare the general scaffold model with the minimal recursive model analyzed in \Cref{sec:power}. The question is whether these more general controllers can compute more under the same local-space bound. The answer is no: once local space and recursion depth are fixed, richer controllers give no additional asymptotic power.

\begin{definition}[$L$-bounded execution]
\label{def:bounded-space}
Fix $(\mathcal{S},\mathcal{F})$ and its induced partial functions $\boldsymbol{\phi}^{\mathcal{S},\mathcal{F}}$.
For $r\in\{1,\ldots,m\}$, input $x\in\Sigma^*$, and $L\in\mathbb{N}$, evaluation of $\phi_r^{\mathcal{S},\mathcal{F}}(x)$ is \emph{$L$-bounded} if every scaffold invocation in the resulting recursive call tree stores at most $L$ symbols locally, including its internal workspace and any query/answer strings it currently holds. This bound is per call frame; the internal computation of generators/tools is not counted.
\end{definition}

This is the analogue of the local space bound in \Cref{def:space}. Recursion depth controls how many such frames may be nested.

\paragraph{Unbounded Depth.} The first bound says that any $L$-bounded recursive agentic system can be simulated in time exponential in its per-call local space, relative to its generators.

\begin{theorem}[Upper bounds under $L$-bounded executions (unbounded recursion depth)]
\label{thm:upper-workflow-space-depth}
Fix any function $L(n)\ge n$.
Let $(\mathcal{S},\mathcal{F})$ be any recursive agentic system.
For any index $r\in\{1,\ldots,m\}$, any language decided by $\phi_r^{\mathcal{S},\mathcal{F}}$ under $L(n)$-bounded execution for input of length $n$ (\Cref{def:bounded-space}) lies in $\mathsf{DTIME}^{\mathcal{F}}\bigl(2^{\mathcal{O}(L(n))}\bigr)$.
Here $\mathsf{DTIME}^{\mathcal{F}}$ denotes the usual relativized deterministic time class (\Cref{app:oracle-tm}), viewing the generator/tool family $\mathcal{F}$ as an oracle family.

In particular, if every generator/tool in $\mathcal{F}$ is computable by a deterministic Turing machine in time $2^{\mathcal{O}(L(n))}$ and work space $\mathcal{O}(L(n))$ on all queries of length at most $L(n)$, then the language lies in $\mathsf{TIME}(2^{\mathcal{O}(L(n))})$.
\end{theorem}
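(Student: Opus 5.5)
The plan is to simulate the whole recursive call tree by memoized evaluation over the finite space of frame configurations. Under $L(n)$-bounded execution, every scaffold invocation is an oracle Turing machine run whose stored contents (internal workspace, query string, answer string currently held) have at most $L=L(n)$ symbols. The relevant object is the \emph{call-frame configuration}: the scaffold index $i\in[m]$, the control state, the head positions, and the at most $L$ stored symbols. The number of such configurations is at most $N:=m\cdot|Q|\cdot L^{\mathcal{O}(1)}\cdot|\Gamma|^{\mathcal{O}(L)}=2^{\mathcal{O}(L)}$. Likewise, the inputs $u$ on which a recursive run $\mathsf{SELF}_j(u)$ can be started have $|u|\le L$. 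Hence there are at most $m\cdot 2^{\mathcal{O}(L)}$ distinct subproblems $(j,u)$.

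The key steps, in order, are as follows.

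\textbf{(1) Bounding a single run.} A non-diverging run of $S_j$ on $u$, with the answers to its queries treated as given, passes through frame configurations without repeating. A repeated configuration would mean the deterministic run loops forever, since query answers are functions of the query string. Its length between queries is therefore at most $N$, and the same bound holds overall.

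\textbf{(2) The outcome table.} Define a table $V(j,u)\in\Sigma^{\le L}\cup\{\bot\}$ giving the output of $\phi_j^{\mathcal{S},\mathcal{F}}(u)$ when it is defined under $L$-bounded execution, and $\bot$ when the run diverges or exceeds $L$. The least-fixpoint semantics of \Cref{app:fixpoint} characterizes $V$ as the least solution of the system in which $V(j,u)$ is obtained by running $S_j$ on $u$, answering $\mathsf{GEN}_\ell(v)$ by an oracle query to $f_\ell$ and $\mathsf{SELF}_{j'}(v)$ by $V(j',v)$.

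\textbf{(3) Kleene iteration.} Compute $V$ by the standard iteration. Start with $V_0\equiv\bot$. In each round, recompute every entry by simulating $S_j$ on $u$ for at most $N$ steps, using $V_{k}$ to answer self-queries; any query to an entry that is still $\bot$ blocks that run and yields $\bot$. The table is monotone, since entries only change from $\bot$ to a value. It therefore stabilizes after at most $m2^{\mathcal{O}(L)}$ rounds. Each round costs $m2^{\mathcal{O}(L)}\cdot N\cdot\mathrm{poly}(L)$ time plus oracle calls on strings of length at most $L$. The total is $2^{\mathcal{O}(L)}$ deterministic time relative to $\mathcal{F}$. Finally, read off $V(r,x)$ and accept or reject accordingly.

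\textbf{(4) The unrelativized case.} If each $f_\ell$ is computable in time $2^{\mathcal{O}(L)}$ on queries of length at most $L$, replace each oracle call by a direct simulation. This multiplies the cost by $2^{\mathcal{O}(L)}$ at most $2^{\mathcal{O}(L)}$ times, which still gives $\mathsf{TIME}(2^{\mathcal{O}(L(n))})$.

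The main obstacle is step (3): proving that the stabilized table coincides with the least fixpoint, i.e.\ with the actual semantics, in the presence of divergence and mutual recursion. I would prove this by induction on the height of the finite call tree of a terminating run: if $\phi_j(u)=z$ with call-tree height $h$, then $V_h(j,u)=z$. Conversely, by induction on $k$, every value in $V_k$ is the true output. Cyclic dependencies, such as a run that calls itself on the same input, are exactly the cases that never leave $\bot$, and this matches divergence. A secondary care point is verifying that $L$-boundedness is checkable during the simulation, so that runs exceeding $L$ are marked $\bot$ rather than counted. Since the language is assumed to be decided under bounded execution, that case never occurs on the root path.
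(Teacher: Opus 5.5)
Your proposal is correct and follows essentially the same route as the paper. Both run Kleene iteration of the $L$-cut-off one-step operator on the finite table indexed by scaffold and input in $\Sigma_{\le L}$. Each entry changes at most once, so stabilization occurs within $m\cdot 2^{\mathcal{O}(L)}$ rounds, and both then eliminate the oracle by direct simulation. Your configuration-counting argument in step (1) also fills in a point the paper only asserts ``by construction'': why each cut-off scaffold run can be stopped after $2^{\mathcal{O}(L)}$ steps and declared divergent.
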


\paragraph{Constant Depth.} If the recursion depth is constant, a depth-first simulation stores only a constant number of $L$-bounded calls, giving a space bound.

\begin{theorem}[Upper bounds under $L$-bounded executions (constant recursion depth)]
\label{thm:upper-workflow-space}
Fix any function $L(n)\ge n$.
Let $(\mathcal{S},\mathcal{F})$ be any recursive agentic system.
For any index $r\in\{1,\ldots,m\}$, if $\phi_r^{\mathcal{S},\mathcal{F}}$ decides a language under $L(n)$-bounded execution for input of length $n$ (\Cref{def:bounded-space}) and the recursion stack depth is $D(n)=\mathcal{O}(1)$ throughout evaluation of $\phi_r^{\mathcal{S},\mathcal{F}}$, then the decided language lies in $\mathsf{DSPACE}^{\mathcal{F}}\bigl(\mathcal{O}(L(n))\bigr)$.

In particular, if every generator/tool in $\mathcal{F}$ is computable by a deterministic Turing machine in time $2^{\mathcal{O}(L(n))}$ and work space $\mathcal{O}(L(n))$ on all queries of length at most $L(n)$, then the language lies in $\mathsf{DSPACE}(\mathcal{O}(L(n)))$.
\end{theorem}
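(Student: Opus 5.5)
We will prove \Cref{thm:upper-workflow-space} by a direct depth-first simulation of the recursive call tree with an oracle Turing machine that has access to $\mathcal{F}$. The machine keeps an explicit stack of call frames. Each frame records the full local configuration of one scaffold invocation, as formalized in \Cref{app:oracle-tm}: the scaffold index $j\in[m]$, its finite-control state, its work tape contents with head positions, and any query or answer string it currently holds. By $L(n)$-boundedness (\Cref{def:bounded-space}), each frame fits in $\mathcal{O}(L(n))$ cells; the index and control state add only a constant. The stack has depth at most $D=\mathcal{O}(1)$, so the whole stack occupies $\mathcal{O}(D\cdot L(n))=\mathcal{O}(L(n))$ space. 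This is the point where constant depth is used. In \Cref{thm:upper-workflow-space-depth}, by contrast, the stack may be exponentially deep and one instead bounds the number of distinct frame configurations.

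The simulation loop is as follows.
\begin{enumerate}[leftmargin=*,itemsep=2pt,topsep=2pt]
\item \emph{Deterministic steps.} Advance the top frame one step of its scaffold's control.
\item \emph{$\mathsf{GEN}_\ell(u)$.} Copy $u$ (with $|u|\le L(n)$, since it is held in the frame) to the oracle tape for $f_\ell$ and read the answer back into the frame. The answer must fit, because the frame stores it within the $L$ budget.
\item \emph{$\mathsf{SELF}_j(u)$.} Push a fresh frame for $S_j$ initialized with input $u$.
\item \emph{Halting with output $z$.} Pop the frame and deliver $z$ into the parent frame as the answer of its pending $\mathsf{SELF}$ query; again $z$ fits, by the parent's bound.
\item \emph{Root halt.} When the root frame halts with an accept or reject symbol, output it.
\end{enumerate}

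Correctness comes from the least-fixpoint semantics of \Cref{app:fixpoint}: on inputs where $\phi_r^{\mathcal{S},\mathcal{F}}$ is defined, the call tree is finite and every call returns. The depth-first simulation therefore halts with exactly the induced value, by induction on the finite call tree. Since the language is decided, $\phi_r$ is defined on every input, so this gives membership in $\mathsf{DSPACE}^{\mathcal{F}}(\mathcal{O}(L(n)))$.

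Two accounting points need care, and I expect the second to be the main obstacle.
\begin{itemize}[leftmargin=*,itemsep=2pt,topsep=2pt]
\item The relativized space convention must let oracle query tapes be bounded (here by $L(n)$), and it must not charge stored frames beyond their $L$ budget. Measuring frames exactly as \Cref{def:bounded-space} does handles this.
\item A $\mathsf{DSPACE}$ machine is expected to halt. Halting on all inputs follows from the language being decided. If a halting-by-design machine is wanted anyway, each frame has at most $2^{\mathcal{O}(L)}$ configurations between queries, so a clock of $\mathcal{O}(L)$ bits per frame detects loops. Either way, space stays $\mathcal{O}(L(n))$ because the clock is per frame and depth is constant.
\end{itemize}

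For the ``in particular'' clause, replace each oracle query by an in-place run of the deterministic machine for $f_\ell$ on a query of length at most $L(n)$. That run uses work space $\mathcal{O}(L(n))$, and the space is reused across queries. Only one generator computation is active at a time, so the total space remains $\mathcal{O}(L(n))$, giving $\mathsf{DSPACE}(\mathcal{O}(L(n)))$. The time hypothesis on $f_\ell$ is not needed for the space bound.
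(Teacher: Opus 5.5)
Your proposal is correct and takes the same approach as the paper. Both use a depth-first oracle-TM simulation that keeps a stack of at most $\mathcal{O}(1)$ scaffold frames, each of size $\mathcal{O}(L(n))$ by $L(n)$-boundedness, and both eliminate the oracle by running each generator's $\mathcal{O}(L(n))$-space machine in reused work space; your extra remarks on fixpoint correctness, halting, and oracle-tape accounting only spell out what the paper's shorter argument leaves implicit.
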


See \Cref{app:proof-upper-workflow-space-depth,app:proof-upper-workflow-space} for the proofs.

Together with the lower bounds in \Cref{sec:power}, these results show that recursion, rather than the choice of controller, is the source of the gain: richer fixed controllers may be useful in practice, but they do not asymptotically exceed the simple \texttt{call}/\texttt{return} model under the same local-space and depth bounds.

\section{Experiments}

We validate recursive models in two complementary settings. First, we evaluate end-to-end performance on SAT, where we fine-tune a pretrained base model and recursive calls implement backtracking search. Second, we evaluate 4x4 Go position traces generated by an exact game-tree solver, where a model is trained from scratch and evaluated by trace-level accuracy. Code is available at \href{https://github.com/chr26195/RecursiveModel}{\texttt{chr26195/RecursiveModel}}.

\begin{table}[t]
\centering
\caption{Accuracy (\%) on SAT instances. Baseline results from \citet{wei2025satbench}. Ours is fine-tuned from Qwen2.5-3B-Instruct.}
\label{tab:main_results}
\begin{tabular}{l@{\hspace{12pt}}c@{\hspace{12pt}}c@{\hspace{12pt}}c}
\toprule
\textbf{Model} & \textbf{Easy} & \textbf{Medium} & \textbf{Hard} \\
\midrule
\textit{Random Baseline} & \textit{50.0} & \textit{50.0} & \textit{50.0} \\
\midrule
DeepSeek-Distill-14B & 84.3 & 55.2 & 46.4 \\
LLaMA3.3-70B & 65.1 & 58.1 & 52.9 \\
Qwen3-235B & 88.0 & 64.8 & 51.4 \\
GPT-4o & 69.9 & 55.2 & 48.8 \\
\midrule
Recursive Model (ours) & 98 & 95 & 64 \\
\bottomrule
\end{tabular}

\vspace{0.8em}

\caption{Trace accuracy (\%) on 4x4 Go. IID uses held-out random positions from the same distribution as training; Length-OOD trains on shorter solver traces and evaluates on longer traces.}
\label{tab:go_trace_exactness}
\setlength{\tabcolsep}{5pt}
\begin{tabular}{lcc}
\toprule
\textbf{Method} & \textbf{IID} & \textbf{Length-OOD} \\
\midrule
CoT & 73.4 & 1.0 \\
PENCIL & 71.0 & 5.6 \\
Recursive Model (ours) & \textbf{91.8} & \textbf{38.5} \\
\bottomrule
\end{tabular}
\vspace{-0.5em}
\end{table}

\subsection{Experimental Setup}
\label{sec:setup}

\paragraph{Training setup.} We train on supervised reasoning traces rather than final answers alone, but use different model regimes for the two experiments. For SAT, we fine-tune a pretrained Qwen2.5-3B-Instruct model. For Go, we train a 4-layer, 4-head decoder-only Transformer with width 256 (3.18M parameters) from scratch on traces produced by the exact solver. In both settings, the trace specifies not only the final answer but also the local decisions made during the recursive computation.

\paragraph{Training examples and loss.} We convert each trace into next-token prediction data by replaying its execution. At each step, the conditioning text is the context that would be visible to the method being trained: for the recursive model, this is the current active frame; for single-context baselines, it is the corresponding rendered single sequence. The target is the next local continuation in that context, ending at a \texttt{call}, a \texttt{return}, or the final answer. Scaffold-provided tokens, such as a child answer inserted back into the parent after a return, may appear in the context but are not included in the loss. We minimize the standard decoder-only language-modeling loss on supervised continuation tokens only. Thus, for the recursive model, suspended parent or child frames are not concatenated into the conditioning text; equivalently, this visibility constraint can be implemented with an attention mask.

\paragraph{SAT.} We evaluate on SAT, a canonical NP-complete problem: given Boolean variables and clauses, decide whether some assignment satisfies all clauses. The recursive model is trained to follow a DPLL-style backtracking trace. Each frame contains the original puzzle, the current partial assignment, and the clauses simplified under that assignment. It returns \texttt{Yes} or \texttt{No} if the branch is already solved or contradictory; otherwise, it emits a \texttt{call} with one extra variable assignment. The child solves this restricted formula, and the parent either accepts the satisfying branch or tries the opposite assignment. Thus a long search tree is executed as many small local decisions rather than one monolithic transcript. We adopt instances from \citet{wei2025satbench}, converted to natural language puzzles, and generate traces in this recursive format. Details appear in \Cref{app:data-generation}.

For the SAT fine-tuning experiment, we make two practical adaptations. First, upon \texttt{return}, we preserve the subtask description and answer in the parent context so the parent knows what was asked and solved. Second, we prepend the root problem to every recursive context so all subtasks retain access to the global objective. See \Cref{app:algorithm-implementation} for details.

\paragraph{Go.} We also test 4x4 Go position evaluation: decide whether the player to move can force a win. The exact solver first labels terminal states by area scoring, propagates forced \texttt{Win}/\texttt{Lose} values backward through the finite game graph, and assigns all remaining states the draw value \texttt{U}. We train only on roots whose canonical proof trace queries \texttt{Win}/\texttt{Lose} states. The recursive trace asks the model to reproduce this proof tree through \texttt{call}/\texttt{return}: a parent calls a child board, the child returns only \texttt{Win} or \texttt{Lose}, and the parent uses that value to keep searching or return its own label. As in the implementation, child proofs stay in separate frames: the parent receives the child result, not the full child trace. We train matched CoT and PENCIL~\citep{yang2025pencil} baselines and our recursive model on the same traces. PENCIL is a single-context context-management baseline: the computation stays in one running context, but the model can erase intermediate reasoning and keep summaries. All methods use the same 3.18M-parameter decoder-only Transformer and 64K updates. The IID split uses 80K/10K positions; length-OOD trains on shorter traces and tests on longer ones. See \Cref{app:go-setup} for details.

\subsection{Results}

\paragraph{SAT Accuracy.} We fine-tune Qwen2.5-3B-Instruct with our recursive framework (see \Cref{app:data-generation,app:training} for data splits and training details) and compare against frontier LLMs with standard prompting, including GPT-4o, LLaMA3.3-70B, and Qwen3-235B. Table~\ref{tab:main_results} reports end-to-end answer accuracy: an instance is counted as correct only when the final satisfiable/unsatisfiable answer is correct. The model is trained only on easy and medium instances, while the hard split contains more clauses and is held out from training; thus hard accuracy tests whether the learned backtracking procedure transfers to harder searches.
As Table~\ref{tab:main_results} shows, the prompted baselines degrade as difficulty increases, with hard-instance accuracy close to chance. In contrast, our recursive model achieves 98\% on easy and 95\% on medium instances, substantially outperforming the baselines. More importantly, it reaches 64\% on hard instances despite never training on that difficulty level. This suggests that the gain is not only from fitting the answer distribution, but from learning to execute the recursive backtracking structure on harder formulas.

\paragraph{Go Accuracy.} Table~\ref{tab:go_trace_exactness} reports trace accuracy for Go: whether the model reproduces the solver's search trace on a held-out position. On IID positions, all three methods learn nontrivial traces, but the recursive format is strongest, reaching 91.8\% compared with 73.4\% for CoT and 71.0\% for PENCIL. The gap is larger in the length-OOD split, where test boards require longer searches than those seen during training. In this setting, CoT and PENCIL rarely reproduce the full trace (1.0\% and 5.6\%), while the recursive model remains at 38.5\%. \Cref{fig:go-ood-trace-curve} shows the same pattern over training: the recursive model converges faster on the training split, and this faster fit is accompanied by substantially better trace accuracy on the length-OOD test split.

\begin{figure}[t]
  \centering
  \includegraphics[width=0.9\columnwidth]{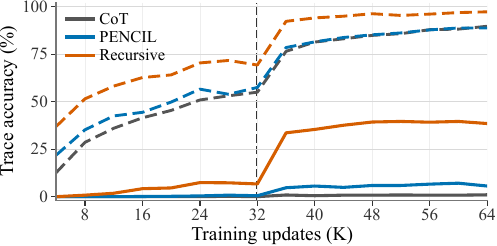}
  \vspace{-0.5em}
  \caption{Length-OOD Go trace accuracy during training. Dashed lines evaluate on the training split; solid lines evaluate on the length-OOD test split. The vertical dashed line marks the learning-rate drop from $3\times 10^{-4}$ to $3\times 10^{-5}$ at 32K updates.}
  \label{fig:go-ood-trace-curve}
  \vspace{-0.8em}
\end{figure}

\begin{figure}[t]
  \centering
  \includegraphics[width=0.9\columnwidth]{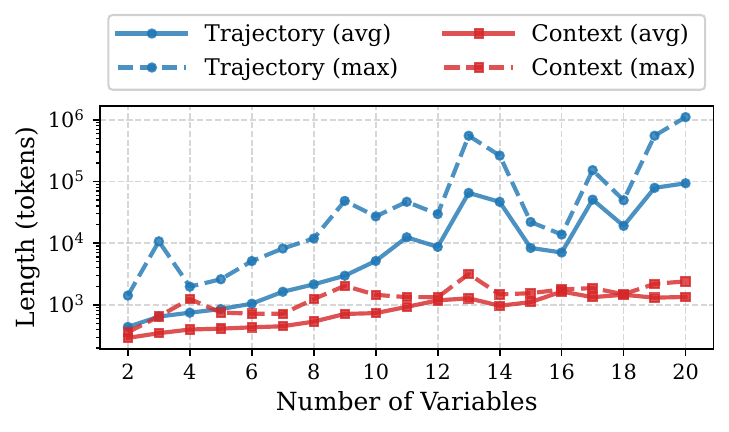}
  \vspace{-0.5em}
  \caption{Trajectory length vs.\ active context length.}
  \label{fig:context_length}
  \vspace{-1em}
  \end{figure}

We also measure end-to-end final-label accuracy on the IID split, which ignores the trace and checks only whether the rollout ends with the correct root \texttt{Win}/\texttt{Lose} label. This number is high for all three methods (96.9\%, 96.5\%, and 99.2\%). We do not view this alone as evidence that all methods learned the intended search: because the label is binary and the board is small, a model can sometimes reach the right label through shortcuts or through an invalid partial trace that happens to end with the right answer. As a stricter rollout check, we count a final label only when the generated trace can be parsed and replayed as a valid solver proof: each proposed move must be legal, child returns must agree with the exact child labels, and parent \texttt{Win}/\texttt{Lose} decisions must follow the game-tree rule. Under this check, the recursive model is again strongest: 96.9\%, compared with 81.6\% for CoT and 90.6\% for PENCIL.

\paragraph{Context Efficiency.} Across both tasks, recursive models separate total work from active context. Define \emph{trajectory length} as the total tokens generated across all recursive calls, and \emph{active context length} as the maximum number of visible tokens used at any step. Figure~\ref{fig:context_length} shows that trajectory length grows rapidly with problem size, while active context length stays bounded. For Go, we measure active context directly from each method's visibility mask on all 10K IID held-out roots. The maximum is 16,356 tokens for CoT, 595 for PENCIL, and 54 for the recursive model, a $303\times$ reduction over CoT and an $11\times$ reduction over PENCIL.

\section{Discussion}
\label{sec:discussion}

\subsection{Inference Efficiency}

Recursion significantly reduces inference cost by decoupling stack capacity from attention cost. Any single-context model, even those with proper context management strategies such as summarization, must attend to all preceding tokens in the sequence, incurring $\mathcal{O}(|\mathbf{x}_t|)$ FLOPs with KV cache at each step $t$. In contrast, recursive models bound the active context to $|\mathbf{S}_t[-1]| \le \mathsf{LS}(\mathbf{S}_t)$, therefore requiring only $\mathcal{O}(\mathsf{LS}(\mathbf{S}_t))$ FLOPs per token. This is a $\mathsf{GS}(\mathbf{S}_t)/\mathsf{LS}(\mathbf{S}_t)$ times speedup over the baseline that works on a single sequence, and larger speedup compared with standard CoT that does not manage the context at all. To achieve this speedup, we assume in implementation, KV caches of suspended contexts are stored in external storage and restored upon return, avoiding recomputation.

\subsection{Heterogeneous Model Selection and Tool-Use}

The recursive structure of recursive models naturally supports heterogeneous model selection~\citep{ye2025xmas,zhang2025evoflow,agashe2025agents2}: instead of always calling itself, the model can invoke different models to handle different subtasks, such as larger models for complex reasoning and smaller models for routine operations. This strikes a natural tradeoff between capability and cost, allowing the overall expense and latency to scale with actual task complexity rather than being dominated by the most expensive model in the system.

\subsection{Error Accumulation}

A potential risk of recursive models is error accumulation: mistakes in subtasks may propagate and corrupt the final answer, especially as recursion depth grows. This concern, however, is not unique to recursion: if CoT produces the same long trajectory as recursive models, a single mistake could propagate as well. Moreover, recursive models offer partial mitigation that CoT lacks: upon \texttt{return}, the intermediate reasoning within a subtask is discarded, so errors made there do not pollute sibling or parent computations.

\section{Related Work}
\label{sec:related-work}

\paragraph{Recursion in Language Modeling.}
Some prior work has explored the idea of recursion in language models. However, these approaches are limited in several ways. \textbf{First}, many methods only support shallow recursion (depth $= 1$) or context folding~\citep{sun2025scalinglonghorizonllmagent,zhang2025recursive,pan2025learning}, which we prove in \Cref{thm:rlm-simulates-sum} to be no more powerful than summarization-based single-context models. The concurrent work of \citet{zhang2025recursive} focuses on decomposing long inputs, whereas our work studies recursive organization of the reasoning process and proves why recursion depth is the key resource. \textbf{Second}, many rely on prompting frozen models to follow recursive patterns~\citep{schroeder2025thread,prasad2024adapt,zhang2025recap}. \textbf{Third}, prior work often targets specific scenarios: arithmetic with fixed recursive patterns~\citep{lee2023recursion}, rigid Planner-Executor architectures~\citep{prasad2024adapt,zhang2025recap}, or context extension via input chunking~\citep{zhang2025recursive}. This paper provides a general formalization of recursive models, both in its simplest form and generalized form in agentic systems. Our theoretical analysis highlights the critical role of recursion depth: constant-depth recursion offers no advantage over single-context models, whereas unbounded depth unlocks exponentially greater computational power.

\paragraph{Agentic Systems and Context Management.}
LLM-based agentic systems (see \citet{gao2025survey,wang2024survey} and references therein) provide a natural setting for recursion: they decompose tasks into modular subtasks handled by agents or tools, often in separate contexts. A related line of work studies how to keep long computations within a bounded context, including \emph{summarization} that compresses context into compact representations~\citep{yang2025pencil,yu2025memagent,zhou2025mem1,yan2025memory,wu2025resumunlockinglonghorizonsearch}, and \emph{memory augmentation} that maintains external storage for retrieval~\citep{packer2024memgptllmsoperatingsystems,chhikara2025mem0buildingproductionreadyai,suzgun2025dynamic,xu2025mem}. These approaches address the same pressure from long contexts, but at different levels: agentic systems provide modular control, while context-management methods compress or retrieve information inside a run. Formal analysis remains limited; notable exceptions are \citet{yang2025pencil,yang2025diffusion}, which focus on summarization and diffusion models respectively.

\paragraph{Recursion in Classical Computation Theory.}
Although the idea that recursion depth and local space are fundamental computational resources has classical roots~\citep{savitch1977recursive,ginsburg1967stack,aho1969nested,engelfriet1991iterated,savitch1970relationships}, our work introduces recursion as an explicit design principle for Transformer-based reasoning and proves that constant-depth Transformers can realize the per-step logic at each recursion level (see \Cref{app:related-work} for detailed discussion).
More broadly, our results suggest that scaling LLM reasoning need not rely solely on extending context length: a lightweight recursive scaffold that requires no architectural changes can leverage bounded context exponentially more efficiently. Just as recursion transformed programming from flat instruction sequences to modular, composable programs, it may similarly transform LLM reasoning from monolithic chain-of-thought into structured, hierarchical computation.

\section{Conclusion}

We identify recursion as a core principle for overcoming context constraints and propose recursive models as a minimal yet powerful realization. We show that recursion exponentially reduces the required context length compared to single-context approaches, and this power is optimal among all recursive agentic systems. Experiments on SAT and controlled game-tree evaluation validate that models trained with recursive reasoning can significantly improve long-horizon reasoning while keeping active contexts small.

\section*{Impact Statement}

This paper presents a theoretical understanding of recursive models and suggests an approach to enhance the long-horizon reasoning capabilities of language models. We do not foresee any direct negative societal impact from this work, unless AI systems are employed for unethical purposes, which is a general concern applicable to all advances in machine learning.

\bibliography{example_paper}
\bibliographystyle{icml2026}

\newpage
\appendix
\onecolumn

\section{Recursion in Classical Computation Theory}
\label{app:related-work}

The idea that recursion depth and local space are fundamental computational resources has deep roots in classical theory.
Most directly related to our work, \citet{savitch1977recursive} formally extended Turing machines with recursive subroutine calls---each call receives its own workspace and returns a result to the caller, mirroring the call/return and context-stack mechanism of our recursive models. Savitch studied the time and storage overhead of recursion, showing that $t$ steps of a recursive TM can be simulated in $O(t)$ steps on a multitape TM, and used this framework to re-derive the $\mathsf{NSPACE}(S) \subseteq \mathsf{DSPACE}(S^2)$ result of Savitch's theorem~\citep{savitch1970relationships}---whose proof is itself a recursive subroutine with bounded stack depth, where recursion depth times per-level workspace yields the total space upper bound, foreshadowing our local-vs-global space decomposition.
The key difference is that Savitch's recursive TM reads one tape cell per step ($O(1)$ communication), and therefore already captures $\mathsf{SPACE}(S)$ without needing deep recursion.
Our recursive model replaces the TM head with a bounded-context Transformer that attends to all $S(n)$ tokens per step; it is this architectural constraint that makes deep recursion necessary to recover the same computational power.
Stack automata~\citep{ginsburg1967stack} extend pushdown automata by allowing the head to read within the stack, and nested stack automata~\citep{aho1969nested} further allow the creation and destruction of substacks, yielding a stack-of-stacks mechanism reminiscent of our context stack.
\citet{engelfriet1991iterated} studied iterated (higher-order) pushdown storages and established an iterated-exponential hierarchy in computational power as the storage order increases---a phenomenon consistent with our \Cref{thm:space-efficiency} and \Cref{thm:rlm-simulates-sum}.
The alternation theorem~\citep{chandra1981alternation}, which we directly use in our proofs, connects alternating computation to space complexity via a recursive evaluation of configuration games.

Our contribution relative to this classical line of work is twofold.
First, we introduce recursion as an explicit design principle for Transformer-based reasoning, formalizing how bounded-context language models can overcome their attention bottleneck through recursive self-invocation.
Second, we prove \emph{Transformer realizability}: a fixed constant-depth, constant-size Transformer with $\mathcal{O}(\log S(n))$ precision can implement the per-step logic at each recursion level, serving as the transition function of a recursive machine. This bridges the classical recursion-theoretic framework with the concrete capabilities of modern neural architectures.

\section{Experimental Setup (SAT)}
\label{app:experimental-setup}

\subsection{Data Generation}
\label{app:data-generation}

We directly use the SAT instances from \citet{wei2025satbench}, which are Boolean formulas in conjunctive normal form (CNF). Each instance is converted to a natural language puzzle where variables map to real-world entities and clauses become narrative constraints. The dataset contains instances of varying difficulty based on the number of clauses: easy (4--19 clauses), medium (20--30 clauses), and hard (31--50 clauses).

For each instance, we generate a recursive reasoning trace by running the DPLL algorithm. At each step, the algorithm picks an unassigned variable and tries assigning it to True. After each assignment, we check for conflicts: either a clause becomes empty (directly violated), or unit clauses force the same variable to both True and False. If a conflict is detected, the algorithm backtracks and tries False. We emit \texttt{<call>} when branching and \texttt{<return>} when returning. These traces are used for supervised fine-tuning.

For training, we select only easy and medium instances with at most 15 variables. For evaluation, we randomly sample 100 held-out instances from each difficulty level (easy, medium, hard) without any filtering.

\subsection{Training Configuration}
\label{app:training}

We fine-tune from Qwen2.5-3B-Instruct~\citep{qwen2025qwen25technicalreport}, a decoder-only Transformer with 3 billion parameters. We use the AdamW optimizer with a learning rate of $1 \times 10^{-5}$ and cosine decay schedule. The batch size is 16 with gradient checkpointing enabled. We train for 10 epochs with a maximum context length of 4096 tokens (left truncation for sequences exceeding this limit). Training is conducted on 2$\times$ NVIDIA H200 GPUs and takes approximately 8 hours.

\subsection{Implementation}
\label{app:algorithm-implementation}

When \texttt{<call>} is generated, only the reasoning before the tag is preserved in the parent context; the tag content becomes the child's \texttt{current\_task}. When \texttt{<return>} is generated, the subtask description and return value are appended to the parent context in the format ``\textit{subtask\_desc}. The answer is: \textit{result}.'' while the intermediate reasoning within the subtask is discarded. Each context is wrapped with the following template:
\begin{lstlisting}[basicstyle=\ttfamily\small,breaklines=true]
[Instructions]
Solve problems recursively. Use <call> </call> to decompose 
the problem and <return> </return> to return the answer.

[Root Problem]
{root_problem}

[Current Task]
{current_task}
\end{lstlisting}
The \texttt{root\_problem} field contains the original problem statement and remains constant across all recursion levels. The \texttt{current\_task} field contains the subtask description for the current level (equal to \texttt{root\_problem} at the top level).

\subsection{Sample Problems}
\label{app:sample-problems}

Each training sample consists of three parts: \texttt{user} (the prompt), \texttt{assistant\_prefix} (previous reasoning to continue from), and \texttt{assistant\_content} (the target to predict). Below we show two representative turns.

\paragraph{Example 1: \texttt{<call>} at root level.}
The model analyzes clauses and branches on a variable:
\begin{lstlisting}[basicstyle=\ttfamily\small,breaklines=true]
# user
[Instructions]
Solve problems recursively. Use <call> </call> to decompose the problem and <return> </return> to return the answer.

[Root Problem]
Five scientists--Alice, Bob, Carol, Dave, and Eve--are considering whether to join a new research project. Each scientist makes an independent decision about their participation. They may choose to join or not join the project regardless of others' choices.

Conditions:
1. Either Alice joins the project or Carol does not join.
2. Either Alice does not join or Carol joins.
3. Either Alice joins or Carol joins.
4. Either Alice does not join or Carol does not join.
5. Either Carol does not join or Eve does not join.

Is there a way to assign decisions so all these conditions are satisfied?

[Current Task]
Is there a way to assign decisions so all these conditions are satisfied?

# assistant_prefix
(empty)

# assistant_content
Condition 1:
  Clause: (Alice v ~Carol)
  (no simplification needed)
  -> (not unit)
Condition 2:
  Clause: (~Alice v Carol)
  (no simplification needed)
  -> (not unit)
Condition 3:
  Clause: (Alice v Carol)
  (no simplification needed)
  -> (not unit)
Condition 4:
  Clause: (~Alice v ~Carol)
  (no simplification needed)
  -> (not unit)
Condition 5:
  Clause: (~Carol v ~Eve)
  (no simplification needed)
  -> (not unit)
No unit clause found. Unassigned: [Alice, Carol, Eve]
Try Alice = True
<call>Alice=True</call>
\end{lstlisting}

\paragraph{Example 2: \texttt{<return>} at leaf node.}
The model detects a contradiction and returns:
\begin{lstlisting}[basicstyle=\ttfamily\small,breaklines=true]
# user
[Instructions]
Solve problems recursively. Use <call> </call> to decompose the problem and <return> </return> to return the answer.

[Root Problem]
Five scientists--Alice, Bob, Carol, Dave, and Eve--are considering whether to join a new research project. Each scientist makes an independent decision about their participation. They may choose to join or not join the project regardless of others' choices.

Conditions:
1. Either Alice joins the project or Carol does not join.
2. Either Alice does not join or Carol joins.
3. Either Alice joins or Carol joins.
4. Either Alice does not join or Carol does not join.
5. Either Carol does not join or Eve does not join.

Is there a way to assign decisions so all these conditions are satisfied?

[Current Task]
Alice=True, Carol=True

# assistant_prefix
(empty)

# assistant_content
Given: Alice=True, Carol=True
Condition 1:
  Clause: (Alice v ~Carol)
  -> satisfied
Condition 2:
  Clause: (~Alice v Carol)
  -> satisfied
Condition 3:
  Clause: (Alice v Carol)
  -> satisfied
Condition 4:
  Clause: (~Alice v ~Carol)
  Simplify as: () -> CONFLICT
Contradiction!
<return>No</return>
\end{lstlisting}

\paragraph{Example 3: \texttt{<call>} with prefix (backtracking).}
After trying Alice=True and receiving ``No'', the model backtracks and tries Alice=False:
\begin{lstlisting}[basicstyle=\ttfamily\small,breaklines=true]
# user
[Instructions]
Solve problems recursively. Use <call> </call> to decompose the problem and <return> </return> to return the answer.

[Root Problem]
Five scientists--Alice, Bob, Carol, Dave, and Eve--are considering whether to join a new research project. Each scientist makes an independent decision about their participation. They may choose to join or not join the project regardless of others' choices.

Conditions:
1. Either Alice joins the project or Carol does not join.
2. Either Alice does not join or Carol joins.
3. Either Alice joins or Carol joins.
4. Either Alice does not join or Carol does not join.
5. Either Carol does not join or Eve does not join.

Is there a way to assign decisions so all these conditions are satisfied?

[Current Task]
Is there a way to assign decisions so all these conditions are satisfied?

# assistant_prefix
Condition 1:
  Clause: (Alice v ~Carol)
  (no simplification needed)
  -> (not unit)
Condition 2:
  Clause: (~Alice v Carol)
  (no simplification needed)
  -> (not unit)
Condition 3:
  Clause: (Alice v Carol)
  (no simplification needed)
  -> (not unit)
Condition 4:
  Clause: (~Alice v ~Carol)
  (no simplification needed)
  -> (not unit)
Condition 5:
  Clause: (~Carol v ~Eve)
  (no simplification needed)
  -> (not unit)
No unit clause found. Unassigned: [Alice, Carol, Eve]
Try Alice = True
Alice=True. The answer is: No.

# assistant_content
Try Alice = False
<call>Alice=False</call>
\end{lstlisting}

\clearpage
\section{Experimental Setup (Go)}
\label{app:go-setup}

\subsection{Task and Trace Generation}
\label{app:go-data}

We model the Go experiment as a finite, perfect-information game graph. A state records the board position, player to move, and ko state under a fixed 4x4 placement-Go rule set. Terminal states are positions with no legal placement move. They are scored by area scoring with komi $1/2$, so every terminal state has a strict board winner; the terminal value is \texttt{Win} if that winner is the player to move and \texttt{Lose} otherwise. All values below are relative to the player to move.

We then propagate these terminal values backward through the game graph. Formally, we construct
\[
V:\mathcal{S}\to\{\texttt{Win},\texttt{Lose},\texttt{U}\}.
\]
Starting from the terminal values, we repeatedly apply the standard forcing rules: a nonterminal state is marked \texttt{Win} once it has some legal successor marked \texttt{Lose}, and is marked \texttt{Lose} once all of its legal successors have been marked \texttt{Win}. When this propagation reaches a fixed point, every still-unmarked state is assigned \texttt{U}. Thus \texttt{U} is the draw value: it consists exactly of positions whose outcome is not certified as a finite forced win or finite forced loss by this propagation. \texttt{U} is not used as a supervised target in the clean experiment.

The supervised object is not the full descendant game graph, but a canonical proof trace for the root value. We fix an order on legal moves. A \texttt{Win} state is certified by recursively proving the first \texttt{Lose} child in that order. A \texttt{Lose} state is certified by recursively proving all legal children, each of which must be \texttt{Win}. We keep a root only when this canonical trace is finite and every state queried by the trace has value \texttt{Win} or \texttt{Lose}. This is a proof-trace filter rather than a full-subtree filter: a kept \texttt{Win} root may have unqueried moves leading to \texttt{U}, while a kept \texttt{Lose} root must certify all legal children.

For each kept root, CoT, PENCIL, and the recursive model are trained on different renderings of this same canonical trace. CoT linearizes the depth-first proof into one flat transcript. PENCIL keeps one running context, but compresses completed subproofs to their returned \texttt{Win}/\texttt{Lose} values. The recursive rendering places each subproof in its own \texttt{call}/\texttt{return} frame; after a child returns, the parent receives only the child value. Thus the methods share the same roots, values, and proof supervision, and differ only in how much of the proof history remains visible. During training, inserted return values may appear as context after a subproof completes, but the loss is applied only to the local continuation tokens generated in the active rendering. At evaluation time, return values are generated by the model; the exact value table is used only for scoring.

\subsection{Data Splits}
\label{app:go-splits}

The IID Go split contains 90K root positions: 80K roots for training and 10K held-out roots for evaluation. The table in the main text reports the final checkpoint after 64K training updates. IID trace accuracy is computed on the first 1K held-out roots, and IID rollout end-to-end accuracy is computed on 256 held-out roots.

For the context-efficiency statistic in the main text, we use all 10K IID held-out roots. Active context length is the maximum number of tokens visible to any next-token prediction within a trace: the causal prefix for CoT, the current single-context state for PENCIL, and the active frame for the recursive model.

We also construct a length-OOD diagnostic split by sorting roots by the length of their flat CoT trace and training on the shortest 80K roots while evaluating on the longest 20K roots. The main table reports length-OOD trace accuracy on the first 1K held-out roots from this split. The rollout final-label metrics in the main text are evaluated separately on 256 IID held-out roots, matching the IID rollout protocol above. This split is harder because test proofs require much longer searches: flat CoT traces average 677 tokens in training and 6,956 tokens in testing, while recursive traces average 997 total tokens over 34.6 frames in training and 9,876 total tokens over 316.4 frames in testing.

\subsection{Model and Optimization}
\label{app:go-training}

All Go methods use the same decoder-only Transformer trained from scratch. The model has 4 layers, 4 attention heads, hidden width 256, RoPE positional embeddings, vocabulary size 104, and 3,176,960 parameters.

We train CoT, PENCIL, and Recursive Model checkpoints for 64K updates with AdamW, weight decay 0.1, gradient clipping at 1.0, and bfloat16 training. The learning rate is $3\times 10^{-4}$ for the first 32K updates and $3\times 10^{-5}$ for updates 32K--64K, with no warmup and no cosine decay. CoT uses ordinary causal attention; PENCIL and the recursive model use attention masks matching their visible contexts.

\subsection{Evaluation Metrics}
\label{app:go-eval}

We report trace accuracy and two rollout final-label metrics. \emph{Trace accuracy} is a teacher-forced exact-match metric over the full canonical solver trace: a held-out root is counted correct only if every supervised token in the rendered trace is predicted correctly from the gold visible context. This is the primary metric for measuring whether the model learned the solver trajectory.

\emph{True end-to-end accuracy} is computed by autoregressively rolling out the trained model and checking only whether the final root \texttt{Win}/\texttt{Lose} answer matches the exact solver label. \emph{Strict end-to-end accuracy} additionally requires the generated rollout to parse as a legal and consistent solver trace before the final answer is counted. Concretely, every generated call must contain a parseable child board, the preceding \texttt{TRY} move must be legal from the current board, the emitted child board must equal the board obtained by applying that move, and the rollout must return a parseable root label. True end-to-end accuracy is useful as an outcome measure, but is less diagnostic because the final binary label can be easier than reproducing the full solver trace.

\section{Transformer Architecture}
\label{app:transformer-architecture}

We define the decoder-only Transformer architecture used throughout this paper. Let $\Sigma$ be a finite vocabulary and $d$ be the hidden dimension.

\paragraph{Token and Positional Embeddings.}
A \emph{token embedding} $\mathsf{TE}: \Sigma \to \mathbb{R}^d$ maps each token to a $d$-dimensional vector. A \emph{positional embedding} $\mathsf{PE}: \mathbb{N}^+ \to \mathbb{R}^d$ encodes position information. For an input sequence $(x_1, \ldots, x_n) \in \Sigma^n$, the initial embedding at position $i$ is $h_i^{(0)} = \mathsf{TE}(x_i) + \mathsf{PE}(i)$.

\paragraph{Attention.}
For query, key, and value vectors $(q, k_j, v_j)_{j=1}^n$ where $q, k_j \in \mathbb{R}^{d_k}$ and $v_j \in \mathbb{R}^{d_v}$, the attention output with temperature $\beta > 0$ is:
\begin{equation}
\mathsf{Attn}_\beta(q, \{k_j, v_j\}_{j=1}^n) = \sum_{j=1}^{n} \alpha_j v_j, \quad \text{where } \alpha = \mathrm{softmax}_\beta\left( (q \cdot k_j)_{j=1}^n \right),
\end{equation}
and $[\mathrm{softmax}_\beta(z)]_i = \exp(z_i/\beta) / \sum_j \exp(z_j/\beta)$.

\paragraph{Average-Hard Attention (AHA).}
Taking the zero-temperature limit $\beta \to 0$ yields average-hard attention~\citep{merrill2022saturated}, which uniformly averages over the maximum-scoring positions:
\begin{equation}
\mathsf{AHA}(q, \{k_j, v_j\}_{j=1}^n) = \frac{1}{|A|} \sum_{j \in A} v_j, \quad \text{where } A = \arg\max_{j \in [n]} \langle q, k_j \rangle.
\end{equation}
AHA involves only comparisons and uniform averaging, which can be computed exactly in finite precision. All theoretical results in this paper use AHA.

\paragraph{Multi-Head Self-Attention.}
A \emph{multi-head self-attention} layer with $H$ heads is parametrized by projection matrices $W_Q^h, W_K^h, W_V^h \in \mathbb{R}^{d_k \times d}$ and $W_O^h \in \mathbb{R}^{d \times d_k}$ for $h \in [H]$. For embeddings $(h_1, \ldots, h_n)$, the output at position $n$ is:
\begin{equation}
\mathsf{MHA}(h_1, \ldots, h_n) = \sum_{h=1}^{H} W_O^h \cdot \mathsf{AHA}\left(W_Q^h h_n, \{W_K^h h_j, W_V^h h_j\}_{j=1}^n\right).
\end{equation}
For decoder-only (causal) Transformers, position $n$ attends only to positions $j \le n$.

\paragraph{Feed-Forward Layer.}
A \emph{feed-forward} layer with width $d_{\mathsf{ff}}$ and activation $\sigma$ is defined as:
\begin{equation}
\mathsf{FF}(h) = W_2 \cdot \sigma(W_1 \cdot h + b_1) + b_2,
\end{equation}
where $W_1 \in \mathbb{R}^{d_{\mathsf{ff}} \times d}$, $W_2 \in \mathbb{R}^{d \times d_{\mathsf{ff}}}$, and $b_1, b_2$ are bias terms.

\paragraph{Transformer Layer.}
A single \emph{Transformer layer} combines multi-head attention and feed-forward with residual connections:
\begin{equation}
\mathsf{TF}(h_1, \ldots, h_n) = \mathsf{FF}(\tilde{h}_n) + \tilde{h}_n, \quad \text{where } \tilde{h}_n = \mathsf{MHA}(h_1, \ldots, h_n) + h_n.
\end{equation}

\paragraph{Next-Token Predictor.}
An $L$-layer decoder-only Transformer defines a next-token predictor $f_\theta: \Sigma^* \to \Sigma$ as:
\begin{equation}
f_\theta(x_1, \ldots, x_n) = \arg\max_{x \in \Sigma} \left[ W_{\mathsf{dec}} \cdot h_n^{(L)} \right]_x,
\end{equation}
where $h_n^{(L)}$ is the final-layer embedding at position $n$, computed by stacking $L$ Transformer layers on top of the initial embeddings, and $W_{\mathsf{dec}} \in \mathbb{R}^{|\Sigma| \times d}$ is the decoding matrix.

\paragraph{Precision.}
We say a Transformer has \emph{$\mathcal{O}(\log S(n))$ precision} if all intermediate numerical values (embeddings, attention scores, and feed-forward activations) are rational numbers $p/q$ with $|p|, |q| \le S(n)^C$ for a universal constant $C$, where $S(n)$ is the local space bound, i.e., the input sequence length to the Transformer. Equivalently, each value is representable in $\mathcal{O}(\log S(n))$ bits, and all arithmetic is exact with no rounding. This precision model is consistent with~\citet{yang2025pencil}: operations such as \texttt{seq\_sum} over $S(n)$ indicator values produce results bounded by $S(n)$, \texttt{seq\_max} preserves input magnitudes, and \texttt{rightmost\_exact\_match} concentrates attention on a single position, all within $\mathcal{O}(\log S(n))$-bit exact arithmetic.

\section{Single-Tape Turing Machine}
\label{app:turing-machine}

A single-tape Turing machine operates on an infinite tape indexed by $\mathbb{Z}$, where each cell holds a symbol from a finite tape alphabet $\Gamma$. A read/write head moves along the tape, and a finite set of control states governs the machine's behavior. Formally, a Turing machine is a 7-tuple $\mathsf{TM} = (\Gamma, b, Q, q_0, \delta, Q_{\mathrm{acc}}, Q_{\mathrm{rej}})$, where $b \in \Gamma$ is the blank symbol; $q_0 \in Q$ is the initial state; $\delta: (Q \setminus (Q_{\mathrm{acc}} \cup Q_{\mathrm{rej}})) \times \Gamma \to Q \times \Gamma \times \{-1, 0, +1\}$ is the transition function; and $Q_{\mathrm{acc}}, Q_{\mathrm{rej}} \subseteq Q$ are disjoint accepting and rejecting states.

\paragraph{Execution.} Given input $x \in (\Gamma \setminus \{b\})^n$, the tape is initialized with $x$ in cells $0, \ldots, n-1$ and blanks elsewhere; the head starts at position $0$ in state $q_0$. At each step, the machine reads the symbol $a$ under the head, computes $(q', w, d) = \delta(q, a)$, writes $w$, moves the head by $d \in \{-1, 0, +1\}$, and transitions to state $q'$. The machine halts upon entering $Q_{\mathrm{acc}} \cup Q_{\mathrm{rej}}$, outputting $1$ (accept) or $0$ (reject) accordingly.

\paragraph{Normalization.} To ensure configurations are well-defined for all $t \le T(n)$, we extend $\delta$ to halting states by making them self-loops: for all $q \in Q_{\mathrm{acc}} \cup Q_{\mathrm{rej}}$ and $a \in \Gamma$, define $\delta(q, a) := (q, a, 0)$. This does not change the language decided by $\mathsf{TM}$.

\paragraph{Complexity Classes.} The \emph{time complexity} $T(\mathsf{TM}, x)$ is the number of steps before halting. The \emph{space complexity} $S(\mathsf{TM}, x)$ is the number of distinct tape cells visited. A Turing machine $\mathsf{TM}$ \emph{decides} a language $L \subseteq \Sigma^*$ if it halts on all inputs and accepts exactly those in $L$. The complexity classes are defined as:
\begin{align}
\mathsf{TIME}(f(n)) &= \{L : \exists\, \mathsf{TM} \text{ deciding } L \text{ with } T(\mathsf{TM}, x) \le f(|x|) \text{ for all } x\}, \\
\mathsf{SPACE}(f(n)) &= \{L : \exists\, \mathsf{TM} \text{ deciding } L \text{ with } S(\mathsf{TM}, x) \le f(|x|) \text{ for all } x\}.
\end{align}
We write $\mathsf{TM}(s(n),t(n))$ for the simultaneous space-time class: languages decided by a single deterministic Turing machine whose space and time on every input $x$ are at most $\mathcal{O}(s(|x|))$ and $\mathcal{O}(t(|x|))$, respectively.

\section{Proof of Theorem~\ref{thm:space-efficiency}}
\label{app:proof-space-efficiency}

\begin{theorem}[Deep Recursive Models, Formal]
\label{thm:space-efficiency-formal}
For any $S(n) \ge n$, recursive models can solve any problem in $\mathsf{TIME}(2^{\mathcal{O}(S(n))})$ under local space constraint $\mathcal{O}(S(n))$:
\begin{equation}
\mathsf{TIME}(2^{\mathcal{O}(S(n))}) \subseteq \RM(\mathcal{O}(S(n)), \infty, \infty).
\end{equation}
\end{theorem}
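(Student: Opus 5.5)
Fix a language $L$ decided by a single-tape machine $\mathsf{TM}$ in time $T(n)=2^{cS(n)}$. By the normalization of \Cref{app:turing-machine}, halting states self-loop, so we may ask for the configuration at exactly time $T(n)$. The machine uses at most $T(n)$ cells, so every head position and every time index fits in $\log T(n)=\mathcal{O}(S(n))$ bits.

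The plan is to express the computation through the classical local-dependency recursion, the same one behind $\mathsf{TIME}(2^{\mathcal{O}(S)})=\mathsf{ASPACE}(\mathcal{O}(S))$. Define $\mathrm{Info}(t,i)$ as the triple (symbol in cell $i$ at time $t$; whether the head is there; if so, the state). This value depends only on $\mathrm{Info}(t-1,i-1)$, $\mathrm{Info}(t-1,i)$ and $\mathrm{Info}(t-1,i+1)$, through a fixed finite local rule derived from $\delta$. The base case is $t=0$, where the value is read directly from the input $\mathbf{x}$. Acceptance is determined by the halting state at time $T(n)$, which we locate by querying cells. To avoid an unbounded scan over $i$ at the root, I would instead track the head position $h(t)$ together with the state and the scanned symbol. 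Then $h(t)=h(t-1)+d$, where $d$ is computed from $\mathrm{Info}(t-1,h(t-1))$. So the root computes a ``head query'' $H(t)$ recursively, and each cell query $\mathrm{Info}(t,i)$ needs only its three parent queries.

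I would use the prompt-prefixing variant, so the input is visible in every frame. This is why the local space bound is $n+\mathcal{O}(S(n))$. At the end I would note how to drop it: the basic model can copy $\mathbf{x}$ into each call $\mathbf{q}$ at cost $\mathcal{O}(n)\le\mathcal{O}(S(n))$.

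Each frame then has the form $\callblock{\mathsf{cell},t,i}$ written in binary. The generator checks whether $t=0$; if so, it looks up $x_i$ (or a blank) and emits a return. Otherwise it emits up to three calls in sequence, one each for $(t-1,i-1)$, $(t-1,i)$ and $(t-1,i+1)$. After each call only the constant-size answer is appended to the parent. Finally it applies the local rule and returns the result. The frame length is $|\mathbf{x}|$ plus $\mathcal{O}(\log T)$ index tokens plus $\mathcal{O}(1)$ answers, which is $\mathcal{O}(S(n))$. The recursion depth is $T(n)+1$, which is allowed since $D=\infty$. Correctness follows by induction on $t$.

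The main obstacle is \emph{Transformer realizability}. We must show that a constant-size, $\mathcal{O}(\log S)$-precision average-hard-attention Transformer can produce each frame's tokens. The required operations are:
\begin{itemize}
\item binary decrement and increment of $t$ and $i$, including the boundary cases $i=0$ and $i-1<0$;
\item a zero test on $t$;
\item retrieving $x_i$ from a binary index $i$;
\item knowing which of the three calls it is currently issuing;
\item the finite lookup table for the local rule.
\end{itemize}
I would emit the binary counters bit by bit, least significant first. Each new bit is then determined by a fixed-offset lookback, found with a rightmost-exact-match attention to the corresponding bit of the copied index, plus a carry bit. Retrieving $x_i$ requires comparing a binary index to positions. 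I would handle this by first writing $i$ in unary-free form via the precision budget: $\log$-precision suffices to represent position numbers up to $S(n)^C$, while $i<S(n)$ whenever a cell holds input. Concretely, I would decode the binary $i$ into a numeric value with a seq\_sum-style attention (weighted sums of bits, each bit carried as a token whose embedding encodes $2^k$), then attend to the input position whose positional value equals it. These primitives follow the style of \citet{yang2025pencil}, and I would cite their lemmas for the basic attention building blocks.
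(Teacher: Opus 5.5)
Your decomposition is essentially the paper's first proof. The paper also uses recursive functions indexed by a binary time stamp and a tape position, keeps the input visible in every frame, appends a constant number of short answers per frame, and reaches recursion depth $\Theta(T(n))$. Your three-neighbour tableau rule for $\mathrm{Info}(t,i)$, in place of the paper's $\mathsf{STATE}/\mathsf{POS}/\mathsf{CELL}/\mathsf{SYMBOL}$, is an immaterial variation.

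\textbf{The gap is at the root.} You query the configuration at exactly time $T(n)=2^{cS(n)}$. That forces the fixed, constant-size Transformer to write $\mathrm{bin}(T(n))$ itself, that is, to compute $cS(n)$ from an input of length $n$, for every $n$.

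The theorem assumes only $S(n)\ge n$. It assumes neither space-constructibility nor even computability of $S$, so this step fails in general. Even for tame $S$ such as $n^2$, you would owe a separate argument that the Transformer can produce this number within local space $\mathcal{O}(S(n))$. (The paper's alternative ATM proof does need constructibility, but its primary proof is built to avoid it.)

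The fix is the paper's $\mathsf{RUN}(x,t)$ chain:
\begin{enumerate}
\item Start at $t=0$.
\item The frame for time $t$ obtains the state at time $t$; your $H(t)$ can return it together with $h(t)$.
\item If that state is halting, the frame returns accept or reject.
\item Otherwise it calls $\mathsf{RUN}(x,t+1)$ and passes that answer up.
\end{enumerate}
The bound $t\le T(n)$ then enters only the analysis. Frames have length $n+\mathcal{O}(\log t)=\mathcal{O}(S(n))$ and depth is $\mathcal{O}(T(n))$, without the model ever knowing $S$. This also makes the self-loop normalization unnecessary. With this change your argument goes through at the same level of detail as the paper's.

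Two smaller points on realizability:
\begin{enumerate}
\item The paper's tape is indexed by $\mathbb{Z}$, so $i-1<0$ is not a boundary case but an ordinary cell. You need a sign bit, or a preliminary reduction to a one-sided tape.
\item Weighting bits by $2^k$ is representable in $\mathcal{O}(\log S(n))$ precision only for $k=\mathcal{O}(\log S(n))$, while $i$ can have $\Theta(S(n))$ bits. Before decoding $i$ to look up $x_i$, you must first check that all higher-order bits are zero, and return the blank otherwise.
\end{enumerate}
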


The proof proceeds in two parts: \textbf{(1)} we define mutually recursive functions that compute TM configurations and prove their correctness; \textbf{(2)} we analyze the resource consumption (local space, recursion depth, and runtime). We further provides a sketch for constructing the Transformer but the detailed implementation is omitted. An alternative proof via Alternating Turing Machines appears in Appendix~\ref{app:proof-space-efficiency-alt}, which includes the detailed implementation for the corresponding Transformer.

\subsection{Recursive Construction}

Let $\mathsf{TM} = (\Gamma, b, Q, q_0, \delta, Q_{\mathrm{acc}}, Q_{\mathrm{rej}})$ be a single-tape Turing machine. We use time $t$ to denote the number of transitions already executed: $t = 0$ is the initial configuration, and transitioning from $t$ to $t+1$ executes the $t$-th transition.

\paragraph{Configuration.}
A \emph{configuration} of $\mathsf{TM}$ at time $t$ is a triple $c_t = (q_t, \tau_t, p_t)$ where:
\begin{itemize}[leftmargin=*, topsep=2pt, itemsep=1pt]
\item $q_t \in Q$ is the control state at time $t$;
\item $\tau_t: \mathbb{Z} \to \Gamma$ is the \emph{tape contents} at time $t$, mapping each cell index to a symbol, with $\tau_t(i) = b$ (the blank symbol) for all but finitely many $i$;
\item $p_t \in \mathbb{Z}$ is the head position at time $t$.
\end{itemize}
For a tape $\tau$ and position $p$, we write $\tau[p \mapsto w]$ for the tape that agrees with $\tau$ everywhere except at position $p$, where it holds symbol $w$. The initial configuration is $c_0 = (q_0, \tau_0, 0)$ where $\tau_0(i) = x[i]$ for $0 \le i < n$ and $\tau_0(i) = b$ otherwise.

\paragraph{Recursive functions.}
We define the following mutually recursive functions that compute the components of $c_t$. Let $x \in (\Gamma \setminus \{b\})^n$ be the input.
\begin{itemize}[leftmargin=*, topsep=4pt, itemsep=2pt]
\item $\mathsf{STATE}(x, t) \in Q$: returns the control state $q_t$
\item $\mathsf{POS}(x, t) \in \mathbb{Z}$: returns the head position $p_t$
\item $\mathsf{CELL}(x, t, p) \in \Gamma$: returns the tape symbol $\tau_t(p)$ at position $p$
\item $\mathsf{SYMBOL}(x, t) \in \Gamma$: returns the symbol under the head $\tau_t(p_t)$
\item $\mathsf{RUN}(x, t) \in \{0, 1\}$: starting from time $t$, simulate until halting and return accept (1) or reject (0)
\end{itemize}

\paragraph{Algorithm.} The following five algorithms present the pseudocode for these mutually recursive functions. The transition function $\delta$ is assumed to be hardcoded into the model parameters. We fix a constant $c > 0$ such that the Turing machine $\mathsf{TM}$ deciding $L$ halts within $T(n) := 2^{c \cdot S(n)}$ steps on all inputs of length $n$.

\begin{algorithm}[H]
\caption{$\mathsf{STATE}(x, t) \to q_t \in Q$}
\label{alg:state}
\begin{algorithmic}[1]
\STATE \textbf{if} $t = 0$ \textbf{then return} $q_0$
\STATE $(q', w, d) \gets \delta(\mathsf{STATE}(x, t-1), \mathsf{SYMBOL}(x, t-1))$
\RETURN $q'$
\end{algorithmic}
\end{algorithm}

\vspace{-0.5em}

\begin{algorithm}[H]
\caption{$\mathsf{POS}(x, t) \to p_t \in \mathbb{Z}$}
\label{alg:pos}
\begin{algorithmic}[1]
\STATE \textbf{if} $t = 0$ \textbf{then return} $0$
\STATE $(q', w, d) \gets \delta(\mathsf{STATE}(x, t-1), \mathsf{SYMBOL}(x, t-1))$
\RETURN $\mathsf{POS}(x, t-1) + d$
\end{algorithmic}
\end{algorithm}

\vspace{-0.5em}

\begin{algorithm}[H]
\caption{$\mathsf{CELL}(x, t, p) \to \tau_t(p) \in \Gamma$}
\label{alg:cell}
\begin{algorithmic}[1]
\STATE \textbf{if} $t = 0$ \textbf{then return} $x[p]$ if $0 \le p < |x|$ else $b$
\STATE $p_{\mathrm{prev}} \gets \mathsf{POS}(x, t-1)$
\STATE \textbf{if} $p \neq p_{\mathrm{prev}}$ \textbf{then return} $\mathsf{CELL}(x, t-1, p)$ \hfill $\triangleright$ recurse
\STATE $(q', w, d) \gets \delta(\mathsf{STATE}(x, t-1), \mathsf{SYMBOL}(x, t-1))$
\RETURN $w$ \hfill $\triangleright$ symbol written at $t-1$
\end{algorithmic}
\end{algorithm}

\vspace{-0.5em}

\begin{algorithm}[H]
\caption{$\mathsf{SYMBOL}(x, t) \to \tau_t(p_t) \in \Gamma$}
\label{alg:symbol}
\begin{algorithmic}[1]
\RETURN $\mathsf{CELL}(x, t, \mathsf{POS}(x, t))$
\end{algorithmic}
\end{algorithm}

\vspace{-0.5em}

\begin{algorithm}[H]
\caption{$\mathsf{RUN}(x, t) \to \{0, 1\}$}
\label{alg:run}
\begin{algorithmic}[1]
\STATE $q \gets \mathsf{STATE}(x, t)$
\STATE \textbf{if} $q \in Q_{\mathrm{acc}}$ \textbf{then return} $1$ \hfill $\triangleright$ accept
\STATE \textbf{if} $q \in Q_{\mathrm{rej}}$ \textbf{then return} $0$ \hfill $\triangleright$ reject
\RETURN $\mathsf{RUN}(x, t + 1)$ \hfill $\triangleright$ continue
\end{algorithmic}
\end{algorithm}

The decision procedure is $\mathsf{DECIDE}(x) := \mathsf{RUN}(x, 0)$. Since $L \in \mathsf{TIME}(2^{\mathcal{O}(S(n))})$, the TM halts within $T = 2^{c \cdot S(n)}$ steps, so $\mathsf{RUN}$ terminates and correctly outputs accept/reject. We now show that the recursive semantics faithfully tracks the TM's behavior.

\begin{lemma}[Correctness of Recursive Semantics]
\label{lem:rec-correct}
Let $q_t, p_t, \tau_t$ denote the true state, head position, and tape contents of $\mathsf{TM}$ at time $t$. For every input $x$, every $t \ge 0$, and every position $p \in \mathbb{Z}$:
\begin{enumerate}[leftmargin=1.5em, topsep=2pt, itemsep=1pt]
\item $\mathsf{STATE}(x, t) = q_t$
\item $\mathsf{POS}(x, t) = p_t$
\item $\mathsf{CELL}(x, t, p) = \tau_t(p)$
\item $\mathsf{SYMBOL}(x, t) = \tau_t(p_t)$
\end{enumerate}
\end{lemma}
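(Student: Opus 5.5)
We prove all four claims simultaneously by strong induction on $t$, treating the recursive definitions as equations between values. A preliminary remark fixes how those equations are to be read: each call on the right-hand side at time $t$ involves only arguments with time index $t-1$, except for $\mathsf{SYMBOL}(x,t)$, which calls $\mathsf{POS}(x,t)$ and $\mathsf{CELL}(x,t,\cdot)$ at the same $t$. Within a fixed level $t$ we therefore order the claims as (2), (1), (3), then (4). The whole induction hypothesis $H(t)$ is that (1)--(4) hold at time $t$ for every $p\in\mathbb{Z}$, and we show $H(t-1)\Rightarrow H(t)$. Well-foundedness on the pair (time, claim order) also shows that every call terminates, so the functions are total and the equalities are meaningful.

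\textbf{Base case $t=0$.} $\mathsf{STATE}(x,0)=q_0$ and $\mathsf{POS}(x,0)=0$ hold directly by definition. $\mathsf{CELL}(x,0,p)$ returns $x[p]$ on $0\le p<n$ and $b$ elsewhere, which is exactly $\tau_0(p)$. Then $\mathsf{SYMBOL}(x,0)=\mathsf{CELL}(x,0,\mathsf{POS}(x,0))=\tau_0(0)=\tau_0(p_0)$.

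\textbf{Inductive step.} Assume $H(t-1)$. The TM step gives $(q_t,w,d)=\delta(q_{t-1},\tau_{t-1}(p_{t-1}))$, together with $p_t=p_{t-1}+d$ and $\tau_t=\tau_{t-1}[p_{t-1}\mapsto w]$; halting states are normalized as self-loops, so $\delta$ is total. By claims (1) and (4) at $t-1$, the argument that $\mathsf{STATE}$, $\mathsf{POS}$ and $\mathsf{CELL}$ pass to $\delta$ equals $(q_{t-1},\tau_{t-1}(p_{t-1}))$. Hence they obtain the same triple $(q',w,d)=(q_t,w,d)$, and this yields (1). For (2), use $\mathsf{POS}(x,t-1)=p_{t-1}$ to get $p_{t-1}+d=p_t$. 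For (3), split into two cases according to whether $p\ne p_{t-1}$:
\begin{itemize}
\item if $p\ne p_{t-1}$, the function returns $\mathsf{CELL}(x,t-1,p)=\tau_{t-1}(p)=\tau_t(p)$;
\item if $p=p_{t-1}$, it returns $w=\tau_t(p)$.
\end{itemize}
Finally, (4) follows from (2) and (3) at the same $t$: $\mathsf{CELL}(x,t,\mathsf{POS}(x,t))=\tau_t(p_t)$.

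The main obstacle is the same-level dependency of $\mathsf{SYMBOL}$ on $\mathsf{POS}$ and $\mathsf{CELL}$. It is handled by the intra-level ordering above, which makes the induction well-founded. No claim at level $t$ depends on (4) at level $t$; they depend on (4) only at level $t-1$. Everything else is routine unfolding of the definitions.
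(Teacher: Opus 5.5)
Your proof is correct and follows the paper's argument: induction on $t$ with the same base case, the same observation that claims (1) and (4) at $t-1$ make the algorithms evaluate exactly the transition the machine takes, and the same derivation of (4) at level $t$ from (2) and (3) at that level. Three points you make explicit are left implicit in the paper and are useful rigor additions: the intra-level ordering, the termination of the mutual recursion, and the appeal to the self-loop normalization that makes $\delta$ total.
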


\begin{proof}
By induction on $t$.

\emph{Base case ($t = 0$):} By the TM initialization semantics, $q_0$ is the initial state, $p_0 = 0$, and $\tau_0(p) = x[p]$ for $0 \le p < n$ and $\tau_0(p) = b$ otherwise. These match the base cases of our recursive functions. For claim (4), $\mathsf{SYMBOL}(x, 0) = \mathsf{CELL}(x, 0, \mathsf{POS}(x, 0)) = \tau_0(0) = \tau_0(p_0)$.

\emph{Inductive step ($t \ge 1$):} Assume the claims hold for time $t-1$. By definition and the induction hypothesis:
\begin{equation}
\mathsf{SYMBOL}(x, t-1) = \mathsf{CELL}(x, t-1, \mathsf{POS}(x, t-1)) = \tau_{t-1}(p_{t-1})
\end{equation}

Let $(q', w, d) := \delta(\mathsf{STATE}(x, t-1), \mathsf{SYMBOL}(x, t-1))$ be the transition output. By the induction hypothesis, $\mathsf{STATE}(x, t-1) = q_{t-1}$, so the transition $\delta(q_{t-1}, \tau_{t-1}(p_{t-1}))$ computed by the algorithm is exactly the transition taken by $\mathsf{TM}$ at step $t-1$. Thus:
\begin{itemize}[leftmargin=*, topsep=2pt, itemsep=1pt]
\item $\mathsf{STATE}(x, t) = q' = q_t$ (the new state from $\delta$)
\item $\mathsf{POS}(x, t) = p_{t-1} + d = p_t$ (head moves by $d$)
\item $\mathsf{CELL}(x, t, p) = \tau_t(p)$: only cell $p_{t-1}$ changes to $w$; others unchanged
\item $\mathsf{SYMBOL}(x, t) = \mathsf{CELL}(x, t, \mathsf{POS}(x, t)) = \tau_t(p_t)$
\end{itemize}
This completes the induction.
\end{proof}

\subsection{Resource Analysis}

We analyze three resources: local space (per-context length), recursion depth (call stack height), and total runtime (number of recursive calls).

\paragraph{Local Space.}
Each recursive frame must store the following data:
\begin{itemize}[leftmargin=*, topsep=2pt, itemsep=1pt]
\item \emph{Input $x$}: length $n$
\item \emph{Time parameter $t'$}: $O(\log t)$ bits in binary representation
\item \emph{Position parameter $p$} (for $\mathsf{CELL}$): since the head moves at most 1 cell per step, $|p| \le t$, so $|\mathrm{bin}(p)| = O(\log t)$
\item \emph{State $q \in Q$}, \emph{symbol $a \in \Gamma$}, \emph{move direction $d \in \{-1,0,+1\}$}: $O(1)$ bits (finite sets)
\item \emph{Returned answers from subcalls}: state ($O(1)$), position ($O(\log t)$), symbol ($O(1)$)
\end{itemize}
Crucially, each context makes only $O(1)$ nested calls before returning. When a callee returns, the call/return mechanism removes its entire context from the stack and appends only the returned value to the caller's context. This prevents accumulation of intermediate results. Thus, each context has length $O(n + \log t)$. In Theorem~\ref{thm:space-efficiency-formal}, $t \le T(n) = 2^{c \cdot S(n)}$, so $\log t \le c \cdot S(n)$. Since $S(n) \ge n$, the local space bound is $O(n + S(n)) = O(S(n))$. Moreover, the complete rollout before each call or return adds only a constant number of delimiters and a subcall prompt or returned value of length $O(n+\log t)$, so the transient rollout stack also has local space $O(S(n))$.

\paragraph{Recursion Depth.}
For the inner functions ($\mathsf{STATE}, \mathsf{POS}, \mathsf{CELL}, \mathsf{SYMBOL}$): each call with time parameter $t$ recursively invokes only subcalls with parameter $t-1$. Thus, starting from $t = T(n)$, the recursion depth is $O(T(n))$. 

For the outer decision procedure $\mathsf{RUN}$: even without assuming any tail-call optimization, the additional stack height contributed by iterating through time steps $0, 1, \ldots$ is at most $O(T(n))$. Each $\mathsf{RUN}(x, t)$ calls $\mathsf{STATE}(x, t)$, which itself has depth $O(t)$.

Overall, the maximum recursion depth is $O(T(n)) = O(2^{c \cdot S(n)})$.

\paragraph{Time Complexity (Total Subroutine Invocations).}
We measure runtime by the total number of subroutine invocations across all recursive contexts. Since the Transformer $f_\theta$ has constant size and each invocation produces at most $O(S(n))$ tokens, this differs from the total token count by at most an $O(S(n))$ factor.

For each routine $\mathsf{F} \in \{\mathsf{STATE}, \mathsf{POS}, \mathsf{CELL}, \mathsf{SYMBOL}, \mathsf{RUN}\}$, let $\mathcal{T}_{\mathsf{F}}(t)$ denote the worst-case total number of subroutine invocations triggered by evaluating $\mathsf{F}(x, t)$ (for $\mathsf{CELL}$, we also maximize over $p \in \mathbb{Z}$). From Algorithms~1--5:
\begin{align}
\mathcal{T}_{\mathsf{STATE}}(t) &\le \mathcal{T}_{\mathsf{STATE}}(t-1) + \mathcal{T}_{\mathsf{SYMBOL}}(t-1) + O(1), \label{eq:Ts}\\
\mathcal{T}_{\mathsf{POS}}(t) &\le \mathcal{T}_{\mathsf{POS}}(t-1) + \mathcal{T}_{\mathsf{STATE}}(t-1) + \mathcal{T}_{\mathsf{SYMBOL}}(t-1) + O(1), \label{eq:Tp}\\
\mathcal{T}_{\mathsf{CELL}}(t) &\le \mathcal{T}_{\mathsf{POS}}(t-1) + \max\!\big\{\mathcal{T}_{\mathsf{CELL}}(t-1),\, \mathcal{T}_{\mathsf{STATE}}(t-1)+\mathcal{T}_{\mathsf{SYMBOL}}(t-1)\big\} + O(1), \label{eq:Tc}\\
\mathcal{T}_{\mathsf{SYMBOL}}(t) &\le \mathcal{T}_{\mathsf{POS}}(t) + \mathcal{T}_{\mathsf{CELL}}(t) + O(1), \label{eq:Ty}\\
\mathcal{T}_{\mathsf{RUN}}(t) &\le \mathcal{T}_{\mathsf{STATE}}(t) + \mathcal{T}_{\mathsf{RUN}}(t+1) + O(1). \label{eq:Tr}
\end{align}

To simplify these coupled recurrences, we define two dominant quantities:
\begin{equation}
V(t) := \max\big\{\mathcal{T}_{\mathsf{STATE}}(t),\, \mathcal{T}_{\mathsf{POS}}(t)\big\}, \qquad C(t) := \mathcal{T}_{\mathsf{CELL}}(t).
\end{equation}
By \eqref{eq:Ty}, $\mathcal{T}_{\mathsf{SYMBOL}}(t) \le V(t) + C(t) + O(1)$. Substituting into \eqref{eq:Ts}--\eqref{eq:Tc} yields:
\begin{align*}
V(t) &\le 3V(t-1) + C(t-1) + O(1), \\
C(t) &\le 3V(t-1) + C(t-1) + O(1).
\end{align*}
Therefore,
\begin{equation}
\begin{pmatrix} V(t) \\ C(t) \end{pmatrix} \preceq \begin{pmatrix} 3 & 1 \\ 3 & 1 \end{pmatrix} \begin{pmatrix} V(t-1) \\ C(t-1) \end{pmatrix} + O(1),
\end{equation}
whose spectral radius is $4$. Hence $V(t), C(t) = O(4^t)$.

Finally, if the simulated Turing machine halts within $T(n)$ steps, then $\mathsf{RUN}(x, 0)$ performs at most $T(n)$ iterations, each invoking $\mathsf{STATE}(x, t)$ once. Using \eqref{eq:Tr}:
\begin{equation}
\mathcal{T}_{\mathsf{RUN}}(0) \le \sum_{t=0}^{T(n)} \mathcal{T}_{\mathsf{STATE}}(t) = O(V(T(n))) = O(4^{T(n)}).
\end{equation}
For $T(n) = 2^{c \cdot S(n)}$, this becomes $4^{T(n)} = 2^{2T(n)} = 2^{2^{\Theta(S(n))}}$, i.e., double exponential in $S(n)$. Since each invocation produces at most $O(S(n))$ tokens, the total generated tokens remain $2^{2^{\Theta(S(n))}}$.

This double-exponential runtime does \emph{not} affect the $\RM(\cdot, \cdot)$ membership statement, which constrains only local space and recursion depth. To reduce runtime to $2^{\mathcal{O}(S(n))}$, one can augment the simulation with memoization: caching results of $\mathsf{STATE}(x, t')$ in external storage ensures each subproblem is computed only once.

\subsection{Transformer Construction }

We sketch how a Transformer can implement the recursive simulation described above. The key insight is that each step of Algorithms 1--5 involves only: (i) parsing a bounded-length prefix to identify the function and arguments, (ii) counting delimiters to determine the current phase, (iii) performing constant-size table lookups ($\delta$, $Q_{\mathrm{acc}}$, $Q_{\mathrm{rej}}$), and (iv) emitting tokens for calls/returns. 

\subsubsection{Setup}

\paragraph{Token Vocabulary.} We define the following special tokens:
\begin{itemize}[leftmargin=*, topsep=2pt, itemsep=1pt]
\item \emph{Function tokens}: $\langle\texttt{STATE}\rangle$, $\langle\texttt{POS}\rangle$, $\langle\texttt{CELL}\rangle$, $\langle\texttt{SYMBOL}\rangle$, $\langle\texttt{RUN}\rangle$ indicate which recursive function is being invoked.
\item \emph{Control tokens}: $\langle\texttt{call}\rangle$, $\langle/\texttt{call}\rangle$, $\langle\texttt{return}\rangle$, $\langle/\texttt{return}\rangle$ mark the boundaries of recursive calls and returns.
\item \emph{Separator tokens}: $\langle\texttt{sep}\rangle$ separates arguments within a call; \texttt{[SEP]} is an internal delimiter that separates cached intermediate results within a context.
\item \emph{Data tokens}: Tokens from $\Gamma$ (tape alphabet), $Q$ (states), and binary digits $\{0, 1\}$ for encoding integers.
\end{itemize}

\paragraph{Context Format.} Each function-call frame is a single sequence (context) whose prefix contains the input arguments, and whose suffix progressively caches intermediate results from subcalls. A typical context has the form:
\begin{equation}
\langle F\rangle \;\langle\texttt{sep}\rangle\; \text{arg}_1 \;\langle\texttt{sep}\rangle\; \text{arg}_2
\;\big(\;\langle\texttt{sep}\rangle\; \text{arg}_3\;\big)\;\; \texttt{[SEP]}\; z_1\; \texttt{[SEP]}\; z_2\; \cdots
\end{equation}
where $\langle F\rangle$ is the function token (e.g., $\langle\texttt{STATE}\rangle$), and each $z_i$ is either a returned value from a recursive call or an internally produced constant-size token. Recursive calls are wrapped as $\callblock{\langle F\rangle \langle\texttt{sep}\rangle \cdots}$. Returns are encoded as $\returnblock{\texttt{[SEP]} \concat \mathbf{v}}$---note that the payload begins with \texttt{[SEP]}. Thus, when a subcall finishes, the caller receives the payload $\texttt{[SEP]} \concat \mathbf{v}$ appended to its context, so each completed subcall contributes exactly one \texttt{[SEP]} delimiter to the caller's phase cache.

\paragraph{Transformer Behavior.} The Transformer $f_\theta$ decides what to do next by inspecting only: (i) which function token $\langle F\rangle$ begins the context, (ii) whether the time argument is zero (via a bit-scan), and (iii) how many \texttt{[SEP]} delimiters have already appeared (the ``phase''). This is a standard finite-phase construction: each function needs only a constant number of phases to implement the corresponding algorithmic step. Specifically, $f_\theta$ executes the logic specified in Algorithms 1--5:

\begin{enumerate}[leftmargin=1.5em, topsep=2pt, itemsep=2pt]
\item \textbf{Base case}: If $t = 0$ (detected by checking if $\mathrm{bin}(t)$ is all zeros), output $\returnblock{\texttt{[SEP]} \concat \mathbf{v}_0}$ where $\mathbf{v}_0$ is the base case value ($q_0$, $0$, $x[p]$ or $b$, depending on the function).

\item \textbf{Recursive case}: If $t > 0$, the Transformer performs the following operations depending on the function token:
\begin{itemize}[leftmargin=*, topsep=1pt, itemsep=1pt]
\item \textbf{$\langle\texttt{STATE}\rangle$}: (i) compute $t - 1$ via binary decrement; (ii) call $\langle\texttt{STATE}\rangle$ and $\langle\texttt{SYMBOL}\rangle$ with $(x, t-1)$ to obtain $q_{t-1}$ and $a_{t-1}$; (iii) compute $\delta(q_{t-1}, a_{t-1})$ via lookup table to get $(q', w, d)$; (iv) return $q'$.
\item \textbf{$\langle\texttt{POS}\rangle$}: (i) compute $t - 1$; (ii) call $\langle\texttt{STATE}\rangle$ and $\langle\texttt{SYMBOL}\rangle$ with $(x, t-1)$; (iii) compute $\delta$ to get $d$; (iv) call $\langle\texttt{POS}\rangle$ with $(x, t-1)$ to get $p_{t-1}$; (v) compute $p_{t-1} + d$ via binary addition; (vi) return $p_t$.
\item \textbf{$\langle\texttt{CELL}\rangle$}: (i) compute $t - 1$; (ii) call $\langle\texttt{POS}\rangle$ with $(x, t-1)$ to get $p_{t-1}$; (iii) compare $p$ with $p_{t-1}$: if $p \neq p_{t-1}$, recurse by calling $\langle\texttt{CELL}\rangle$ with $(x, t-1, p)$; otherwise (iv) call $\langle\texttt{STATE}\rangle$ and $\langle\texttt{SYMBOL}\rangle$ with $(x, t-1)$, compute $\delta$ to get $w$, and return $w$.
\item \textbf{$\langle\texttt{SYMBOL}\rangle$}: call $\langle\texttt{POS}\rangle$ with $(x, t)$ to get $p_t$, then call $\langle\texttt{CELL}\rangle$ with $(x, t, p_t)$ and return the result.
\item \textbf{$\langle\texttt{RUN}\rangle$}: (i) call $\langle\texttt{STATE}\rangle$ with $(x, t)$ to get $q_t$; (ii) check if $q_t \in Q_{\mathrm{acc}} \cup Q_{\mathrm{rej}}$: if $q_t \in Q_{\mathrm{acc}}$, return $1$; if $q_t \in Q_{\mathrm{rej}}$, return $0$; otherwise (iii) compute $t + 1$ via binary increment and call $\langle\texttt{RUN}\rangle$ with $(x, t+1)$.
\end{itemize}

\item \textbf{Return processing}: When a $\langle/\texttt{return}\rangle$ token is encountered, the stack-transition rule $\Step$ pops the current frame and appends the payload $\texttt{[SEP]} \concat \mathbf{v}$ to the parent context, automatically incrementing the parent's phase count.
\end{enumerate}

\paragraph{Transformer construction.}
It remains to verify that the next-token policy described above is implementable by a fixed constant-depth, constant-size Transformer with $\mathcal{O}(\log S(n))$ precision. The recursive functions $\mathsf{STATE}$, $\mathsf{POS}$, $\mathsf{CELL}$, $\mathsf{SYMBOL}$, and $\mathsf{RUN}$ reduce to the following primitive operations:
\begin{enumerate}[label=(\alph*), leftmargin=*, topsep=2pt, itemsep=1pt]
\item Parsing the context to identify the function token $\langle F \rangle$ and extract arguments;
\item Phase counting via \texttt{seq\_sum}: counting the number of \texttt{[SEP]} delimiters to determine the current computation phase;
\item Binary arithmetic: increment ($t \mapsto t+1$) and decrement ($t \mapsto t-1$) of the time parameter, and position updates ($p \pm d$), using \texttt{seq\_max} for bit-scans;
\item Cache retrieval via \texttt{rightmost\_exact\_match}: retrieving previously computed values from the context;
\item Finite lookups of $\delta$, $Q_{\mathrm{acc}}$, $Q_{\mathrm{rej}}$ (hard-coded into parameters).
\end{enumerate}
All primitive operations (a)--(e) above are already established in Appendix G of \citet{yang2025pencil}; our construction differs only in the choice of special tokens and parsing format. We refer readers to that paper for the detailed Transformer implementation. A complete construction using an alternative approach (via Alternating Turing Machines) appears in Appendix~\ref{app:proof-space-efficiency-alt}.

\section{Proof of Theorem~\ref{thm:space-efficiency} via Alternating Turing Machine}
\label{app:proof-space-efficiency-alt}

This section gives an alternative proof of Theorem~\ref{thm:space-efficiency}. The proof follows the classical characterization
$\mathsf{TIME}(2^{O(S(n))})=\mathsf{ASPACE}(O(S(n)))$ (Chandra--Kozen--Stockmeyer) and then realizes the resulting AND/OR computation using the \texttt{call}/\texttt{return} recursion mechanism, with the per-step logic implemented by a constant-depth Transformer via Full-Access Sequence Processing (FASP)~\citep{yang2025pencil}.

\subsection{Alternating Turing Machines and $\mathsf{ASPACE}$}

An \emph{alternating Turing machine} (ATM) is a nondeterministic Turing machine $A=(\Gamma,b,Q,q_0,\Delta,Q_{\mathrm{acc}},Q_{\mathrm{rej}})$ whose non-halting states are partitioned into \emph{existential} and \emph{universal} states: $Q\setminus(Q_{\mathrm{acc}}\cup Q_{\mathrm{rej}})=Q_\exists\;\dot\cup\;Q_\forall$. The transition relation is a finite set
\begin{equation}
\Delta \subseteq (Q\setminus(Q_{\mathrm{acc}}\cup Q_{\mathrm{rej}}))\times\Gamma\times Q\times\Gamma\times\{-1,0,+1\}.
\end{equation}
Each tuple $(q,a,q',a',d)\in\Delta$ specifies: in state $q$ reading symbol $a$, the machine may transition to state $q'$, write $a'$ on the current cell, and move the head by $d\in\{-1,0,+1\}$. For a configuration $c=(q,\tau,p)$ (state $q$, tape contents $\tau:\mathbb{Z}\to\Gamma$, head position $p$), the set of successor configurations is
\begin{equation}
\mathrm{Succ}(c) := \{(q',\tau[p\mapsto a'],p+d) : (q,\tau(p),q',a',d)\in\Delta\},
\end{equation}
where $\tau[p\mapsto a']$ denotes the tape with symbol at position $p$ updated to $a'$. Since we assume exactly two successors, we index them as $\mathrm{Succ}_0(c)$ and $\mathrm{Succ}_1(c)$. For $i \in \{0,1\}$, let $\delta_i(c) := (q_i', w_i, d_i)$ denote the $i$-th applicable transition tuple (i.e., $(q, \tau(p), q_i', w_i, d_i) \in \Delta$), so that $\mathrm{Succ}_i(c) = (q_i', \tau[p \mapsto w_i], p + d_i)$.

\paragraph{Acceptance Semantics.} Fix an input $x$ and let $c_{\mathrm{start}}(x)$ be the start configuration. Assuming $A$ is a \emph{decider} (every branch halts), the acceptance value $\mathsf{Win}(c)\in\{0,1\}$ is defined recursively over the computation tree: if $c$ halts in $Q_{\mathrm{acc}}$ then $\mathsf{Win}(c)=1$; if in $Q_{\mathrm{rej}}$ then $\mathsf{Win}(c)=0$; if $c$ is non-halting with state in $Q_\exists$, then $\mathsf{Win}(c)=\bigvee_{c'\in \mathrm{Succ}(c)} \mathsf{Win}(c')$; if in $Q_\forall$, then $\mathsf{Win}(c)=\bigwedge_{c'\in \mathrm{Succ}(c)} \mathsf{Win}(c')$. The machine accepts $x$ iff $\mathsf{Win}(c_{\mathrm{start}}(x))=1$.

\paragraph{Alternating Space.} The class $\mathsf{ASPACE}(S(n))$ consists of languages decidable by an ATM that visits at most $O(S(n))$ tape cells along every branch.

\begin{lemma}[Chandra--Kozen--Stockmeyer characterization]
\label{lem:cks}
For any space-constructible $S(n)\ge n$,
\begin{equation}
\mathsf{TIME}(2^{O(S(n))})=\mathsf{ASPACE}(O(S(n))).
\end{equation}
\end{lemma}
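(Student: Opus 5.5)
We prove the two inclusions separately, following the classical Chandra--Kozen--Stockmeyer argument.

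\textbf{$\mathsf{ASPACE}(O(S(n)))\subseteq\mathsf{TIME}(2^{O(S(n))})$.} Fix an ATM $A$ using space $cS(n)$ on every branch. A configuration consists of a state, a head position in a window of $O(S(n))$ cells, and tape contents. So there are at most $N=|Q|\cdot O(S(n))\cdot|\Gamma|^{cS(n)}=2^{O(S(n))}$ configurations, and we enumerate them all. Because $S$ is space-constructible, a deterministic machine can mark off the $O(S(n))$-cell window and list configurations in $2^{O(S(n))}$ time.

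We then compute $\mathsf{Win}$ on this configuration graph by the standard fixed-point labeling.
\begin{itemize}
\item Start by labeling accepting configurations $1$ and rejecting configurations $0$.
\item In each round, label an existential configuration $1$ once some successor is labeled $1$. Label it $0$ once all successors are labeled $0$.
\item Treat universal configurations dually.
\end{itemize}
Since $A$ is a decider, the graph reachable from the start configuration is acyclic. Hence at most $N$ rounds suffice, and each round takes $\mathrm{poly}(N)$ time. The total time is $\mathrm{poly}(N)=2^{O(S(n))}$, and we accept iff the start configuration is labeled $1$.

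\textbf{$\mathsf{TIME}(2^{O(S(n))})\subseteq\mathsf{ASPACE}(O(S(n)))$.} Let $M$ be a deterministic single-tape machine running in time $T=2^{cS(n)}$. We use the tableau view: $M$ accepts iff there exist $t\le T$ and $p$ with $|p|\le T$ such that the cell-triple $(q,a)$ at $(t,p)$ is accepting. Each tableau entry is determined by the three entries at $(t-1,p-1)$, $(t-1,p)$ and $(t-1,p+1)$, exactly as in the $\mathsf{CELL}/\mathsf{POS}$ recursion of Appendix~\ref{app:proof-space-efficiency}.

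The ATM keeps only a claim ``entry $(t,p)$ equals $\sigma$'', where $\sigma$ ranges over a finite alphabet of symbols optionally tagged with a state. It verifies the claim as follows.
\begin{itemize}
\item \textbf{Existential step.} Guess the three predecessor entries $\sigma_{-1},\sigma_0,\sigma_{+1}$ and check deterministically that the local rule maps them to $\sigma$.
\item \textbf{Universal step.} Branch into the three claims about $(t-1,p+j)$ for $j\in\{-1,0,1\}$.
\item \textbf{Base case.} At $t=0$, compare against the input $x$ and blanks directly.
\end{itemize}
Every branch stores only $t$, $p$ and $O(1)$ symbols. That is $O(\log T)=O(S(n))$ bits, plus read access to $x$. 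So the space is $O(S(n))$, using $S(n)\ge n$ where the input must be accounted for.

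\textbf{The main obstacle is the soundness of the universal branching.} The concern is that guesses made in sibling subtrees about a shared entry might be inconsistent with each other. The resolution is that determinism of $M$ forces uniqueness. By induction on $t$, any claim that survives all universal challenges must equal the true tableau value. Consequently, independently verified sibling claims cannot contradict one another, even though different subtrees may re-guess the same cell. Termination holds because $t$ strictly decreases along every branch. We normalize $M$ so that it halts with its head parked at a fixed cell with an accepting marker. This lets the top-level claim be a single universal-free existential guess of the accepting entry at $(T,0)$. Making this normalization explicit, without changing the time bound beyond a constant factor, is the step requiring the most care.
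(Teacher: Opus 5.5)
Your argument is correct and is the standard Chandra--Kozen--Stockmeyer proof that the paper only cites; the paper's own sketch in Lemma~\ref{lem:cks-prelim} covers just your first direction, by counting configurations and running dynamic programming over the game graph. One small point to make explicit: the second direction also needs space-constructibility, to write $T=2^{cS(n)}$ (and hence the initial $t$ and the range of $p$) in $O(S(n))$ cells. Once $T$ is available the halting normalization is unnecessary, since you can existentially guess $(t,p)$ and the accepting entry directly, as in your own first formulation.
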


\begin{proof}
This is the classical \emph{alternation theorem}~\citep{chandra1981alternation}; see also standard textbook treatments~\citep{arora2009computational}.
\end{proof}

\subsection{Recursive Construction}

Fix a space-constructible $S(n)\ge n$ and a language $L\in \mathsf{TIME}(2^{O(S(n))})$. By Lemma~\ref{lem:cks}, there exists an ATM $A$ deciding $L$ in space $O(S(n))$. Deciding $x$ reduces to evaluating $\mathsf{Win}(c_{\mathrm{start}}(x))$. Since $A$ is fixed, we assume w.l.o.g.\ that every non-halting configuration has \emph{exactly two} successors (by padding missing successors with reject for existential states and accept for universal states, and converting bounded fanout to binary). We denote the two successors by $\mathrm{Succ}_0(c)$ and $\mathrm{Succ}_1(c)$.

\paragraph{Configuration.}
A \emph{configuration} of $A$ is a triple $c = (q, \tau, p)$ where $q \in Q$ is the control state, $\tau: \mathbb{Z} \to \Gamma$ is the tape contents, and $p \in \mathbb{Z}$ is the head position. Since $A$ uses $O(S(n))$ space, each reachable configuration can be encoded as a token sequence $\mathsf{Embed}(c)$ of length $O(S(n))$; the precise encoding is described in \Cref{subsec:alt-transformer}.

\paragraph{Recursive functions.}
We define the following functions for evaluating configurations:
\begin{itemize}[leftmargin=*, topsep=4pt, itemsep=2pt]
\item $\mathsf{STEP}(c, i) \in \{\text{configurations}\}$: returns the $i$-th successor $\mathrm{Succ}_i(c)$ for $i \in \{0,1\}$
\item $\mathsf{HALTING}(c) \in \{0, 1, \bot\}$: returns $1$ if $c \in Q_{\mathrm{acc}}$, $0$ if $c \in Q_{\mathrm{rej}}$, $\bot$ otherwise
\item $\mathsf{TYPE}(c) \in \{\exists, \forall\}$: returns the alternation type of non-halting configuration $c$
\item $\mathsf{COMB}(c, b_0, b_1) \in \{0,1\}$: returns $b_0 \lor b_1$ if $\mathsf{TYPE}(c) = \exists$, else $b_0 \land b_1$
\item $\mathsf{EVAL}(c) \in \{0,1\}$: evaluates $\mathsf{Win}(c)$ recursively
\end{itemize}

\paragraph{Algorithm.}
The following algorithm presents the pseudocode for $\mathsf{EVAL}$:

\begin{algorithm}[H]
\caption{$\mathsf{EVAL}(c) \to b \in \{0,1\}$}
\label{alg:eval-atm}
\begin{algorithmic}[1]
\STATE \textbf{if} $\mathsf{HALTING}(c) = 1$ \textbf{then return} $1$ \hfill $\triangleright$ accept
\STATE \textbf{if} $\mathsf{HALTING}(c) = 0$ \textbf{then return} $0$ \hfill $\triangleright$ reject
\STATE $b_0 \gets \mathsf{EVAL}(\mathsf{STEP}(c, 0))$ \hfill $\triangleright$ evaluate first successor
\STATE $b_1 \gets \mathsf{EVAL}(\mathsf{STEP}(c, 1))$ \hfill $\triangleright$ evaluate second successor
\RETURN $\mathsf{COMB}(c, b_0, b_1)$ \hfill $\triangleright$ AND/OR combination
\end{algorithmic}
\end{algorithm}

\paragraph{Correctness.}
By structural induction on the computation tree:
\begin{itemize}[leftmargin=*, topsep=2pt, itemsep=1pt]
\item \emph{Base case:} If $c$ is halting, $\mathsf{EVAL}(c)$ returns $1$ iff $c \in Q_{\mathrm{acc}}$, which equals $\mathsf{Win}(c)$ by definition.
\item \emph{Inductive step:} If $c$ is non-halting, by IH, $b_i = \mathsf{EVAL}(\mathsf{STEP}(c,i)) = \mathsf{Win}(\mathrm{Succ}_i(c))$ for $i \in \{0,1\}$. Then $\mathsf{COMB}(c,b_0,b_1)$ computes the correct AND/OR combination based on $\mathsf{TYPE}(c)$, matching the definition of $\mathsf{Win}(c)$.
\end{itemize}
Thus $\mathsf{EVAL}(c_{\mathrm{start}}(x)) = \mathsf{Win}(c_{\mathrm{start}}(x))$, correctly deciding whether $A$ accepts $x$.

\paragraph{Resource analysis.}
Each recursive frame stores the configuration encoding $\mathsf{Embed}(c)$ ($O(S(n))$ tokens), the returned bits $b_0, b_1$ ($O(1)$ bits), and call/return delimiters ($O(1)$ tokens), yielding local space $O(S(n))$ per context. The generated call payloads $\mathsf{Embed}(c_i)$ also have length $O(S(n))$, and return payloads have length $O(1)$, so the transient rollout stacks satisfy the same local-space bound. For recursion depth, an ATM using $O(S(n))$ space has at most $2^{O(S(n))}$ distinct configurations (finite control $\times$ head position $\times$ tape contents). Because $A$ is a decider, the configuration graph is acyclic---a cycle would induce an infinite branch. Hence the maximum recursion depth is bounded by the number of reachable configurations: $2^{O(S(n))}$.

\subsection{Preliminaries and Setup}
\label{subsec:alt-transformer}

To implement the recursive evaluation with a Transformer, we first introduce how to represent Turing machine configurations as token sequences that the Transformer can process.

\paragraph{Update tokens.}
We encode configurations using \emph{update tokens}. Let $\Sigma_{\mathrm{upd}} := Q \times \Gamma \times \{-1, 0, +1\}$ be the set of update tokens, where each token $(q', w, d)$ represents: ``write $w$ at the current head cell, move by $d$, and set state to $q'$''.

\paragraph{Update operator.}
For a configuration $c = (q, \tau, p)$, define the \emph{update operator} $\mathsf{Update}(c, (q', w, d)) := (q', \tau[p \mapsto w], p + d)$, and extend it to sequences by $\mathsf{Update}(c, x_{1:k}) := \mathsf{Update}(\mathsf{Update}(c, x_{1:k-1}), x_k)$. Let $c_{\mathsf{blank}} := (q_0, b^{\mathbb{Z}}, 0)$ denote the blank configuration (initial state, all-blank tape, head at origin).

\paragraph{Translational equivalence.}
Two configurations $c_1 = (q, \tau_1, p_1)$ and $c_2 = (q, \tau_2, p_2)$ are \emph{translationally equivalent}, written $c_1 \sim c_2$, if there exists $k \in \mathbb{Z}$ such that $\tau_1(i) = \tau_2(i - k)$ for all $i$ and $p_1 = p_2 + k$. Intuitively, they differ only by a shift in absolute tape coordinates. This relation preserves halting status and successor structure.

\paragraph{Configuration embedding.}
The embedding $\mathsf{Embed}: (Q \times \Gamma^{\mathbb{Z}} \times \mathbb{Z}) \to \Sigma_{\mathrm{upd}}^*$ maps a configuration $c = (q, \tau, p)$ to the canonical token sequence that ``walks through'' the non-blank tape region. Formally, for a tape $\tau$, define $\ell(\tau) := \min(\{0\} \cup \{i : \tau(i) \neq b\})$ and $r(\tau) := \max(\{0\} \cup \{i : \tau(i) \neq b\})$ as the left and right boundaries of the non-blank region. Then $\mathsf{Embed}(c)$ is a sequence of tokens $(q, a_i, d_i)$ where each $a_i$ is the tape symbol at position $\ell(\tau) + \sum_{j<i} d_j$ and the moves $d_i \in \{-1, 0, +1\}$ are chosen so that the sequence ``walks through'' the interval $[\ell(\tau), r(\tau)]$ and ends with the head aligned to $p$. By construction, $\mathsf{Update}(c_{\mathsf{blank}}, \mathsf{Embed}(c)) \sim c$. (Note: while many token sequences can produce the same configuration, $\mathsf{Embed}$ is a \emph{deterministic} function that outputs a canonical representation.)

Since the ATM uses $O(S(n))$ space, $|\mathsf{Embed}(c)| = O(S(n))$ for all reachable configurations. Each transition tuple $\delta_i(c) = (q_i', w_i, d_i) \in \Sigma_{\mathrm{upd}}$ is a single update token. Appending $\delta_i(c)$ to $\mathsf{Embed}(c)$ yields an update sequence that represents the successor up to translation:
$\mathsf{Update}(c_{\mathsf{blank}},\, \mathsf{Embed}(c) \concat \delta_i(c)) \sim \mathrm{Succ}_i(c)$.
Define the canonicalization operator $\mathsf{Canon}: \Sigma_{\mathrm{upd}}^* \to \Sigma_{\mathrm{upd}}^*$ by $\mathsf{Canon}(z) := \mathsf{Embed}(\mathsf{Update}(c_{\mathsf{blank}}, z))$. Then $\mathsf{Canon}(\mathsf{Embed}(c)) = \mathsf{Embed}(c)$ and
\begin{equation}
\mathsf{Embed}(\mathrm{Succ}_i(c)) = \mathsf{Canon}(\mathsf{Embed}(c) \concat \delta_i(c)).
\end{equation}

\subsection{Transformer Construction}

We now describe how the Transformer autoregressively generates tokens to implement the recursive evaluation.

\paragraph{Call/return mechanism.}
As in the primary proof, we use control tokens $\langle\texttt{call}\rangle,\langle/\texttt{call}\rangle,\langle\texttt{return}\rangle,\langle/\texttt{return}\rangle$ to implement recursion:
\begin{equation}
\mathsf{CALL}(c):=\callblock{\mathsf{Embed}(c)},\qquad
\mathsf{RET}(b):=\returnblock{b}.
\end{equation}
Completing $\mathsf{CALL}(c)$ pushes $\mathsf{Embed}(c)$ as a child context and removes the call block from the parent; completing $\mathsf{RET}(b)$ pops and appends $b$ to the parent.

\paragraph{Evaluation transcript.}
For a configuration $c = (q, \tau, p)$, we describe the step-by-step token generation. Recall that for non-halting $c$, there are exactly two applicable transitions yielding successors $c_i := \mathsf{STEP}(c, i)$ with $b_i := \mathsf{Win}(c_i)$.

For non-halting $c$, the active context cycles through three phases:
\begin{equation}
\mathsf{Embed}(c)\quad\xrightarrow{\text{step 1}}\quad \mathsf{Embed}(c)\concat b_0\quad\xrightarrow{\text{step 2}}\quad \mathsf{Embed}(c)\concat b_0\concat b_1 \quad\xrightarrow{\text{step 3}}\quad \text{return}.
\end{equation}
In step 1, the generator emits $\mathsf{CALL}(c_0)$, where the call payload is the canonical embedding $\mathsf{Embed}(c_0) = \mathsf{Canon}(\mathsf{Embed}(c) \concat \delta_0(c))$; the child returns $b_0$. In step 2, similarly for $c_1$. In step 3, it emits $\mathsf{RET}(\mathsf{COMB}(c,b_0,b_1))$. We now describe each step in detail.

\emph{Halting case:} If $c$ is halting, the context is $\mathsf{Embed}(c)$ and the generator emits $\mathsf{RET}(\mathsf{Win}(c))$.

\emph{Non-halting case:} If $c$ is non-halting, let $c_i := \mathsf{Update}(c, \delta_i(c)) = \mathrm{Succ}_i(c)$ for $i \in \{0,1\}$:
\begin{enumerate}[leftmargin=*, topsep=2pt, itemsep=1pt]
\item \textbf{Context:} $\mathsf{Embed}(c)$ \quad$\to$\quad \textbf{Generate:} $\mathsf{CALL}(c_0)$; child recurses and returns $b_0$.

The generator first computes the update token $\delta_0(c) \in \Sigma_{\mathrm{upd}}$ and then emits $\mathsf{CALL}(c_0)$ whose payload is the canonical embedding $\mathsf{Embed}(c_0) = \mathsf{Canon}(\mathsf{Embed}(c) \concat \delta_0(c))$. The child recursively evaluates $c_0$ and returns $b_0 = \mathsf{Win}(c_0)$. After return, the parent context becomes $\mathsf{Embed}(c)\concat b_0$.

\item \textbf{Context:} $\mathsf{Embed}(c)\concat b_0$ \quad$\to$\quad \textbf{Generate:} $\mathsf{CALL}(c_1)$; child recurses and returns $b_1$.

Similarly, the generator emits $\mathsf{CALL}(c_1)$ with payload $\mathsf{Embed}(c_1) = \mathsf{Canon}(\mathsf{Embed}(c) \concat \delta_1(c))$. The child returns $b_1 = \mathsf{Win}(c_1)$. After return, the parent context becomes $\mathsf{Embed}(c)\concat b_0\concat b_1$.

\item \textbf{Context:} $\mathsf{Embed}(c)\concat b_0\concat b_1$ \quad$\to$\quad \textbf{Generate:} $\mathsf{RET}(\mathsf{COMB}(c,b_0,b_1))$.

With both results $b_0, b_1$ available, the generator computes $\mathsf{COMB}(c, b_0, b_1)$ (AND if $c \in Q_\forall$, OR if $c \in Q_\exists$) and emits the return block, completing the evaluation of $c$.
\end{enumerate}

\paragraph{Transformer construction.}
It remains to verify that the next-token policy is implementable by a fixed constant-depth, constant-size Transformer with $\mathcal{O}(\log S(n))$ precision. The recursive evaluation of $\mathsf{EVAL}(c)$ reduces to the following primitive operations:
\begin{enumerate}[label=(\alph*), leftmargin=*, topsep=2pt, itemsep=1pt]
\item Parsing the configuration embedding prefix $\mathsf{Embed}(c)$ to extract the current state $q$ (reading the state component of any update token in $\mathsf{Embed}(c)$);
\item Halting and alternation-type detection: checking $q \in Q_{\mathrm{acc}} \cup Q_{\mathrm{rej}}$ and $q \in Q_\exists$ vs.\ $q \in Q_\forall$ (constant-size set membership);
\item Computing the head position $p$ as a prefix sum of moves in $\mathsf{Embed}(c)$ (via \texttt{seq\_sum});
\item Retrieving the scanned symbol $a = \tau(p)$ via a ``rightmost match'' query: find the most recent update token in $\mathsf{Embed}(c)$ that wrote to position $p$ (via \texttt{rightmost\_exact\_match});
\item Computing the successor transition $\delta_i(c) = (q_i', w_i, d_i)$ via finite lookup on $(q, a)$ (hard-coded into parameters);
\item Computing $\mathsf{COMB}(c, b_0, b_1)$: AND/OR of returned bits based on alternation type (local gates);
\item Canonicalization: generating the call payload $\mathsf{Embed}(c_i) = \mathsf{Canon}(\mathsf{Embed}(c) \concat \delta_i(c))$ for $i \in \{0,1\}$. This re-embeds the successor configuration by walking through the updated tape using (c) and (d) with the new state $q_i'$, new head position $p + d_i$, and tape symbol $w_i$ at position $p$. The output length is $|\mathsf{Embed}(c_i)| = O(S(n))$.
\end{enumerate}
All primitive operations (a)--(g) above are already established in Appendix G of \citet{yang2025pencil}; our construction differs only in the choice of special tokens and parsing format. We refer readers to that paper for the detailed Transformer implementation.

\paragraph{Conclusion.}
By FASP-to-Transformer compilation~\citep{yang2025pencil}, the next-token rule can be implemented by a fixed constant-depth Transformer $f_\theta$ with $O(\log S(n))$ precision. The recursive model with generator $f_\theta$ decides $L$ with local space $O(S(n))$ and recursion depth $2^{O(S(n))}$. Therefore $L \in \RM(O(S(n)), 2^{O(S(n))})$, completing the alternative proof of Theorem~\ref{thm:space-efficiency}.

\section{Proof of Theorem~\ref{prop:arm-limit}}
\label{app:proof-arm-bound}

\begin{proof}
Both inclusions are a direct corollary of the chain-of-thought characterization in \citet{merrill2023expresssive}.
When $D = 1$ (no recursive calls), the recursive model reduces to standard autoregressive generation: the sequence grows monotonically until the model emits a return token, so the local space bound $\mathcal{O}(S(n))$ directly limits the total number of generated tokens to $\mathcal{O}(S(n))$.
Setting $t(n) = \Theta(S(n))$ in their Eq.~(1) and using $S(n) \ge n$ yields
$\mathsf{TIME}(\mathcal{O}(S(n))) \subseteq \RM(\mathcal{O}(S(n)), 1)$ and $\RM(\mathcal{O}(S(n)), 1) \subseteq \mathsf{TIME}(\widetilde{\mathcal{O}}(S^2(n)))$,
where the $\widetilde{\mathcal{O}}(\cdot)$ absorbs the polylogarithmic overhead from simulating $\mathcal{O}(\log S(n))$-precision arithmetic on a Turing machine.
\end{proof}

\section{Proof of Theorem~\ref{thm:rlm-simulates-sum}}
\label{app:proof-rlm-simulates-sum}

\begin{theorem}[Constant-Depth Recursive Models, Formal]
\label{thm:rlm-simulates-sum-formal}
For any $S(n) \geq n$, recursive models with constant recursion depth $D = O(1)$ and local space $\mathcal{O}(S(n))$ can solve any problem in $\mathsf{SPACE}(S(n))$:
\begin{equation}
\mathsf{SPACE}(S(n)) \subseteq \RM(\mathcal{O}(S(n)), \mathcal{O}(1)).
\end{equation}
Moreover, for any $T: \mathbb{N} \to \mathbb{N}$,
\begin{equation}
\mathsf{TM}(S(n), T(n)) \subseteq \RM(\mathcal{O}(S(n)), \mathcal{O}(1), \mathcal{O}(T(n))).
\end{equation}
\end{theorem}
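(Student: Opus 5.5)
The plan is to simulate a space-$S(n)$, time-$T(n)$ Turing machine with a depth-2 recursive model that uses \emph{tail calls} to discard history, mirroring the summarization construction of \citet{yang2025pencil}. It suffices to prove the second inclusion $\mathsf{TM}(S(n),T(n)) \subseteq \RM(\mathcal{O}(S(n)),\mathcal{O}(1),\mathcal{O}(T(n)))$. The first then follows by a standard normalization. Every $L\in\mathsf{SPACE}(S(n))$ is decided by a machine that halts and uses space $\mathcal{O}(S(n))$; the time bound plays no role in the $\RM(\cdot,\cdot)$ membership, which only constrains local space and depth.

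\textbf{Construction.} Fix the machine $\mathsf{TM}$ and a chunk length $K=\Theta(S(n))$. I encode a configuration $c$ as the update-token sequence $\mathsf{Embed}(c)$ from \Cref{subsec:alt-transformer}, which has length $\mathcal{O}(S(n))$. The root context holds the input $x$ and immediately emits $\callblock{\mathsf{Embed}(c_0)}$. A child context starting from $\mathsf{Embed}(c)$ then runs plain CoT, appending one update token per simulated step: it reads the current state, head position, and scanned symbol via \texttt{seq\_sum} and \texttt{rightmost\_exact\_match}, and looks up $\delta$. This mirrors the CoT simulation behind \Cref{prop:arm-limit}. Each child acts in one of three ways:
\begin{itemize}
\item If it reaches a halting state, it emits $\returnblock{b}$ with the accept/reject bit $b$.
\item After $K$ steps, it instead emits $\returnblock{\mathsf{Embed}(c')}$, where $c'$ is the current configuration produced by the canonicalization primitive $\mathsf{Canon}$.
\item The root, on receiving a configuration, emits a fresh call on it. On receiving a bit, it returns that bit.
\end{itemize}
To keep the root bounded, I let the root's prompt be the returned configuration itself. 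Concretely, the root's active context always has the form $x\concat(\text{last returned payload})$, since each call removes the parent's tail $\callblock{\cdot}$ and keeps only $\mathbf{y}'$. The root must have a designated prefix $\mathbf{y}'$ that does not grow. I achieve this by having the root emit the call with $\mathbf{y}'$ equal to the fixed root prompt. Since the parent keeps only $\mathbf{y}'$, the returned payload is appended to a constant-length-plus-$n$ prefix. The previous payload was inside the call block and so was removed.

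\textbf{Resource analysis.} The depth is always at most $2$. Each child context has length $\mathcal{O}(S(n))+K+\mathcal{O}(S(n))=\mathcal{O}(S(n))$, counting the embedding, the steps, and the canonicalized output. The root context has length $n+\mathcal{O}(S(n))$. For tokens, each chunk of $K$ simulated steps costs $\mathcal{O}(K+S(n))=\mathcal{O}(K)$ generated tokens, and there are $\lceil T/K\rceil$ chunks. The total is therefore $\mathcal{O}(T(n)+S(n))$, which is $\mathcal{O}(T(n))$ under the usual assumption $T\ge S$, or after padding. Correctness follows by induction on chunks: each child returns exactly $\mathsf{Embed}$ of the configuration $K$ steps later.

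\textbf{Main obstacle.} The hard step is the Transformer realizability of the canonicalized return $\returnblock{\mathsf{Embed}(c')}$ inside a single rollout. The model must walk the tape interval, retrieving each cell's latest write by \texttt{rightmost\_exact\_match} over positions computed as prefix sums. This step, and the phase detection (counting steps up to $K$), I would import directly from the FASP primitives in Appendix G of \citet{yang2025pencil}, exactly as in the \textsc{Canon} step (g) of \Cref{app:proof-space-efficiency-alt}.

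A secondary issue is the question-preservation variant. There, the parent retains $\mathbf{q}=\mathsf{Embed}(c)$ after every call, so the root would accumulate $\Theta(T/K)$ payloads. For that variant I would note the caveat that the root must instead be given only a constant number of chunks, or that depth must increase by one.
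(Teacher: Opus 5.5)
\textbf{The gap: the root's context grows.} Most of your plan matches the paper's: depth $2$, a child that runs $K=\Theta(S(n))$ update steps from $\mathsf{Embed}(c)$, re-canonicalization, and the $\mathcal{O}(T)$ token count. The step that keeps the root bounded does not work under the model's semantics. With $f=\fcot$ the rollout only appends. So after the child executes $\returnblock{\mathsf{Embed}(c')}$, the root's context is $x\concat\mathsf{Embed}(c')$. If the root then ``emits a fresh call,'' its output is $x\concat\mathsf{Embed}(c')\concat\callblock{\mathsf{Embed}(c')}$, and the call rule parses this with $\mathbf{y}'=x\concat\mathsf{Embed}(c')$.

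The copy that gets removed is the one the root wrote inside the block. The copy appended by the return stays in $\mathbf{y}'$. You cannot ``emit the call with $\mathbf{y}'$ equal to the fixed root prompt'': $\mathbf{y}'$ is by definition everything before the final $\CALL$ of an append-only rollout. The root therefore gains $|\mathsf{Embed}(c')|$ tokens per chunk. After $\lceil T/K\rceil$ chunks it has length up to $\Theta(S(n)\lceil T/K\rceil)=\Theta(T(n))$ for $K=\Theta(S(n))$, which is no better than flat CoT. For general $\mathsf{SPACE}(S(n))$ machines, with $T$ up to $2^{\Theta(S(n))}$, this destroys the $\mathcal{O}(S(n))$ local-space bound. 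This is the same accumulation you flag for the question-preservation variant; with your design it already happens in the basic model.

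\textbf{The missing idea: return an open call prefix.} The child should summarize by executing $\returnblock{\CALL\,\mathsf{Embed}(c')}$; the paper returns $\CALL\,\mathsf{Frame}(z',\epsilon)$. The root's context then becomes $x\concat\CALL\,\mathsf{Embed}(c')$. The root acts as a pure dispatcher and emits only $\UNCALL$. The call rule now strips the whole block $\callblock{\mathsf{Embed}(c')}$, leaving $\mathbf{y}'=x$, and pushes $\mathsf{Embed}(c')$ as the new child. With this tail-call trick the root never exceeds $n+\mathcal{O}(S(n))$ tokens, including the transient rollout, and the depth stays $2$. Your remaining steps then go through as in the paper: correctness by induction on chunks, realizability via the primitives of \citet{yang2025pencil}, and the linear token count.
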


\begin{proof}
Fix any language $L \in \mathsf{SPACE}(\mathcal{O}(S(n)))$ and let $\mathsf{TM} = (\Gamma, b, Q, q_0, \delta, Q_{\mathrm{acc}}, Q_{\mathrm{rej}})$ be a deterministic single-tape Turing machine deciding $L$ using at most $c \cdot S(n)$ tape cells on inputs of length $n$. We construct a constant-size Transformer $f_{\theta(L)}$ such that the recursive model with $f_{\theta(L)}$ simulates $\mathsf{TM}$ with recursion depth $D = 2$ and local space $\mathcal{O}(S(n))$ in a time- and space-efficient manner. If this same machine also halts within $T(n)$ steps, the token-efficiency analysis below gives the strengthened membership in $\RM(\mathcal{O}(S(n)),\mathcal{O}(1),\mathcal{O}(T(n)))$.

\paragraph{Configuration.}
A \emph{configuration} of $\mathsf{TM}$ is a triple $c = (q, \tau, p)$ where:
\begin{itemize}[leftmargin=*, topsep=2pt, itemsep=1pt]
\item $q \in Q$ is the current control state;
\item $\tau: \mathbb{Z} \to \Gamma$ is the \emph{tape contents}, a function mapping each cell index to a symbol, with $\tau(i) = b$ (the blank symbol) for all but finitely many $i$;
\item $p \in \mathbb{Z}$ is the head position.
\end{itemize}
For a tape $\tau$ and position $p$, we write $\tau[p \mapsto w]$ for the tape that agrees with $\tau$ everywhere except at position $p$, where it holds symbol $w$.

\paragraph{Update tokens and the update operator.}
Let $\Sigma_{\mathrm{upd}} := Q \times \Gamma \times \{-1, 0, +1\}$. We interpret a token $x = (q', w, d) \in \Sigma_{\mathrm{upd}}$ as an \emph{update}: ``write $w$ at the current head cell, move by $d$, and set the control state to $q'$''. For a configuration $c = (q, \tau, p)$, define
\begin{equation}
\mathsf{Update}(c, (q', w, d)) := (q', \tau[p \mapsto w], p + d),
\end{equation}
and extend $\mathsf{Update}$ to sequences $x_{1:k} \in \Sigma_{\mathrm{upd}}^*$ by $\mathsf{Update}(c, x_{1:k}) := \mathsf{Update}(\mathsf{Update}(c, x_{1:k-1}), x_k)$. We also extend $\delta$ to configurations by $\delta(q, \tau, p) := \delta(q, \tau(p))$.

\paragraph{Translational equivalence.}
Two configurations $c_1 = (q, \tau_1, p_1)$ and $c_2 = (q, \tau_2, p_2)$ are \emph{translationally equivalent}, written $c_1 \sim c_2$, if there exists $k \in \mathbb{Z}$ such that $\tau_1(i) = \tau_2(i - k)$ for all $i \in \mathbb{Z}$ and $p_1 = p_2 + k$. Intuitively, two configurations are translationally equivalent if they differ only by a shift in absolute tape coordinates, while their control state, tape contents, and the head's relative position within the tape are identical. This relation preserves the next update and halting status: $c_1 \sim c_2 \Rightarrow \delta(c_1) = \delta(c_2)$.

\paragraph{Configuration embedding.}
For a tape $\tau$, define $\ell(\tau) := \min(\{0\} \cup \{i : \tau(i) \neq b\})$ and $r(\tau) := \max(\{0\} \cup \{i : \tau(i) \neq b\})$. The embedding $\mathsf{Embed}: (Q \times \Gamma^{\mathbb{Z}} \times \mathbb{Z}) \to \Sigma_{\mathrm{upd}}^*$ maps a configuration $c = (q, \tau, p)$ to a sequence $(x_1, \ldots, x_m)$ where each $x_i = (q, a_i, d_i)$, with $a_i$ being the tape symbol at position $\ell(\tau) + \sum_{j < i} d_j$ and $d_i \in \{-1, 0, +1\}$ chosen so that the sequence ``walks through'' the non-blank interval $[\ell(\tau), r(\tau)]$ and ends with the head aligned to $p$. Let $c_{\mathsf{blank}} := (q_0, b^{\mathbb{Z}}, 0)$ be the blank configuration. Then $\mathsf{Update}(c_{\mathsf{blank}}, \mathsf{Embed}(c)) \sim c$. (Note: while many token sequences can produce the same configuration, $\mathsf{Embed}$ is a \emph{deterministic} function that outputs a canonical representation; the proof only requires $\mathsf{Update}(c_{\mathsf{blank}}, \mathsf{Embed}(c)) \sim c$.)

Since $\mathsf{TM}$ is space-bounded, $|\mathsf{Embed}(c)| = \mathcal{O}(S(n))$ for all reachable configurations. Let $N := C \cdot S(n)$ for a sufficiently large constant $C$ such that $|\mathsf{Embed}(c)| \le N$ for every reachable configuration.

\paragraph{Depth-1 frame.}
The depth-1 frame is the outermost frame and serves as a ``dispatcher''. Its role is simple: whenever its suffix matches $\langle\texttt{call}\rangle \, w$ for some string $w$, it emits the closing token $\langle/\texttt{call}\rangle$, which triggers a push of a new depth-2 frame with content $w$. This mechanism enables tail-call elimination: when the depth-2 frame returns an open call-prefix, the depth-1 frame completes the call and activates a fresh depth-2 frame.

\paragraph{Depth-2 frame.}
The depth-2 frame is the active simulation frame. It stores
\begin{equation}
\mathsf{Frame}(z, u) := z \concat \langle\texttt{sep}\rangle \concat u,
\end{equation}
where $z \in \Sigma_{\mathrm{upd}}^*$ is the \emph{summarized history} (the embedding of all past computation) and $u \in \Sigma_{\mathrm{upd}}^*$ is the \emph{new trace} (updates generated since the last summarization). The current simulated configuration is recovered by
\begin{equation}
c^* := \mathsf{Conf}(z, u) := \mathsf{Update}(c_{\mathsf{blank}}, z \concat u),
\end{equation}
where the delimiter $\langle\texttt{sep}\rangle$ is ignored by $\mathsf{Update}$.

\paragraph{Next-token policy (simulation vs.\ summarization).}
Given $\mathsf{Frame}(z, u)$, let $c^* = (q^*, \tau^*, p^*) = \mathsf{Conf}(z, u)$. The next-token policy operates as follows:
\begin{enumerate}[label=(\roman*), leftmargin=*, topsep=2pt, itemsep=1pt]
\item \textbf{Halting:} If $q^* \in Q_{\mathrm{acc}}$ (resp.\ $Q_{\mathrm{rej}}$), emit $\returnblock{1}$ (resp.\ $\returnblock{0}$) and halt.
\item \textbf{Simulation mode:} If $q^* \notin Q_{\mathrm{acc}} \cup Q_{\mathrm{rej}}$ and $|u| < 2N$, emit the single update token $\delta(c^*) \in \Sigma_{\mathrm{upd}}$. This appends exactly one TM step to the new trace $u$.
\item \textbf{Summarization mode:} If $|u| = 2N$, compute the summarized state $z' := \mathsf{Embed}(c^*)$ and emit $\returnblock{\CALL\,\mathsf{Frame}(z', \epsilon)}$. Under the recursive-model stack semantics, this returns an open call-prefix to the depth-1 frame, which then emits $\langle/\texttt{call}\rangle$ and activates a new depth-2 frame $\mathsf{Frame}(z', \epsilon)$.
\end{enumerate}

\paragraph{Correctness and resource analysis.}
We now verify that the construction is correct and analyze its resource consumption: local space $\mathcal{O}(S(n))$, recursion depth $2$, and token efficiency $\mathcal{O}(T)$ where $T$ is the number of TM steps.

\paragraph{Correctness.}
We maintain the invariant that at all times the depth-2 frame represents the current TM configuration (up to translation): after $t$ simulated steps since the last summarization, $\mathsf{Conf}(z, u) \sim c_t$, where $c_t$ is the true $\mathsf{TM}$ configuration after $t$ steps. The base case follows from $\mathsf{Update}(c_{\mathsf{blank}}, \mathsf{Embed}(c_0)) \sim c_0$. In simulation mode, emitting $\delta(\mathsf{Conf}(z, u))$ advances the configuration by one $\mathsf{Update}$, matching one TM transition since $\delta$ is invariant under $\sim$. In summarization mode, replacing $(z, u)$ by $(\mathsf{Embed}(\mathsf{Conf}(z, u)), \epsilon)$ preserves the represented configuration since $\mathsf{Update}(c_{\mathsf{blank}}, \mathsf{Embed}(\mathsf{Conf}(z, u))) \sim \mathsf{Conf}(z, u)$. Thus the model returns the correct accept/reject decision.

\paragraph{Local space.}
During simulation, the depth-2 frame has length $|z| + 1 + |u| \le N + 1 + 2N = 3N + 1 = \mathcal{O}(S(n))$. During summarization, the return payload contributes at most $|z'| + \mathcal{O}(1) = \mathcal{O}(S(n))$ additional tokens, so local space remains $\mathcal{O}(S(n))$. Thus the bound applies both to stored frames and to transient rollout outputs. The stack height is always at most $2$.

\paragraph{Token efficiency.}
Each TM step produces exactly one emitted update token in simulation mode. A summarization happens once every $2N$ simulated steps and emits at most $|z'| + \mathcal{O}(1) \le N + \mathcal{O}(1)$ tokens. If $\mathsf{TM}$ halts after $T$ steps, the total number of emitted tokens is
\begin{equation}
T + \mathcal{O}\!\left( \frac{T}{2N} \right) \cdot (N + \mathcal{O}(1)) = \mathcal{O}(T),
\end{equation}
which is linear in $T$.

\paragraph{Transformer construction.}
It remains to verify that the next-token policy is implementable by a fixed constant-depth, constant-size Transformer with $\mathcal{O}(\log n)$ precision. Both $\delta(\cdot)$ and $\mathsf{Embed}(\cdot)$ reduce to the following primitive operations:
\begin{enumerate}[label=(\alph*), leftmargin=*, topsep=2pt, itemsep=1pt]
\item Parsing the summarized history $z$ and new trace $u$ (fixed-format tokenized strings);
\item Computing the head position as a prefix sum of moves in $z \concat u$ (arithmetic on $\mathcal{O}(\log S(n))$-bit integers);
\item Retrieving the current tape symbol via a ``rightmost match'' query: for a given head position, find the most recent update token in $z \concat u$ that wrote to that cell;
\item A finite lookup of $\delta$ (hard-coded into parameters).
\end{enumerate}
All primitive operations (a)--(d) above are already established in Appendix G of \citet{yang2025pencil}; our construction differs only in the choice of special tokens and parsing format. We refer readers to that paper for the detailed Transformer implementation.

If $L \in \mathsf{TM}(S(n),T(n))$, choose the witnessing Turing machine that simultaneously uses $\mathcal{O}(S(n))$ space and $\mathcal{O}(T(n))$ time. The construction above then gives recursion depth $2$, local space $\mathcal{O}(S(n))$, and $\mathcal{O}(T(n))$ generated tokens, so $L \in \RM(\mathcal{O}(S(n)),\mathcal{O}(1),\mathcal{O}(T(n)))$. Dropping the time bound yields, for arbitrary $L \in \mathsf{SPACE}(\mathcal{O}(S(n)))$, the inclusion $L \in \RM(\mathcal{O}(S(n)),2)$.
\end{proof}

\section{Preliminaries for Section~\ref{sec:mas}}
\label{app:proofs-mas}

\subsection{Strings}
Fix a finite token alphabet $\Sigma$. We write $\Sigma^*$ for the set of all finite token strings and $|x|$ for the length of $x \in \Sigma^*$. For a length bound $L$, define $\Sigma_{\le L} := \{z \in \Sigma^* : |z| \le L\}$. Note that $|\Sigma_{\le L}| \le \sum_{i=0}^{L} |\Sigma|^i = 2^{O(L)}$.

\subsection{Polynomial-Time Generators}
A generator $f$ is \emph{polynomial-time} if there exists a deterministic Turing machine that computes $f(x)$ from $x$ in time $\mathrm{poly}(|x| + |f(x)|)$. A generator family $\mathcal{F} = (f_1, \ldots, f_k)$ is polynomial-time if each $f_\ell$ is polynomial-time.

\subsection{Oracle Turing Machine}
\label{app:oracle-tm}

A \emph{deterministic oracle Turing machine} (OTM) is a deterministic multi-tape Turing machine equipped with, for each oracle name $o$ in a finite index set $\mathcal{N}$, an \emph{oracle query tape} and an \emph{oracle answer tape}. Each oracle is a total function $\mathcal{O}_o: \Sigma^* \to \Sigma^*$.

\paragraph{Query/Answer Mechanism.} When the machine enters a distinguished \emph{query state} $q_{\mathrm{ask},o}$, the string currently written on the oracle-$o$ query tape (from cell $0$ to the first blank) is taken as the query $u$. In one transition, the oracle answer tape is overwritten with $\mathcal{O}_o(u)$ (starting at cell $0$), and the machine enters a distinguished \emph{return state} $q_{\mathrm{ret},o}$.

\paragraph{Resource Measures.} Time counts ordinary TM transitions (including transitions into and out of query/return states). Space counts the number of distinct tape cells visited on \emph{work tapes} (excluding the read-only input tape and oracle tapes).

\paragraph{Relativized Complexity Classes.} For a fixed oracle family $\Omega = (\mathcal{O}_1, \ldots, \mathcal{O}_k)$:
\begin{align}
\mathsf{DTIME}^{\Omega}(f(n)) &= \{L : \exists\, \text{OTM with access to } \Omega \text{ deciding } L \text{ in } O(f(n)) \text{ time}\}, \\
\mathsf{DSPACE}^{\Omega}(f(n)) &= \{L : \exists\, \text{OTM with access to } \Omega \text{ deciding } L \text{ in } O(f(n)) \text{ work space}\}.
\end{align}

\paragraph{Recursive-Call Variant.}
In the main text, a scaffold is formalized as an OTM-style procedure whose query names are
\[
\mathsf{GEN}_1,\ldots,\mathsf{GEN}_k
\qquad\text{and}\qquad
\mathsf{SELF}_1,\ldots,\mathsf{SELF}_m .
\]
The names $\mathsf{GEN}_\ell$ are interpreted by the given generator functions $f_\ell$. The names $\mathsf{SELF}_j$ are recursive-call interfaces; their meanings are not fixed in advance, but are solved by the least-fixpoint construction in \Cref{app:fixpoint}.

\paragraph{Alternating Turing Machines.}
An \emph{alternating Turing machine} (ATM) extends a nondeterministic TM by labeling each state as either \emph{existential} ($\exists$) or \emph{universal} ($\forall$). At an $\exists$-state, the machine accepts if \emph{some} successor configuration accepts; at a $\forall$-state, it accepts if \emph{all} successor configurations accept. We write $\mathsf{ASPACE}(S(n))$ for the class of languages decidable by an ATM using space $O(S(n))$.

\paragraph{Space-Constructibility.}
A function $S: \mathbb{N} \to \mathbb{N}$ is \emph{space-constructible} if there exists a TM that, on input $1^n$, computes $S(n)$ in binary using $O(S(n))$ space. Common functions like $n$, $n^2$, $2^n$ are space-constructible.

\subsection{Least-Fixpoint Semantics for Recursive Agentic Systems}
\label{app:fixpoint}

We now give the formal construction of the semantics for a recursive agentic system $(\mathcal{S}, \mathcal{F})$ where $\mathcal{S} = (S_1, \ldots, S_m)$ are scaffolds and $\mathcal{F} = (f_1, \ldots, f_k)$ are generators. Each scaffold $S_i$ may issue $\mathsf{GEN}_\ell$ queries for $\ell \in \{1, \ldots, k\}$ and $\mathsf{SELF}_j$ queries for $j \in \{1, \ldots, m\}$. The former are interpreted by the known functions $f_\ell$; the latter are interpreted by the unknown partial functions being defined.

\paragraph{Partial Functions and Order.}
Let $\Sigma^*_\bot = \Sigma^* \cup \{\bot\}$, where $\bot$ denotes ``undefined.'' This is a meta-level marker, not a string returned to or observed by a scaffold. Let $\mathcal{P} = \{F: \Sigma^* \to \Sigma^*_\bot\}$ be the set of partial functions. We order $\mathcal{P}$ by \emph{extension}: $F \sqsubseteq G$ iff for all $x \in \Sigma^*$, either $F(x) = \bot$ or $F(x) = G(x)$. The pair $(\mathcal{P}, \sqsubseteq)$ forms a complete partial order with least element $\bot_\mathcal{P}$ (the everywhere-undefined function). For tuples, define $\mathcal{P}^{(m)} = (\mathcal{P})^m$ with componentwise order; the least element is $\bot^{(m)} = (\bot_\mathcal{P}, \ldots, \bot_\mathcal{P})$.

\paragraph{One-Step Operator.}
Define $\boldsymbol{\Phi}_{\mathcal{S},\mathcal{F}}: \mathcal{P}^{(m)} \to \mathcal{P}^{(m)}$ as follows. Given $\mathbf{F} = (F_1, \ldots, F_m) \in \mathcal{P}^{(m)}$, the $i$-th component $\boldsymbol{\Phi}_{\mathcal{S},\mathcal{F}}(\mathbf{F})_i$ is defined by simulating $S_i$ on input $x \in \Sigma^*$. A query $\mathsf{GEN}_\ell(u)$ is answered by the fixed generator value $f_\ell(u)$. A query $\mathsf{SELF}_j(u)$ is answered by the current approximation $F_j(u)$ if this value is defined. If the simulation does not halt, or if it queries some $F_j(u)=\bot$, the operator value is $\bot$; in the latter case the scaffold is not given $\bot$ as an oracle answer. If the simulation halts with output $y$ without making an undefined recursive query, then $\boldsymbol{\Phi}_{\mathcal{S},\mathcal{F}}(\mathbf{F})_i(x)=y$.

\paragraph{$\omega$-Continuity and Existence.}
The operator $\boldsymbol{\Phi}_{\mathcal{S},\mathcal{F}}$ is $\omega$-continuous (Scott-continuous) on the pointed CPO $(\mathcal{P}^{(m)}, \sqsubseteq)$: each scaffold execution, if it terminates, makes only finitely many recursion queries, hence depends only on a finite stage of any increasing chain of approximants. By Kleene's fixed-point theorem, the least fixpoint $\mathbf{F}^* = \mathrm{lfp}(\boldsymbol{\Phi}_{\mathcal{S},\mathcal{F}}) = \bigsqcup_{n < \omega} \boldsymbol{\Phi}_{\mathcal{S},\mathcal{F}}^n(\bot^{(m)}) \in \mathcal{P}^{(m)}$ exists.

\paragraph{Semantics.}
The semantics of the system $(\mathcal{S}, \mathcal{F})$ is the least fixpoint $\mathbf{F}^* = (F_1^*, \ldots, F_m^*)$. We identify the induced functions in the main text with these components, i.e., $\phi_i^{\mathcal{S},\mathcal{F}} := F_i^*$. Thus $\phi_i^{\mathcal{S},\mathcal{F}}(x)$ is the output of running scaffold $S_i$ on input $x$, where $\mathsf{GEN}_\ell$ is interpreted by $f_\ell$ and all $\mathsf{SELF}_j$ calls are resolved by the least fixpoint. If the generators in $\mathcal{F}$ are computable, then the induced functions $\phi_i^{\mathcal{S},\mathcal{F}}$ are ordinary partial computable functions; for arbitrary generators, they are partial computable relative to the generator oracles.

\subsection{Chandra--Kozen--Stockmeyer Characterization}
\label{app:cks}

\begin{lemma}[Alternating-space characterization of exponential time]
\label{lem:cks-prelim}
For any space-constructible $S(n) \ge n$, $\mathsf{ASPACE}(O(S(n))) = \mathsf{TIME}(2^{O(S(n))})$.
\end{lemma}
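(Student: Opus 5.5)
The statement is the classical alternation theorem, so the plan is to prove the two inclusions separately along the lines of Chandra--Kozen--Stockmeyer. Throughout, a configuration of a machine using $O(S(n))$ space is written as a string of length $O(S(n))$: state, head position, tape contents. Such a machine therefore has at most $N := 2^{O(S(n))}$ configurations.

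\emph{Inclusion $\mathsf{ASPACE}(O(S(n)))\subseteq\mathsf{TIME}(2^{O(S(n))})$.} Fix an ATM $A$ with space $O(S(n))$. Using space-constructibility, the deterministic simulator first computes $S(n)$. It then enumerates all $N$ configurations together with the successor relation, which costs $\mathrm{poly}(N)=2^{O(S(n))}$ time. Next it labels configurations by backward induction, a least-fixpoint computation. Halting accepting configurations get label $1$ and rejecting ones get label $0$. An existential configuration gets $1$ once some successor has $1$, and gets $0$ once all successors have $0$; universal configurations are handled dually.

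Each round either labels a new configuration or stops, so there are at most $N$ rounds of $\mathrm{poly}(N)$ work each. Because $A$ is a decider, its reachable configuration graph is acyclic, so every reachable configuration receives a label. The label of the start configuration equals $\mathsf{Win}(c_{\mathrm{start}})$, by induction on the height in this DAG. The total time is $2^{O(S(n))}$.

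\emph{Inclusion $\mathsf{TIME}(2^{O(S(n))})\subseteq\mathsf{ASPACE}(O(S(n)))$.} This is the harder direction and the main obstacle. Fix a single-tape deterministic TM $M$ running in time $T=2^{cS(n)}$. Pad it so that on halting it moves to cell $0$ and loops in an accept or reject state; then the question is whether $\tau_T(0)$, including the state marker, is accepting.

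Use the tableau view: cell $(t,i)$ holds a symbol of $\Gamma\cup(Q\times\Gamma)$, and its content is a fixed local function of cells $(t-1,i-1),(t-1,i),(t-1,i+1)$. The ATM keeps only a pair $(t,i)$ in binary, which is $O(S(n))$ bits, plus a claimed symbol $\sigma$. To verify "cell $(t,i)=\sigma$", it existentially guesses the three parent symbols consistent with $\sigma$ under the local rule. It then universally picks one of the three parents and recursively verifies that one. At $t=0$ it checks directly against the input, which is read-only.

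The correctness argument has to handle the fact that the guesses of different branches are made independently. The point is that the true tableau is unique. So by induction on $t$, the claim "$(t,i)=\sigma$" has a winning strategy if and only if it is true. If it is true, guessing the true parents wins. If it is false, every consistent guess contains some false parent claim, and the universal branch that picks that parent loses by the inductive hypothesis.

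Each branch has length $O(T)$ and uses $O(S(n))$ space, so the ATM is a decider. This gives $L\in\mathsf{ASPACE}(O(S(n)))$.
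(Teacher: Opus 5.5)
Your proof is correct, and it is more complete than the paper's. The paper does not actually prove the lemma. It cites Chandra--Kozen--Stockmeyer and justifies only $\mathsf{ASPACE}(O(S(n)))\subseteq\mathsf{TIME}(2^{O(S(n))})$, in one line: there are at most $2^{O(S)}$ configurations, and dynamic programming over the game tree evaluates them. Your first half is that same argument made precise. The backward-induction labeling is the least-fixpoint computation, and acyclicity of the reachable configuration graph, which follows from the decider assumption, guarantees that $c_{\mathrm{start}}$ is labelled $\mathsf{Win}(c_{\mathrm{start}})$.

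The converse inclusion is left entirely to the citation in the paper. You supply it with the standard tableau-verification ATM, and your key correctness step is right. Because $M$ is deterministic, the local rule is a function. So if all three guessed parents were true, the child would be forced to its true value. Hence every consistent guess for a false claim contains a false parent, and the universal player can select it.

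This matters because the paper's ATM-based alternative proof of Theorem~\ref{thm:space-efficiency} relies on exactly this direction: it starts from a $\mathsf{TIME}(2^{O(S(n))})$ language and assumes an $O(S(n))$-space ATM for it. The paper's one-line sketch does not cover that direction, and your argument fills it in.

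Two small details are worth making explicit. First, space-constructibility is also needed in the second direction, to write $T=2^{cS(n)}$ in binary at the root; a guessed time bound could otherwise use unbounded space on some branch. Second, the paper's ATM is single-tape, so the input is not on a separate read-only tape. This is harmless because $S(n)\ge n$ and the simulator never overwrites input cells. The binary counters for $(t,i)$ also fit in $O(S(n))$ extra cells, since $|i|\le T$ along any branch starting from $(T,0)$.
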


\begin{proof}
This is a classical result. The key insight is that an alternating TM using space $S$ has at most $2^{O(S)}$ configurations, and a deterministic simulation can explore the entire game tree in time $2^{O(S)}$ via dynamic programming.
\end{proof}

\section{Proof of Theorem~\ref{thm:upper-workflow-space-depth}}
\label{app:proof-upper-workflow-space-depth}

\begin{proof}[Proof of  \Cref{thm:upper-workflow-space-depth}]
Fix $n$, an index $r\in\{1,\ldots,m\}$, and input $x\in\Sigma^n$, and write $L := L(n)$ and $D := \Sigma_{\le L}$.
Let $\boldsymbol{\Phi}_{\mathcal{S},\mathcal{F}}$ be the one-step operator from Appendix~\ref{app:fixpoint}.
Under $L$-boundedness (Definition~\ref{def:bounded-space}) of the evaluation of $\phi_r^{\mathcal{S},\mathcal{F}}(x)$, every generator argument/return and every recursion argument/return that appears during evaluation lies in $D$.
This $L$-boundedness assumption counts query and answer strings held by a scaffold, whereas the standard relativized space measure for oracle Turing machines in \Cref{app:oracle-tm} counts only work tapes.

\paragraph{Time upper bound (oracle form).}
Let $\boldsymbol{\Phi}^{(\le L)}_{\mathcal{S},\mathcal{F}}$ be the same one-step operator as $\boldsymbol{\Phi}_{\mathcal{S},\mathcal{F}}$, except that each scaffold simulation is run with an explicit $L$-space cutoff (counting work plus oracle tapes): if the simulation exceeds $L$ total tape cells, the corresponding approximant value is declared undefined, i.e., set to $\bot$.
Since the recursive call tree of the evaluation of $\phi_r^{\mathcal{S},\mathcal{F}}(x)$ is $L$-bounded, this cutoff never triggers on any scaffold invocation that influences $\phi_r^{\mathcal{S},\mathcal{F}}(x)$, so $\phi_r^{\mathcal{S},\mathcal{F}}(x)$ equals the stabilized $(r,x)$ entry of the least fixpoint of $\boldsymbol{\Phi}^{(\le L)}_{\mathcal{S},\mathcal{F}}$.

We compute this least fixpoint by performing Kleene iteration on the finite restriction to $D$.
Define a table-valued sequence $(\boldsymbol{\phi}_t)_{t\ge 0}$ where each $\boldsymbol{\phi}_t$ is a tuple of partial maps $D\to D\cup\{\bot\}$ (one component per scaffold), with $\boldsymbol{\phi}_0=\bot^{(m)}$ and $\boldsymbol{\phi}_{t+1}=\boldsymbol{\Phi}^{(\le L)}_{\mathcal{S},\mathcal{F}}(\boldsymbol{\phi}_t)$ restricted to inputs in $D$.
Because the restriction domain is finite and the order is by extension, each table entry can change at most once (from $\bot$ to a defined value), so the sequence stabilizes after at most $m\cdot |D|$ iterations.

To update one table entry, we simulate one one-step scaffold run on an input $q\in D$, answering generator/tool queries by oracle access to $\mathcal{F}$ and answering recursion queries $\mathsf{SELF}_j(u)$ by table lookup of $\boldsymbol{\phi}_t$.
By construction of $\boldsymbol{\Phi}^{(\le L)}_{\mathcal{S},\mathcal{F}}$, each such update halts within $\exp(O(L))$ transitions and costs $\exp(O(L))$ time.
There are $m\cdot|D|=2^{O(L)}$ entries and at most $m\cdot|D|=2^{O(L)}$ iterations, so the total oracle-machine running time is $2^{O(L)}$.
The stabilized table entry corresponding to $(r,x)$ equals $\phi_r^{\mathcal{S},\mathcal{F}}(x)$, proving $\mathsf{DTIME}^{\mathcal{F}}(2^{O(L(n))})$.

\paragraph{Eliminating the oracle.}
Now assume additionally that each generator/tool in $\mathcal{F}$ is computable by a deterministic (non-oracle) TM in time $2^{O(L(n))}$ and work space $O(L(n))$ on all queries of length at most $L(n)$.
We simulate the above oracle TM by a plain TM, replacing each oracle query by running the corresponding oracle-computing TM on the query string and writing its output back before resuming the simulation.
Since the oracle TM runs for at most $2^{O(L(n))}$ steps, it makes at most $2^{O(L(n))}$ oracle queries.
Thus the total time is $2^{O(L(n))}\cdot 2^{O(L(n))}=2^{O(L(n))}$, proving $\mathsf{TIME}(2^{O(L(n))})$.
\end{proof}

\section{Proof of Theorem~\ref{thm:upper-workflow-space}}
\label{app:proof-upper-workflow-space}

\begin{proof}[Proof of \Cref{thm:upper-workflow-space}]
Fix an index $r\in\{1,\ldots,m\}$.
Assume the recursion stack depth is $D(n)=O(1)$ throughout evaluation of $\phi_r^{\mathcal{S},\mathcal{F}}(x)$ on every length-$n$ input $x$.
We decide the language by directly simulating the recursive evaluation in a depth-first manner on an oracle TM with access to $\mathcal{F}$.
The simulator maintains the full local configuration of the currently active scaffold simulation (including its work tapes and the bounded oracle query/answer content), and pushes/pops such configurations on a stack when encountering recursive scaffold calls and returns.
By $L(n)$-boundedness, each call frame requires $O(L(n))$ space to store, and by assumption there are $O(1)$ frames simultaneously.
Thus the simulation uses $O(L(n))$ work space, proving membership in $\mathsf{DSPACE}^{\mathcal{F}}(O(L(n)))$.

\paragraph{Eliminating the oracle.}
Now assume additionally that each generator/tool in $\mathcal{F}$ is computable by a deterministic (non-oracle) TM in time $2^{O(L(n))}$ and work space $O(L(n))$ on all queries of length at most $L(n)$.
We simulate the above oracle TM by a plain TM, replacing each oracle query by running the corresponding oracle-computing TM on the query string and then resuming the simulation.
Reusing $O(L(n))$ work space for each oracle computation, the overall work space remains $O(L(n))$, proving $\mathsf{DSPACE}(O(L(n)))$.
\end{proof}

\end{document}